\documentclass[10pt]{article}

\usepackage[margin=1in]{geometry}

\usepackage[utf8]{inputenc}
\usepackage[T1]{fontenc}
\usepackage{hyperref}
\usepackage{url}
\usepackage[backend=biber, style=numeric, sorting=none, url=true, doi=true, eprint=true]{biblatex}
\usepackage{booktabs}
\usepackage{amsfonts}
\usepackage{nicefrac}
\usepackage{microtype}
\usepackage[dvipsnames]{xcolor}
\usepackage{enumitem}
\usepackage{subcaption}

\usepackage{graphicx}
\usepackage{tikz}
\usetikzlibrary{shapes.geometric}
\usepackage{mathtools}
\usepackage{amsthm}
\usepackage{thmtools,thm-restate}
\usepackage{aligned-overset}
\usepackage{enumitem}
\usepackage{caption}
\usepackage{wrapfig}
\usepackage{bm}
\usepackage{algorithm}
\usepackage{algpseudocode}
\usepackage{listings}

\definecolor{codegreen}{rgb}{0,0.6,0}
\definecolor{codegray}{rgb}{0.5,0.5,0.5}
\definecolor{codepurple}{rgb}{0.58,0,0.82}
\definecolor{backcolour}{rgb}{0.95,0.95,0.92}

\lstdefinestyle{mystyle}{
    backgroundcolor=\color{backcolour},   
    commentstyle=\color{codegreen},
    keywordstyle=\color{magenta},
    numberstyle=\tiny\color{codegray},
    stringstyle=\color{codepurple},
    basicstyle=\ttfamily\scriptsize,
    breakatwhitespace=false,         
    breaklines=true,                 
    captionpos=b,                    
    keepspaces=true,                 
    numbers=left,
    numbersep=3pt,                  
    showspaces=false,                
    showstringspaces=false,
    showtabs=false,                  
    tabsize=2,
    aboveskip=3pt,
    belowskip=3pt
}

\newcommand{\R}{\mathbb{R}}
\newcommand{\EE}{\mathbb{E}}
\newcommand{\F}{\mathcal{F}}

\newcommand{\norm}[1]{\left\lVert #1\right\rVert}
\newcommand{\op}{\mathrm{op}}
\newcommand{\polar}{\mathrm{polar}}

\newcommand{\tr}{\mathrm{tr}}

\usepackage{amsmath, amssymb, amsfonts, amsthm, mathtools}

\usepackage{bbm}

\usepackage{bm}

\usepackage{nicefrac}

\usepackage{hyperref}
\hypersetup{
    colorlinks=true,
    linkcolor=Blue,
    citecolor=teal,
    urlcolor=Blue
}
\usepackage[nameinlink, capitalize, noabbrev]{cleveref}

\newtheorem{theorem}{Theorem}[section]
\newtheorem{lemma}{Lemma}[section]
\newtheorem{corollary}{Corollary}[section]

\newtheorem{assumption}{Assumption}[section]

\newtheorem{remark}{Remark}
\newtheorem{hypothesis}{Hypothesis}[section]

\renewcommand{\leq}{\leqslant}
\renewcommand{\geq}{\geqslant}
\renewcommand{\le}{\leqslant}
\renewcommand{\ge}{\geqslant}

\title{Anytime Training with Schedule-Free Spectral Optimization}

\author{
  Anuj Apte\thanks{\texttt{anuj.apte@jpmchase.com}} \quad
  Pranav Deshpande \quad 
  Niraj Kumar \\
  Shouvanik Chakrabarti \quad
  Junhyung Lyle Kim\thanks{\texttt{lyle.kim@jpmchase.com}} \\[0.5em]
  \normalsize Global Technology Applied Research, JPMorganChase\\
  \normalsize New York, NY 10001, USA
}

\date{May 2026}

\begin{document}
\maketitle

\begin{abstract}
Standard neural network training relies on learning-rate schedules tied to a fixed horizon, leading to strong path dependence and costly re-tuning as data availability changes. Schedule-Free (SF) methods address this by removing explicit schedules, yet SF-AdamW, the current state-of-the-art anytime optimizer, consistently underperforms well-tuned AdamW baselines. We propose SF-NorMuon, a schedule-free spectral optimizer that closes this gap: with a single hyperparameter configuration, SF-NorMuon matches or exceeds tuned AdamW on 125M and 772M parameter language models across $1$--$8\times$ Chinchilla horizons.
On the theoretical side, we prove a stationarity guarantee for schedule-free spectral dynamics and identify weight decay at the fast iterate as essential for long-horizon stability. SF-NorMuon enables practitioners to obtain high-quality checkpoints at any point during training without committing to a horizon in advance.
By closing the performance gap with tuned baselines, SF-NorMuon makes horizon-free optimization more practical, taking a step towards truly open-ended, continual learning.
\end{abstract}

\section{Introduction} 

In the standard machine learning paradigm, after curating a dataset, a model is trained and then evaluated on a held-out, previously unseen portion of the data \cite{Hastie2009,Goodfellow-et-al-2016,Bishop2024}. However, models are increasingly being deployed in scenarios where large amounts of data comparable to or even exceeding the training data are generated during the deployment period. Examples of this include large language models (LLMs) \cite{brown2020language, touvron2023, Wen2023, Gao2024} interacting with billions of users, and Vision-Language-Action (VLA) models controlling robots in industries \cite{Brohan2023, King2024, Finn2024}. As a result, judiciously leveraging the a priori unknown amount of data available at deployment time is crucial for improving model capabilities through continual learning \cite{Parisi2019,wang2024continuallearning}. However, beyond addressing catastrophic forgetting \cite{Kirkpatrick2017, Delange2021}, there is also an optimization obstacle to overcome \cite{degris2024step, meterez2026anytime, defazio2024road, kasimbeg2025far}.

Most existing training methods are not designed to be anytime, as they depend on learning rate schedules tied to a fixed training horizon and require extensive hyperparameter tuning under a predetermined compute budget \cite{loshchilov2017cosine, hoffmann2022, hu2024WSD}. For example, the standard cosine decay schedule has a strong path dependence. Consider training an LLaMA-2-style transformer with cosine decay as shown in \cref{fig:validation-loss}. The black dashed lines highlight the performance discrepancy: at the same token count, a run tuned for a shorter horizon significantly outperforms one tuned for a longer horizon, despite seeing identical data (e.g.: 2$\times$ vs 4$\times$ Chinchilla at 31B tokens for 772M model). This occurs because cosine decay ties the learning rate to the training horizon. At any intermediate point, runs with longer horizons have barely begun decaying, while shorter-horizon runs have already annealed to near zero, resulting in the big discrepancy in performances. Ideally, we want the model to learn as much as possible from a given set of training data regardless of how much data awaits in the future. 

\begin{figure}[htb]
    \centering
    \includegraphics[width=0.82\linewidth]
    {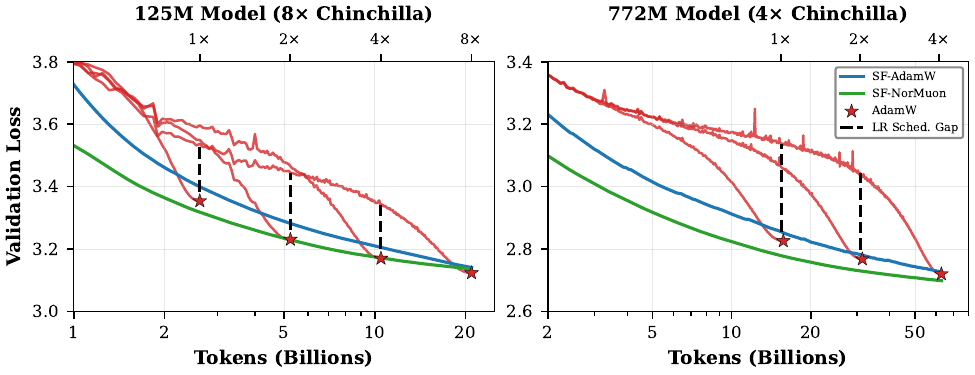}
    \caption{
    \textbf{Comparison of \textcolor{Green}{SF-NorMuon} (\textbf{this work}), \textcolor{RoyalBlue}{SF-AdamW}, and \textcolor{Red}{\textit{tuned} AdamW} baselines for LLaMA-2-style transformers trained on FineWeb-100B.} \textit{Left:} 125M model with AdamW learning rate tuned per horizon with cosine schedule. \textit{Right:} 772M model with a single optimized AdamW configuration across horizons. 
    Dashed lines highlight that learning rate schedule for a long horizon is sub-optimal for a smaller token budget. 
    The grid of hyperparameters are summarized in \cref{tab:hyperparam-space}, and validation losses for different learning rates are illustrated in \cref{fig:lr-sweep-adamw}.
    \textcolor{Green}{SF-NorMuon} consistently out performs \textcolor{RoyalBlue}{SF-AdamW}, and matches  \textcolor{Red}{\textit{tuned} AdamW} baselines.
    }
    \label{fig:validation-loss}
\end{figure}

To that end, \textit{Schedule-Free} (SF) methods \cite{defazio2024road} were recently proposed, removing the need for an explicit learning-rate schedule. 
The SF framework maintains three iterate sequences.
In its most basic form, SF-SGD iterates as follows:
\begin{align}
    y_t &= \beta x_t + (1-\beta) z_t, \label{eq:SF-y} \\
    z_{t+1} &= z_t - \eta \nabla \mathcal{L}(y_t;\xi_t),  \label{eq:SF-z} \\
    x_{t+1} &= (1-c_{t+1}) x_t + c_{t+1} z_{t+1}, \label{eq:SF-x}
\end{align}
where $z_1 = x_1$ and $c_{t+1} = 1/(t+1)$. The $y$ sequence is where the gradient is evaluated at interpolated points, in spirit of Nesterov's accelerated method \cite{nesterov1983method, lan2012optimal}, the $z$ sequence is the \textit{fast iterate} taking the (stochastic) gradient direction with a constant step size $\eta$ throughout the training, and the $x$ sequence is the \textit{evaluation}
sequence where validation loss is computed.
Rewriting the update, we obtain 
\begin{equation}\label{eq:lr_effective_xz}
    x_{t+1} = x_{t} - \eta c_{t+1} \nabla \mathcal{L}(y_t;\xi_t) + c_{t+1} (z_{t+1} - x_{t}).
\end{equation}
Thus, even though $z$ moves at a constant rate, the effective learning rate for $x$ is $\eta_{\textrm{eff}}=\eta/(t+1)$, which decays over the duration of the training. Note that since $x$ is a linear combination of $y$ and $x$ it can be computed on the fly, and so it does not need additional memory overhead. 
An alternative but closely related approach for horizon-free training is simply training with a constant learning rate, and returning an some moving average\footnote{For instance, with $\beta=0$ in \eqref{eq:SF-y}, SF-SGD recovers constant SGD with uniform averaging.} of the weights \cite{hagele2024scalinglawscomputeoptimaltraining, li2025modelmergingpretraininglarge, meterez2026anytime, izmailov2018averaging}. 
Notably, SF-AdamW, the practical variant of SF-SGD, won the first prize in the self-tuning track of the \texttt{AlgoPerf} challenge \cite{Kasimbeg2025AlgoPerfResults}, demonstrating a new level of effectiveness for anytime training. For completeness, \cref{app:sf-review} reviews SF-SGD and SF-AdamW, and summarizes the convergence guarantee for SF-SGD.

Despite this success, as observed in \cite{hagele2024scaling, song2025through, semenov2026benchmarking},
SF-AdamW is unable to match the performance of well-tuned AdamW \cite{kingma2015adam,loshchilov2018decoupled}, the primary workhorse for optimizing neural networks. 
This is also confirmed in \cref{fig:validation-loss}, where SF-AdamW (blue) consistently lies above the well-tuned AdamW baselines (red stars, tuning grid in \cref{tab:hyperparam-space}) across all Chinchilla horizons.
Since the schedule-free averaging mechanism in \eqref{eq:SF-y}--\eqref{eq:SF-x} is agnostic to the base update rule applied at \eqref{eq:SF-z}, a  natural question is whether a different choice of base optimizer can close this gap.

In modern neural networks, the vast majority of learnable parameters reside in weight matrices that act linearly on incoming activations. However, AdamW treats weight matrices as flat vectors, discarding how they act on activations. 
The spectral norm provides a natural measure of the effect a weight matrix update has on the output activation, motivating recent interest in spectral optimization methods such as 
Muon \cite{jordan2024muon, jordan2024muon_repo} and its variants \cite{si2025adamuonadaptivemuonoptimizer, li2025normuonmakingmuonefficient, ahn2025dion,ahn2025dion2simplemethodshrink,an2025asgoadaptivestructuredgradient,ren2026rethinkinglanguagemodelscaling,amsel2026polarexpressoptimalmatrix,muon2boostingmuonadaptive,khaled2025muonbpfastermuonblockperiodic,kravatskiy2025kyfannormsbeyond}. 
Muon performs steepest descent under the spectral norm via the polar decomposition of the gradient computed by Newton-Schulz iterations, and has been successfully scaled to trillions of tokens \cite{liu2025muonscalablellmtraining, kimiteam2026kimik25visualagentic, glm5team2026glm5vibecodingagentic, singh2026arceetrinitylargetechnical}. Neural network loss surfaces are characterized by many nearly-flat directions corresponding to small Hessian eigenvalues \cite{Sagun2017, ghorbani19b}, and thus taking uniformly sized steps in all spectral directions via the polar decomposition enables Muon to extract more information from the training data per step \cite{davis2026spectralgradientupdateshelp}. This motivates the question:

\begin{center}
\textit{Does optimization under the spectral norm improve upon existing schedule-free methods, \\and can it close the gap with well-tuned, horizon-dependent AdamW baselines?}    
\end{center}

Apart from the choice of base geometry, schedule-free methods exhibit a stability challenge that becomes critical for long horizon training. 
As shown in \cref{fig:weight-decay-temp}, the validation loss increases for long runs in absence of weight decay. This is in sharp contrast to the convex optimization theory underlying schedule-free methods, where convergence holds without regularization \cite{defazio2024road, schaipp2025surprising}. Moreover, weight decay at $y$ as originally proposed in \cite{defazio2024road}, is inadequate for the spectral case, where the instability is significantly amplified (right panel of \cref{fig:weight-decay-temp}). Therefore, in schedule-free methods weight decay plays a qualitatively different role: it is not merely a regularizer that improves generalization \cite{Krogh1991, dangelo2024, qiu2026}, but is \emph{necessary} for long-horizon stability.

Combining these two insights, spectral geometry to speed up training and weight decay at the fast iterate to ensure stability, our contributions are as follows:
\begin{itemize}[leftmargin=*, align=left]
\item We propose SF-SpectralSGD (with momentum), a schedule-free spectral method for matrix optimization, and prove a stationarity guarantee matching the $\widetilde{O}(T^{-1/4})$ rate achieved by recent work on convergence of Muon \cite{chang2026convergencemuon, shen2026convergenceanalysismuon, sato2025convergenceboundcriticalbatch, li2025noteconvergencemuon} (\cref{thm:sf_spectral_descent} and \cref{cor:main_sf_spectral_descent}).

\item For practical deep learning, we incorporate the per-neuron normalization of NorMuon \cite{li2025normuonmakingmuonefficient, si2025adamuonadaptivemuonoptimizer} and obtain SF-NorMuon, as summarized in \cref{algo:sf-normuon}. SF-NorMuon substantially improves over the state-of-the-art anytime optimizer SF-AdamW, and matches tuned AdamW across training horizons (c.f., \cref{fig:validation-loss}, \cref{tab:main-results}). In particular, including explicit momentum buffer and per-neuron normalization is crucial in achieving good practical performance (c.f., \cref{fig:ablation}).  
\item We identify the key role of weight decay in schedule-free optimization. Unlike in standard optimizers, the fast iterate $z_t$ in \eqref{eq:SF-z} continues to move at constant learning rate throughout training, so weight decay is a necessity for long-horizon stability. For spectral optimization, the aggressive polar updates amplify the instability (\cref{fig:weight-decay-temp}), but weight decay directly applied to $z_t$ leads to stable training. We prove boundedness of all iterates (\cref{lem:Z-bounded}), and derive a closed-form steady-state characterization (\cref{lem:Z-steady-state}) that closely matches empirical training dynamics (\cref{fig:wd-rms}).

\end{itemize}

\section{Schedule-Free Spectral Optimization}\label{sec:sf-spec}

\begin{figure}
    \centering
    \includegraphics[width=0.82\linewidth]{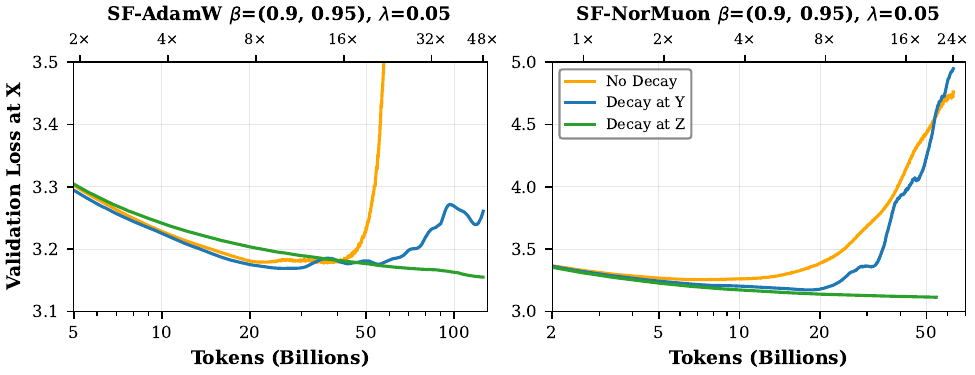}
    \caption{\textbf{Weight decay strategies for schedule-free optimizers.} 
    \textit{Left:} SF-AdamW with no decay (orange) diverges; decay at $Y$ (blue) exhibits the best performance up to $\sim$30B tokens (c.f., \cref{tab:main-results}), but eventually diverges.  
    Decay at $Z$ (green) yield stable training, but is suboptimal in the early phase.
    \textit{Right:} SF-NorMuon with no decay (orange) diverges faster; decay at $Y$ (blue) is also unstable. Decay at $Z$ (green) maintains stable and performs the best throughout, motivating the analysis in \cref{sec:sf-wd}.
    }
    \vspace{-10pt}
    \label{fig:weight-decay-temp}
\end{figure}

The fundamental design choice in a schedule-free optimizer is the geometry used for the fast update. When inputs and outputs are measured in the Euclidean norm \(\|\cdot\|_{2}\),
the induced operator norm on a matrix \(A \in \mathbb{R}^{m \times n}\) is
\begin{equation}
\|A\|_{2 \to 2}
\;:=\; \sup_{\|u\|_2 \le 1} \|Au\|_2
\;=\; \sigma_{\max}(A)
\;=\; \|A\|_{\op}.
\end{equation}
where \(\sigma_{\max}(A)\) denotes the largest singular value of \(A\), which is precisely the spectral norm. Geometrically, it measures the maximum factor by
which \(A\) can stretch any unit vector, making it the natural measure of how
much a weight perturbation can alter the network's activations. The analysis for other induced norms, and their corresponding optimizers is presented in \cref{app:app-metrized}.

For matrix-valued weights acting as linear maps on activations, the spectral norm is the natural metric, since it equals the operator norm governing the largest change an update induces on activations. The steepest-descent direction under a spectral-norm constraint solves \cite{bernstein2024oldoptimizernewnorm, pethick2025trainingdeeplearningmodels}
\begin{equation}
\textrm{argmin}_{\|\Delta W\|_{\op} \le \eta} \langle \nabla f(W), \Delta W \rangle_F.
\end{equation}
Let $\nabla f(W) = U \Sigma V^\top$ be the SVD with $U \in \mathbb{R}^{m \times r}$, $V \in \mathbb{R}^{n \times r}$ orthonormal and $\Sigma = \operatorname{diag}(\sigma_1, \ldots, \sigma_r)$. Decomposing $\Delta W = U Q V^\top + R$ where $R$ is orthogonal to the column/row space of the gradient, only $Q$ contributes:
\begin{equation}
\langle \nabla f(W), \Delta W \rangle_F = \tr(\Sigma^\top Q) = \sum_{i=1}^{r} \sigma_i Q_{ii}.
\end{equation}
Since $\|Q\|_{\op} \le \eta$ implies $|Q_{ii}| \le \eta$, the minimum is achieved at $Q^\star = -\eta I_r$. Thus, the polar factor is the steepest-descent direction under spectral norm:
\begin{equation}
\Delta W^\star = -\eta \, U V^\top = -\eta \, \polar(\nabla f(W)).
\end{equation}

Unlike Adam \cite{kingma2015adam} or Lion \cite{chen2023symbolic} which act entry-wise and distort
the gradient's singular-vector structure, the polar factor \(UV^\top\) preserves
the left and right singular subspaces. The polar update extracts learning signal from all singular directions of the gradient and not just those with the largest entries, and by making all the singular values unity, it  makes uniform progress along all singular directions. 
This uniform treatment is more aggressive than coordinate-wise updates, so we include an explicit momentum buffer \(M_t\) to smooth the gradient before taking its polar factor. 
We build on this spectral-norm geometry to design a schedule-free optimizer that respects how weight matrices act on activations, while obtaining implicit learning-rate decay through online averaging.

\subsection{Schedule-Free Spectral Descent with Momentum}

For parameters \(\beta , \mu \in [0,1) \) and step-size \(\eta>0\), update rules for SF-Spectral Descent with Momentum are given by
\begin{align}
Y_t &= \beta X_t + (1-\beta) Z_t, &
G_t &= \nabla f(Y_t;\xi_{t}), &
M_t &= \mu M_{t-1} + (1-\mu) G_t, \notag\\
P_t &= \polar(M_t), &
Z_{t+1} &= Z_t - \eta P_t, &
X_{t+1} &= (1 - c_{t+1}) X_t + c_{t+1} Z_{t+1},
\label{eq:sf_spectral_updates}
\end{align}
with \(c_{t+1} = 1 /(t+1) \), \(X_1=Z_1\) and \(M_1=G_1\). Here \(Z_t\) is the fast
sequence with polar update, \(X_t\) is the returned schedule-free average, \(Y_t\) is the gradient-evaluation point, and \(M_t\) is an explicit momentum buffer, 
which is crucial in practical performance (c.f., \cref{fig:ablation}).

Assume the existence of \(W^\star\in\mathbb{R}^{m\times n}\) such that
\(f(W^\star)=f^\star\) and \(\nabla f(W^\star)=0\), and define
\begin{equation}
r:=\min\{m,n\}, \qquad \Delta:=f(Y_1)-f^\star.
\end{equation}

Our convergence analysis relies on the following standard assumptions:
\begin{assumption}[Frobenius Lipschitz smoothness]
\label{assu:F-smooth-main}
There is a constant $L_F>0$ such that for all $U,V\in\R^{m\times n}$,
\begin{equation}
\norm{\nabla f(U)-\nabla f(V)}_F \le L_F \norm{U-V}_F.
\end{equation}    
\end{assumption}

\begin{assumption}[Unbiased gradient and bounded-variance noise]
\label{assu:noise-assumption-main}
Let $\F_t$ be the sigma-field generated by all randomness up to the construction of $Y_t$. We assume, for some batch size $B\ge 1$ and $\sigma^2\ge 0$,
\begin{equation}
\label{eq:noise-assumption-main}
\EE\!\bigl[G_t\mid \F_t\bigr] = \nabla f(Y_t),
\qquad
\EE\!\left[\norm{G_t-\nabla f(Y_t)}_F^2\mid \F_t\right] \le \sigma^2/B.
\end{equation}    
\end{assumption}

\begin{assumption}[Bounded diameter]
\label{assu:bounded-diameter-main}
There exists a constant $D_F>0$
such that
\begin{equation}
\label{eq:bounded-diameter}
\|W^\star\|_F \le D_F/2,
\qquad
\|Z_t\|_F \le D_F/2
\qquad \text{for all } t.
\end{equation}
\end{assumption}

Note that bounded iterate assumption is typically employed in the convergence analysis of spectral methods (e.g., see \cite[Theorem 7]{gupta2018shampoopreconditionedstochastictensor} and \cite[Theorem 1]{an2025asgoadaptivestructuredgradient}), and if necessary it can be enforced algorithmically by considering a projected variant that keep the fast iterates $Z_t$ inside the corresponding norm ball. 
Further, with weight decay at $Z$, \cref{assu:bounded-diameter-main} can be lifted, as can be seen in \cref{lem:Z-bounded}.

Under these assumptions, we obtain the following convergence guarantee:
\begin{theorem}[Stationarity of SF-Spectral Descent with Momentum]
\label{thm:sf_spectral_descent}
Under \cref{assu:F-smooth-main}--\ref{assu:bounded-diameter-main},
for every \(T\ge 1\), SF-Sepctral Descent with Momentum in \eqref{eq:sf_spectral_updates} satisfies the following:
\begin{align}
\frac1T\sum_{t=1}^{T}\mathbb{E}\|\nabla f(Y_t)\|_*
\le {}&
\frac{\Delta + 2L_F D_F^2\!\left(\log(eT)+\pi^2/6\right)}
{(1-\beta)T\eta}
+ \frac{2\sqrt r\,\sigma}{(1-\beta)\sqrt B}\left(\sqrt{\frac{1-\mu}{1+\mu}} + \frac{1}{(1-\mu)T}\right)
\notag\\
&\quad
+ \frac{2\sqrt r\,L_F D_F\log(eT)}{(1-\beta)T}
\left(\frac{1+\mu}{1-\mu}\right)
+ \frac{L_F r\,\eta}{(1-\beta)} \left(\frac{1+3\mu}{1-\mu}\right).
\label{eq:sf_spectral_descent_bound}
\end{align}
\end{theorem}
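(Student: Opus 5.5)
The plan is a descent-lemma argument on the sequence $Y_t$. This is the natural choice because gradients are evaluated at $Y_t$. First I rewrite the motion of $Y_t$ in terms of the polar step plus an averaging drift. From \eqref{eq:sf_spectral_updates},
\begin{equation*}
Y_{t+1}-Y_t=\beta(X_{t+1}-X_t)+(1-\beta)(Z_{t+1}-Z_t)=-(1-\beta)\eta P_t+\beta c_{t+1}(Z_{t+1}-X_t).
\end{equation*}
The first term is a spectral steepest-descent step of length $(1-\beta)\eta$ in operator norm. The second term is a drift with $\|Z_{t+1}-X_t\|_F\le D_F$, because $X_t$ is a convex combination of the $Z_s$ and \cref{assu:bounded-diameter-main} applies. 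Its Frobenius size is therefore at most $c_{t+1}D_F=D_F/(t+1)$. Summing these contributions produces the $\log(eT)$ factors. Squared terms like $\sum 1/t^2$ produce the $\pi^2/6$.

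Next I apply Frobenius smoothness (\cref{assu:F-smooth-main}):
\begin{equation*}
f(Y_{t+1})\le f(Y_t)+\langle\nabla f(Y_t),Y_{t+1}-Y_t\rangle+\tfrac{L_F}{2}\|Y_{t+1}-Y_t\|_F^2 .
\end{equation*}
I handle the three resulting pieces as follows.

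\textbf{Polar term.} I use the standard Muon-type inequality $\langle \nabla f, P_t\rangle\ge \|\nabla f\|_*-2\|\nabla f(Y_t)-M_t\|_*$, which follows from $\langle M_t,P_t\rangle=\|M_t\|_*$ and $\|P_t\|_{\op}=1$. I then bound $\|\cdot\|_*\le\sqrt r\|\cdot\|_F$.

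\textbf{Drift term.} A naive Cauchy–Schwarz bound would require a bound on $\|\nabla f\|_F$. Instead I write $\nabla f(Y_t)=\nabla f(Y_t)-\nabla f(W^\star)$, so that $\|\nabla f(Y_t)\|_F\le L_F D_F$. The inner product is then at most $L_FD_F^2/(t+1)$, and summing gives $L_F D_F^2\log(eT)$.

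\textbf{Quadratic term.} Using $\|P_t\|_F^2\le r$, this is at most $L_F\big((1-\beta)^2\eta^2 r+D_F^2/(t+1)^2\big)$, which feeds the $\pi^2/6$ and $L_F r\eta$ contributions.

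Telescoping over $t$ and dividing by $(1-\beta)\eta T$ yields the first and last terms of \eqref{eq:sf_spectral_descent_bound}, plus the momentum error $\frac{2\sqrt r}{T}\sum_t\EE\|M_t-\nabla f(Y_t)\|_F$.

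The main obstacle is bounding this momentum error. Set $E_t=M_t-\nabla f(Y_t)$ and unroll:
\begin{equation*}
E_t=\mu E_{t-1}+(1-\mu)\xi_t+\mu(\nabla f(Y_{t-1})-\nabla f(Y_t)),
\end{equation*}
where $\xi_t=G_t-\nabla f(Y_t)$ is the martingale noise. This gives
\begin{equation*}
E_t=\mu^{t-1}E_1+(1-\mu)\sum_{s}\mu^{t-s}\xi_s+\sum_s\mu^{t-s+1}(\nabla f(Y_{s-1})-\nabla f(Y_s)).
\end{equation*}
I bound the three parts separately.

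\textbf{Noise part.} By orthogonality of martingale increments, its second moment is $(1-\mu)^2\sigma^2/(B(1-\mu^2))$. Jensen's inequality then gives the $\sqrt{(1-\mu)/(1+\mu)}$ term.

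\textbf{Initial part.} Since $E_1=\xi_1$, summing $\mu^{t-1}$ contributes $\sigma/((1-\mu)T\sqrt B)$.

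\textbf{Drift part.} Each increment satisfies $\|\nabla f(Y_{s-1})-\nabla f(Y_s)\|_F\le L_F\big((1-\beta)\eta\sqrt r+D_F/s\big)$. Summing the geometric weights $\mu/(1-\mu)$ gives two contributions. One is an $\eta r L_F\mu/(1-\mu)$ term, which combines with the smoothness term into the $(1+3\mu)/(1-\mu)$ factor. The other is a $\sqrt r L_F D_F\log(eT)(1+\mu)/((1-\mu)T)$ term, after exchanging the order of summation so that $\sum_t\sum_{s\le t}\mu^{t-s}/s\lesssim\log(eT)/(1-\mu)$.

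The delicate step is the bookkeeping of constants to land exactly on the stated factors $(1+\mu)/(1-\mu)$ and $(1+3\mu)/(1-\mu)$. The structure of the bound, however, follows directly from these three pieces.
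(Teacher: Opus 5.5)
Your proposal is correct and is essentially the paper's proof: a descent lemma on $Y_t$ under Frobenius smoothness using the same $Y$-increment identity (you only regroup the $\beta\eta P_t/(t+1)$ piece into the drift), the polar-alignment inequality, the $\nabla f(W^\star)=0$ trick giving $\|\nabla f(Y_t)\|_F\le L_F D_F$, and a momentum-error split that is exactly the paper's decomposition $\nabla f(Y_t)-M_t=(\nabla f(Y_t)-C_t)+(C_t-M_t)$ written in unrolled form, where $C_t$ is the EMA of the true gradients (a deterministic tracking part plus a martingale noise EMA). Your bookkeeping gives constants no larger than the stated ones, for instance $(1-\beta)\tfrac{1+\mu}{1-\mu}L_F r\eta$ for the step-size term and $\tfrac{2\mu\sqrt r\,L_F D_F\log(eT)}{(1-\mu)T}$ for the drift term, so you do not need to hit $\tfrac{1+\mu}{1-\mu}$ and $\tfrac{1+3\mu}{1-\mu}$ exactly.
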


The first term is the descent term, the next captures the stochastic noise, the third term is the price of schedule-free drift, and the final term reflects the constant spectral
step and momentum tracking. The explicit momentum
stabilizes the aggressive polar steps at the cost of a tracking error. By optimizing the terms that appear on the right hand side of the theorem above, we obtain the following corollary for any \( \beta \in [0,1) \), both with and without noise.

\begin{corollary}[Optimized convergence rates for SF-Spectral Descent]
\label{cor:main_sf_spectral_descent}
Under \cref{assu:F-smooth-main}--\ref{assu:bounded-diameter-main}, for any fixed
\(\beta\in[0,1)\) there exist choices \(\mu_T\in[0,1)\) and \(\eta_T>0\) such
that
\begin{equation}
\frac1T\sum_{t=1}^{T}\mathbb{E}\|\nabla f(Y_t)\|_*
=
\widetilde{O}(T^{-1/4})
~~(\sigma > 0); 
~~\text{and}~~
\frac1T\sum_{t=1}^{T}\|\nabla f(Y_t)\|_*
=
\widetilde{O}(T^{-1/2})
~~(\sigma=0).
\end{equation}
\end{corollary}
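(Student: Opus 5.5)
The plan is to plug explicit choices of $\mu_T$ and $\eta_T$ into the bound \eqref{eq:sf_spectral_descent_bound} of \cref{thm:sf_spectral_descent}, treating $\beta$, $r$, $L_F$, $D_F$, $\Delta$, $\sigma$, $B$ as constants. The factor $1/(1-\beta)$ appears in every term and is harmless for fixed $\beta$. Write $\alpha := 1-\mu$. Using $1+\mu\le 2$, $1+3\mu\le 4$ and $\sqrt{(1-\mu)/(1+\mu)}\le\sqrt{\alpha}$, the right-hand side is bounded, up to constants, by
\begin{equation*}
\frac{\Delta + L_F D_F^2\log(eT)}{T\eta} + \frac{\sqrt r\,\sigma}{\sqrt B}\Bigl(\sqrt{\alpha} + \frac{1}{\alpha T}\Bigr) + \frac{\sqrt r\,L_F D_F\log(eT)}{\alpha T} + \frac{L_F r\,\eta}{\alpha}.
\end{equation*}
Here the constant $\pi^2/6$ is absorbed into $\log(eT)\ge 1$. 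The proof then reduces to balancing these four terms.

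\textbf{Noisy case $\sigma>0$.} The noise term $\sqrt\alpha$ forces $\alpha\to 0$. The drift terms $1/(\alpha T)$ then require $\alpha \gg T^{-1}$, and the descent and tracking terms $1/(T\eta)+\eta/\alpha$ are minimized at $\eta\asymp\sqrt{\alpha/T}$, giving a combined value of order $(\alpha T)^{-1/2}$ up to logarithmic factors. Balancing $\sqrt\alpha$ against $(\alpha T)^{-1/2}$ gives $\alpha = T^{-1/2}$, hence $\mu_T = 1-T^{-1/2}\in[0,1)$ and $\eta_T = T^{-3/4}$. With these choices:
\begin{itemize}
\item the descent term is $\widetilde O(T^{-1/4})$;
\item the noise term is $O(T^{-1/4}+T^{-1/2})$;
\item the drift term is $\widetilde O(T^{-1/2})$;
\item the tracking term is $O(T^{-3/4}\cdot T^{1/2}) = O(T^{-1/4})$.
\end{itemize}
Altogether the bound is $\widetilde O(T^{-1/4})$.

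\textbf{Deterministic case $\sigma=0$.} The noise term vanishes, so the restriction on $\alpha$ disappears. Take $\mu_T=0$, so $\alpha=1$. The bound becomes $\widetilde O(1/(T\eta)) + O(\log T/T) + O(\eta)$, and the choice $\eta_T = T^{-1/2}$ gives $\widetilde O(T^{-1/2})$. One could also optimize $\eta_T$ to include $\sqrt{\log(eT)}$ factors, but this is unnecessary under the $\widetilde O$ notation.

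The main point to verify is the bookkeeping rather than any analytic difficulty. Two things need checking. First, the factor $(1+\mu)/(1-\mu)$ in the drift term must behave as $1/\alpha$, not worse, so that with $\alpha=T^{-1/2}$ it stays $\widetilde O(T^{-1/2})$. Second, $\Delta = f(Y_1)-f^\star$ must not depend on $\eta$ or $\mu$. This holds because $Y_1 = X_1 = Z_1$ is the initialization, so $\Delta$ is a genuine constant. Beyond these checks, the corollary is a direct substitution into \cref{thm:sf_spectral_descent}.
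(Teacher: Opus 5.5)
Your proposal is correct and follows the paper's own route. Like the paper, you substitute into the theorem with $\alpha = 1-\mu$, take $\alpha \asymp T^{-1/2}$ and $\eta \asymp T^{-3/4}$ in the noisy case, and take $\mu = 0$, $\eta \asymp T^{-1/2}$ in the noiseless case. The only difference is that the paper optimizes $\eta$ exactly and sets $\alpha = \min\{1,\, 2\sqrt{A_{T,F} L_F B/(\sigma^2 T)}\}$, where $A_{T,F} = O(\log T)$; this gives the sharper rates $O((\log T/T)^{1/4})$ and $O(\sqrt{\log T/T})$. Your choices do not depend on the problem constants and lose a full $\log T$ factor, which is harmless under $\widetilde{O}$.
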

These convergence rates match the ones derived in recent works on convergence of Muon \cite{chang2026convergencemuon, shen2026convergenceanalysismuon, sato2025convergenceboundcriticalbatch, li2025noteconvergencemuon}. 
An analogous result can also be proven under Lipschitz smoothness measured in
the spectral norm rather than the Frobenius norm. Since the Frobenius-smooth
version is more directly comparable to existing matrix-optimization methods, we
state it here and defer the spectral-smooth variant, together with the full
proofs of both results, to the \cref{app:convergence}.

\subsection{Schedule-Free NorMuon for Neural Network Training}

\begin{algorithm}[ht]
\caption{Schedule-Free NorMuon (SF-NorMuon)}
\label{algo:sf-normuon}
\small
\begin{algorithmic}[1]
\State \textbf{Input:} base learning rate $\eta$, interpolation parameter $\beta_1$, variance parameter $\beta_2$, momentum parameter $\mu$, perturbation parameter $\varepsilon$, weight decay $\lambda$, warmup steps $T_{\text{warmup}}$
\For{$t=1$ \textbf{to} $T$}
    \State $Y_t \gets (1-\beta_1)Z_t + \beta_1X_t$ \Comment{Interpolate weights}
    \State $G_t \gets \nabla_{\mathbf{W}}\mathcal{L}(Y_t, \zeta_t)$ \Comment{Compute gradient at interpolation point}
    \State $M_t \gets \mu M_{t-1} + (1-\mu) G_t$ \Comment{Explicit momentum buffer}
    \State \rule{\linewidth}{0.4pt}
    \State $P_t \gets \text{polar}(M_t)$ \Comment{Polar factor of momentum}
    \State $v_t \gets \beta_2v_{t-1} + (1-\beta_2)\operatorname{mean}_{\text{cols}}(P_t \odot P_t)$ \Comment{Row-wise second moment}
    \State $V_t \gets \mathrm{ExpandRows}(v_t)$ \Comment{Broadcast to matrix}
    \State $\widehat{P}_t \gets P_t \oslash (\sqrt{V_t} + \varepsilon)$ \Comment{Adaptive row-wise normalization}
    \State \rule{\linewidth}{0.4pt}
    \State $\eta_t \gets \eta \min(1, t/T_{\text{warmup}})$ \Comment{Learning rate with warmup}
    \State $\hat{\eta}_t \gets 0.2 \eta_t \sqrt{mn}/\|\widehat{P}_t\|_F$ \Comment{Match Adam RMS scaling}
    \State $s_t \gets s_{t-1} + \eta_t^2$
    \State $c_{t+1} \gets \eta_t^2 / s_t$ \Comment{Averaging coefficient}
    \State \rule{\linewidth}{0.4pt}
    \State $Z_{t+1} \gets Z_t -\eta \lambda Z_t - \hat{\eta}_t\widehat{P}_t$ \Comment{Update with weight decay at $Z_t$}
    \State $X_{t+1} \gets (1-c_{t+1})X_t + c_{t+1}Z_{t+1}$ \Comment{Update averaged iterate}
\EndFor
\State \textbf{Return:} $X_T$
\end{algorithmic}
\end{algorithm}

To make schedule-free spectral descent practical for neural network training, we incorporate three refinements below. The resulting algorithm is SF-NorMuon, which we summarize in \cref{algo:sf-normuon}. A PyTorch implementation of this algorithm is provided in \cref{app:implementation}.

\textbf{Learning rate warm up and scaling.}
We warm up the learning rate linearly, which stabilizes early training \cite{goyal2017accurate} and enables larger learning rates \cite{kalra2024warmup}. After computing the adaptively normalized direction $\widehat{P}_t$, the algorithm scales the base learning rate by $\hat{\eta}_t = 0.2\eta_t\sqrt{\max(m,n)}/\|\widehat{P}_t\|_F$. This scaling serves two purposes \cite{liu2025muonscalablellmtraining}: (i) It normalizes the effective step size to be comparable to Adam's RMS-normalized updates, enabling $\eta$ and $\lambda$ to be shared with SF-AdamW; and (ii) it provides additional stability by accounting for the overall magnitude of the normalized update direction.

\textbf{Weight decay at the fast iterate $Z$.} 
Weight decay is applied to $Z_t$, rather than $Y_t$ (as originally proposed in \cite{defazio2023learning} for SF-AdamW). 
This choice treats weight decay as a modification to the base optimizer dynamics rather than as an $\ell_2$ regularizer in the loss, which would correspond to decay at $Y_t$. The scaled weight decay term $\eta_t\lambda Z_t$ is subtracted from $Z_t$ in the base update. 
Weight decay at $Z_t$ is necessary for long-horizon training with schedule-free methods, as we explain in \cref{sec:sf-wd}. 

\textbf{Row-wise Normalization.}
We incorporate row-wise adaptive normalization \cite{li2025normuonmakingmuonefficient, si2025adamuonadaptivemuonoptimizer}. The polar factor can produce orthogonalized updates with high variance in step sizes across individual neurons, as different rows of a weight matrix may have vastly different update magnitudes. 
To accomodate, SF-NorMuon maintains a running average $v_t \in \mathbb{R}^m$ of the squared row norms of the polar factor $P_t$, capturing the typical magnitude of updates for each neuron and enabling per-neuron step size adaptation. The exponential moving average with parameter $\beta_2$ 
provides stability while remaining responsive to changes in gradient statistics, analogous to the second-moment estimation in Adam, but operating at the neuron level rather than element-wise.

\begin{figure}[htb]
    \centering
    \includegraphics[width=0.8\linewidth]{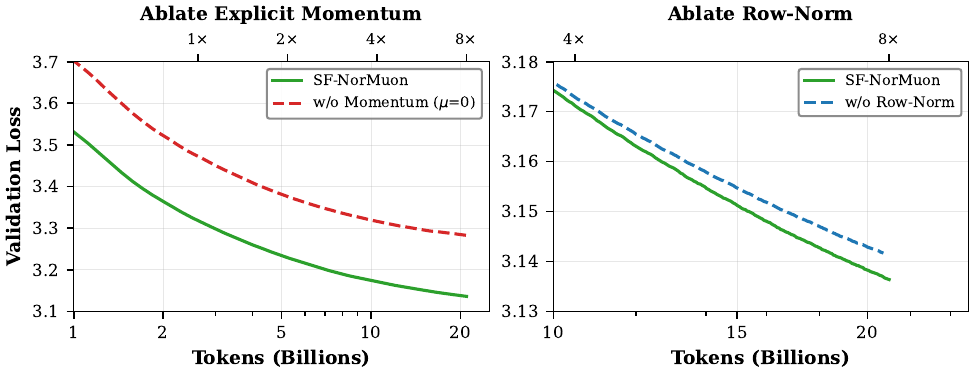}
    \caption{%
    \textbf{Importance of explicit momentum and row-wise normalization for SF-NorMuon.}
    \textit{Left:} Ablating explicit momentum ($\mu=0$) significantly degrades performance, with final loss increasing from 3.14 to 3.28. This validates the importance of smoothing the gradient before computing the polar factor, as discussed in \cref{sec:sf-spec}.
    \textit{Right:} Ablating row-wise normalization leads to a smaller gap, with SF-NorMuon reaching the same loss approximately 12\% faster.
    }
    \vspace{-5pt}
    \label{fig:ablation}
\end{figure}

In \cref{fig:ablation}, we ablate the two key aspects of SF-NorMuon: (i) the explicit momentum buffer (by setting $\mu = 0$), which reduces to direct polar decomposition of the gradient, and (ii) the row-wise adaptive normalization (by removing the $v_t$ computation and normalization step), which reduces to uniform spectral steps. These ablations help isolate the contribution of each component to the overall performance. We find that removing the row-wise adaptive normalization slows down convergence by around $12 \%$ at 8x Chinchilla ratio, but removing momentum leads to a significant drop in performance.
Note that the original SF-SGD or SF-AdamW does not have an explicit momentum buffer. Therefore, the above ablation study illustrates how direct application of schedule-free dynamics to spectral optimzer (e.g., replacing $\nabla \mathcal{L}(y_t;\xi_t)$ in \eqref{eq:SF-z} with polar$(\nabla \mathcal{L}(y_t;\xi_t))$) would perform.

\begin{remark}[Memory footprint]
Consider a matrix parameter of shape $m \times n$. For SF methods, we need to track $Y$ and $Z$ (since $X$ can be computed on the fly). For SF-AdamW the second moment buffer (per parameter) takes up memory $m n$, whereas SF-NorMuon uses a row-wise second-moment buffer of size $m$. Thus, there is room for an explicit momentum buffer matrix $M$ of size $m \times n$ buffer unlike SF-AdamW. Hence, the total persistent memory is $3mn$ for SF-AdamW and $3mn+m$ for SF-NorMuon, compared with $3mn$ for AdamW and $2mn+m$ for NorMuon.     
\end{remark}

\section{Necessity of Weight Decay in Long Horizon Training}
\label{sec:sf-wd}

Modern language models are trained for very long horizons \cite{shazeer2017outrageously,lepikhin2021gshard,fedus2022switch}, and the training tokens for Mixture-of-Experts models can exceed 
30 times the Chinchilla ratio to the number of active parameters \cite{glm5team2026glm5vibecodingagentic,singh2026arceetrinitylargetechnical,kimiteam2026kimik25visualagentic}. 
Thus, to assess long-horizon behavior of schedule-free methods, we train the 125M LLaMa-2-style model for up to 48$\times$ Chinchilla with the default hyper-parameters.

Interestingly, for SF methods, the validation loss at $X$ diverges without weight decaying \textit{specifically at} $Z$. This is in sharp contrast to convex optimization theory, where convergence holds without regularization \cite{defazio2024road, schaipp2025surprising}. 
In particular, for SF-AdamW in \cref{fig:weight-decay-temp} (left panel), decay at $Y$ performs the best up to around 30B tokens, but eventually leads to divergence. 
Note that the original paper \cite{defazio2024road} recommended decaying at $Y$, which is sensible from the perspective of $\ell_2$ regularization. In contrast, for SF-NorMuon in \cref{fig:weight-decay-temp} (right panel), decaying at $Z$ is not only stable (see also \cref{lem:Z-bounded}) but also maintains the best validation loss throughout the long training exceeding 50B tokens. 

To better understand this phenomenon, we consider a quasi steady-state analysis, following \cite{vanlaarhoven2017l2regularizationversusbatch, defazio2025gradientsrapidlyincreasenear}. Recall from \eqref{eq:lr_effective_xz} that the schedule-free method can be rewritten as
\begin{equation}
X_{t+1} = X_t - c_{t+1}\eta U_t + c_{t+1} (Z_{t+1} - X_t),
\end{equation}
where $c_{t+1} = 1/(t+1)$, and $U_t$ is
either gradient divided by second moment buffer for SF-AdamW or polar transform of momentum buffer for SF-NorMuon, respectively. As can be seen, the evaluation sequence $X_t$ evolves with an \emph{effective} learning rate $\eta_t^{\mathrm{eff}} = c_{t+1}\eta = \eta/(t+1)$ that decays to zero as $t \to \infty$, which makes the divergence surprising. 

The culprit is in the fast $Z_t$ sequence: it evolves with constant learning rate $\eta$ throughout training, and in fact, we observe in practice that validation loss at $Z_t$ often diverges. Since $X_t$ averages all past $\{Z_s\}_{s \leq t}$, $Z_t$ eventually contaminates the $X_t$ sequence.

For SF-NorMuon, applying weight decay directly at $Z_t$ addresses this issue, while maintaining good performance. 
To make this precise, we prove that weight decay at $Z_t$ ensures all iterates to remain bounded, and characterize the precise steady-state norm to which $Z_t$ converges.

\begin{lemma}
\label{lem:Z-bounded}
    Consider the update 
    $Z_{t+1} \gets Z_t -\eta \lambda Z_t - \hat{\eta}_t\widehat{P}_t$
    and assume $0 < \eta \lambda < 1$ with $\hat{\eta} = 0.2 \eta \sqrt{mn}/\|\widehat{P}_t\|_F$. Then, we have for all $t$,
    \begin{equation} \label{eq:Z-bounded}
    \|Z_t\|_F \leq  \|Z_0\|_F (1- \eta \lambda)^{t}+  \frac{0.2 \sqrt{mn}}{\lambda}  \implies 
    \|Z_t\|_F \leq \frac{0.2 \sqrt{mn}}{\lambda} ~\textrm{as} ~ t \to \infty.
    \end{equation}
Furthermore, since $X_t$ and $Y_t$ are convex combinations of $\{Z_s\}_{s \leq t}$, they satisfy the same bound. 
\end{lemma}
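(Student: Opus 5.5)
The plan is a one-step contraction estimate followed by unrolling the resulting linear recursion. Taking Frobenius norms in the update $Z_{t+1} = (1-\eta\lambda)Z_t - \hat{\eta}_t\widehat{P}_t$ and using the triangle inequality gives
\begin{equation*}
\|Z_{t+1}\|_F \le (1-\eta\lambda)\|Z_t\|_F + \hat{\eta}_t\|\widehat{P}_t\|_F .
\end{equation*}
Here $1-\eta\lambda>0$ by the assumption $0<\eta\lambda<1$. The key observation is that the learning-rate scaling $\hat{\eta}_t = 0.2\,\eta\sqrt{mn}/\|\widehat{P}_t\|_F$ makes the update norm deterministic. Indeed, $\hat{\eta}_t\|\widehat{P}_t\|_F = 0.2\,\eta\sqrt{mn}$ exactly, whatever the polar factor, the row normalization, or the stochastic gradient happened to be. With warmup one has $\eta_t\le\eta$ in place of $\eta$, which only makes the bound smaller. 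So we get the scalar recursion
\begin{equation*}
a_{t+1}\le q\,a_t + b, \qquad a_t:=\|Z_t\|_F,\quad q:=1-\eta\lambda,\quad b:=0.2\,\eta\sqrt{mn}.
\end{equation*}

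Next, unroll the recursion by induction on $t$:
\begin{equation*}
a_t \le q^t a_0 + b\sum_{k=0}^{t-1}q^k \le q^t a_0 + \frac{b}{1-q} = \|Z_0\|_F(1-\eta\lambda)^t + \frac{0.2\sqrt{mn}}{\lambda}.
\end{equation*}
The factor $\eta$ cancels between $b$ and $1-q=\eta\lambda$, giving exactly the stated constant. Since $0<q<1$, the transient term $q^t a_0$ vanishes, which yields the asymptotic bound $\limsup_t\|Z_t\|_F\le 0.2\sqrt{mn}/\lambda$. The statement's ``as $t\to\infty$'' should be read in this $\limsup$ sense.

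For the claim about $X_t$ and $Y_t$, unroll the averaging recursion $X_{t+1}=(1-c_{t+1})X_t+c_{t+1}Z_{t+1}$ with $c_{t+1}\in[0,1]$ and $X_1=Z_1$. This shows $X_t=\sum_{s\le t}w_{t,s}Z_s$ with nonnegative weights summing to one. Then $Y_t=\beta_1X_t+(1-\beta_1)Z_t$ is again a convex combination of $\{Z_s\}_{s\le t}$. Convexity of the Frobenius norm gives $\|X_t\|_F,\|Y_t\|_F\le\max_{s\le t}\|Z_s\|_F$.

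This is the only step needing care. The time-dependent bound for $Z_s$ decreases in $s$, so the maximum is controlled by $\|Z_0\|_F+0.2\sqrt{mn}/\lambda$ uniformly in $t$. That gives uniform boundedness, which is what ``the same bound'' should mean. For the asymptotic statement, the weight that $X_t$ places on early iterates tends to zero, since $c_t=1/t$ (or $\eta_t^2/s_t$ with $s_t\to\infty$). Hence $\limsup_t\|X_t\|_F\le\limsup_t\|Z_t\|_F\le 0.2\sqrt{mn}/\lambda$, and the same holds for $Y_t$. I expect this Cesàro-type limit argument to be the main (though mild) obstacle; everything else is the exact cancellation in the step-size normalization.
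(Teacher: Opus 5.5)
Your proof is correct and follows the paper's argument: the triangle inequality, the exact cancellation $\hat{\eta}_t\|\widehat{P}_t\|_F = 0.2\,\eta\sqrt{mn}$, and unrolling the scalar recursion with the geometric sum bounded by $1/(\eta\lambda)$. The paper only asserts the claim for $X_t$ and $Y_t$; your convexity bound by $\max_{s\le t}\|Z_s\|_F$ and your Toeplitz-type $\limsup$ argument fill in that step, and you are right that only uniform boundedness and the asymptotic bound carry over to the averages, not the literal transient $(1-\eta\lambda)^t$, which uniform averaging slows to roughly $1/t$.
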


In \cref{fig:weight-decay-temp}, we see that weight decay at $Y_t$ eventually leads to divergence, for both SF-NorMuon and SF-AdamW. We make the following remark regarding this observation:
\begin{remark}
For weight decay at $Y_t$, the coupled dynamics $(Z_t, D_t)$ where $D_t = Z_t - X_t$ satisfy:
\begin{equation}
\label{eq:wd-y-triangle}
\begin{bmatrix} \|Z_{t+1}\|_F \\ \|D_{t+1}\|_F \end{bmatrix} 
\leq 
\begin{bmatrix} 1-\eta\lambda & \eta\lambda\beta \\ (1-c_{t+1})\eta\lambda & (1-c_{t+1})(1+\eta\lambda\beta) \end{bmatrix}
\begin{bmatrix} \|Z_t\|_F \\ \|D_t\|_F \end{bmatrix} + 0.2\eta\sqrt{mn}\begin{bmatrix} 1 \\ 1-c_{t+1} \end{bmatrix}.
\end{equation}
In the long-horizon limit $c_t \to 0$, the iteration matrix above has the leading eigenvalue
\begin{equation}
    \lambda_1 = 1 + \frac{\eta\lambda}{2}\left(\sqrt{1 + 6\beta + \beta^2} - (1-\beta)\right) > 1 
\end{equation}
for all $\beta > 0$, and so the triangle-inequality upper bound similar to \eqref{eq:Z-bounded} diverges. 
\end{remark}

\begin{figure}[htb]
    \centering
    \includegraphics[width=0.82\linewidth]{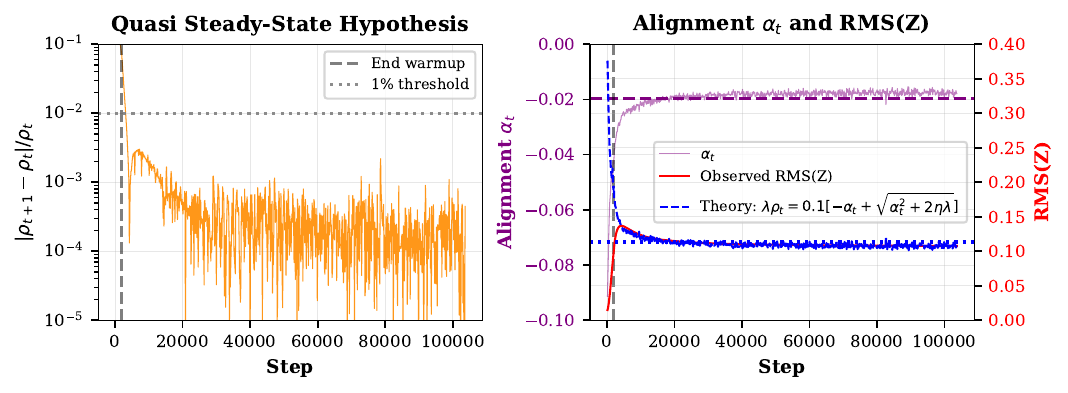}
    \caption{%
    \textbf{Quasi steady-state analysis for SF-NorMuon with decay at $\mathbf{Z}$ (averaged over layers).}
    \textit{Left:} The ratio $|\rho_{t+1} - \rho_t|/\rho_t$ remains below 1\% after warmup, validating the quasi steady-state hypothesis (\cref{assu:F-smooth}).
    \textit{Right:} Alignment $\alpha_t$ (purple) and RMS$(Z)$ (red) over training. The theoretical prediction $\lambda\rho_t = 0.1[-\alpha_t + \sqrt{\alpha_t^2 + 2\eta\lambda}]$ from \cref{lem:Z-steady-state} (blue dashed) closely tracks the observed values. This training run corresponds to $\eta = 0.01, \lambda = 0.05$. 
    }
    \label{fig:wd-rms}
\end{figure}

Having established iterate boundedness in \cref{lem:Z-bounded}, we now study how the norm of $Z$ evolves during the training. For this purpose, the analysis greatly simplifies if we make a quasi steady-state hypothesis following  \cite{vanlaarhoven2017l2regularizationversusbatch, defazio2025gradientsrapidlyincreasenear}, which states that change in norm of $Z$ between two iterations is much smaller than the norm itself. 

\begin{hypothesis}[Quasi Steady-State]
\label{assu:F-smooth}
There exists an iteration count $\tau$ and a constant $\varepsilon\geq 0$ such that for all $t\geq \tau$,
\begin{equation}
\label{eq:steady}
\left| \|Z_{t+1}\|_F - \|Z_{t}\|_F \right| \leq \varepsilon \|Z_{t}\|_F~.
\end{equation}    
\end{hypothesis}

To enable uniform comparison across layers of different shapes, we define the RMS norm $\rho_t := \|Z_t\|_F / \sqrt{mn}$ for an $m \times n$ weight matrix, and the alignment $\alpha_t := \langle \widehat{P}_t, Z_t \rangle / (\|\widehat{P}_t\|_F \|Z_t\|_F) \in [-1, 1]$, which is the cosine of the angle between the preconditioned update and the current iterate. The left panel of \cref{fig:wd-rms} shows that $|\rho_{t+1} - \rho_t|/\rho_t$ remains small after warmup, validating the hypothesis. Based on this, we derive the steady-state value of $\rho_t$ :

\begin{lemma}[Quasi Steady-State of $Z$]
\label{lem:Z-steady-state}
Consider the update 
$Z_{t+1} \gets Z_t -\eta \lambda Z_t - \hat{\eta}_t\widehat{P}_t$
and assume $0 < \eta \lambda \ll 1$ with $\hat{\eta} = 0.2 \eta \sqrt{mn}/\|\widehat{P}_t\|_F$.
Then, the steady-state satisfies
\begin{equation}
\lambda\rho_t = 0.1\Bigl[-\alpha_t + \sqrt{\alpha_t^2 + 2\eta\lambda}\Bigr].
\end{equation}
\end{lemma}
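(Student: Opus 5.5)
The plan is to track the squared Frobenius norm of $Z_t$ through one step of the update, impose the quasi steady-state condition of \cref{assu:F-smooth} with $\varepsilon \to 0$, and solve the resulting quadratic for $\rho_t$.

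\textbf{Step 1: expand the update.} The key simplification is that the adaptive scaling makes the update magnitude deterministic:
\[
\|\hat{\eta}_t\widehat{P}_t\|_F = 0.2\,\eta\sqrt{mn} =: s .
\]
By the definition of the alignment, the cross term is
\[
\langle \hat{\eta}_t\widehat{P}_t, Z_t\rangle = s\,\alpha_t\,\|Z_t\|_F .
\]
Expanding $Z_{t+1} = (1-\eta\lambda)Z_t - \hat{\eta}_t\widehat{P}_t$ therefore gives the exact identity
\[
\|Z_{t+1}\|_F^2 = (1-\eta\lambda)^2\|Z_t\|_F^2 - 2(1-\eta\lambda)\,s\,\alpha_t\,\|Z_t\|_F + s^2 .
\]

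\textbf{Step 2: impose the steady state.} Set $\|Z_{t+1}\|_F = \|Z_t\|_F = N$, the $\varepsilon\to0$ idealization of \eqref{eq:steady}. Since $\eta\lambda\ll1$, I would drop the $(\eta\lambda)^2$ term and replace $1-\eta\lambda$ by $1$ in the cross term. Both are higher-order corrections of relative size $O(\eta\lambda)$. This leaves the quadratic
\[
2\eta\lambda N^2 + 2s\alpha_t N - s^2 = 0 .
\]

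\textbf{Step 3: normalize and solve.} Dividing by $mn$ and writing $N = \sqrt{mn}\,\rho_t$ gives
\[
2\eta\lambda\rho_t^2 + 0.4\,\eta\,\alpha_t\,\rho_t - 0.04\,\eta^2 = 0 .
\]
The product of the roots is negative, so exactly one root is positive, and that is the one we take since $\rho_t \ge 0$. The quadratic formula gives
\[
\rho_t = \frac{-0.4\eta\alpha_t + \sqrt{0.16\eta^2\alpha_t^2 + 0.32\,\eta^3\lambda}}{4\eta\lambda}
= \frac{0.4\bigl[-\alpha_t + \sqrt{\alpha_t^2 + 2\eta\lambda}\bigr]}{4\lambda}.
\]
Multiplying through by $\lambda$ yields the claimed $\lambda\rho_t = 0.1\bigl[-\alpha_t+\sqrt{\alpha_t^2+2\eta\lambda}\bigr]$.

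\textbf{Main obstacle.} The algebra is routine. The delicate step is justifying the approximations in Step 2, namely dropping the $O((\eta\lambda)^2)$ term and the factor $(1-\eta\lambda)$.

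When $|\alpha_t|$ is of order $\sqrt{\eta\lambda}$ or smaller, all three retained terms are of comparable size, of order $\eta^2$ after normalization. The neglected corrections are then a factor $\eta\lambda$ smaller, which suffices. When $\alpha_t$ is of order one, the cross term dominates, and the $(1-\eta\lambda)$ factor perturbs it only by a relative $O(\eta\lambda)$.

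I would state the result as holding up to relative error $O(\eta\lambda + \varepsilon)$. Here the $\varepsilon$ comes from \eqref{eq:steady}, which contributes an additional $O(\varepsilon)\|Z_t\|_F^2$ term to the left-hand side of the Step 2 equation.
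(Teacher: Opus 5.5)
Your derivation matches the paper's proof step for step, and your algebra, including the sign argument selecting the positive root, is correct. The paper expands $\|Z_{t+1}\|_F^2$ using $\hat\eta_t\|\widehat P_t\|_F=0.2\eta\sqrt{mn}$, sets $\rho_{t+1}=\rho_t$, replaces $1-(1-\eta\lambda)^2$ by $2\eta\lambda$ and $1-\eta\lambda$ by $1$, and takes the positive root, exactly as you do. Your regime-by-regime justification of the $O(\eta\lambda)$ corrections goes beyond what the paper provides.

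The one claim that does not hold is your proposed qualifier ``relative error $O(\eta\lambda+\varepsilon)$.'' Write $\|Z_{t+1}\|_F^2=(1+\delta)\|Z_t\|_F^2$ with $|\delta|\le 2\varepsilon+\varepsilon^2$. The normalized identity then becomes
\[
(2\eta\lambda-\eta^2\lambda^2+\delta)\rho_t^2+0.4\eta(1-\eta\lambda)\alpha_t\rho_t-0.04\eta^2=0 .
\]
So $\delta$ perturbs the leading coefficient by a relative amount $\delta/(2\eta\lambda)$, not $\delta$. That coefficient matters in two regimes:
\begin{itemize}
\item for $|\alpha_t|\lesssim\sqrt{\eta\lambda}$, where $\rho_t\approx 0.2\eta/\sqrt{2\eta\lambda+\delta}$;
\item for $\alpha_t<0$ of order one, where $\rho_t\approx -0.4\eta\alpha_t/(2\eta\lambda+\delta)$.
\end{itemize}
In both, the induced relative error in $\rho_t$ is of order $\varepsilon/(\eta\lambda)$. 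Only for $\alpha_t>0$ of order one, where $\rho_t\approx 0.1\eta/\alpha_t$ is fixed by the cross and constant terms, is it $O(\varepsilon)$.

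The correct statement is therefore relative error $O\bigl(\eta\lambda+\varepsilon/(\eta\lambda)\bigr)$, which requires $\varepsilon\ll\eta\lambda$. For the reported run ($\eta=0.01$, $\lambda=0.05$, so $\eta\lambda=5\times10^{-4}$), a per-step bound $\varepsilon\approx10^{-2}$ would not by itself justify the formula. This does not affect the stated identity, since the paper imposes $\rho_{t+1}=\rho_t$ exactly; it only invalidates your quantitative strengthening.
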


After the end of the warm-up period, the alignment $\alpha_t$ approaches a small and negative value that persists for the rest of the training period as shown by the purple curve in right panel of \cref{fig:wd-rms}. Based on the \cref{lem:Z-steady-state}, we expect that the RMS norm of $Z$ (depicted in red) will asymptote to a constant value, which is also confirmed by the red curve sitting on top of the theoretical prediction shown in blue. Crucially, since the $Z$ norm reaches a fixed value, the fast iterate remains controlled throughout training, and stabilizes other sequences as shown in \cref{lem:Z-bounded}. 

\section{Experiments on Training Language Models}

\begin{table}[ht]
\centering
\caption{\textbf{Validation loss at $X$ across Chinchilla horizons.} The best loss at each horizon is indicated with \textbf{bold}. \emph{Speedup} is the percentage of steps saved by SF-NorMuon to reach the final loss of SF-AdamW. SF methods use hyperparameters tuned at $2\times$ Chinchilla for both models. AdamW uses cosine schedule tuned per-horizon (c.f., \cref{tab:hyperparam-space}) for 125M model, and the configuration for $8\times$ was reused for the 772M model. SF-AdamW uses weight decay at $Y$ as it performs the best in these horizons (c.f., \cref{fig:weight-decay-temp}) and is the originally proposed option in \cite{defazio2024road}.
}
\label{tab:main-results}
\small
\resizebox{\textwidth}{!}{%
\begin{tabular}{@{}c cccc cccc@{}}
\toprule
& \multicolumn{4}{c}{\textbf{125M Model}} & \multicolumn{4}{c}{\textbf{772M Model}} \\
\cmidrule(lr){2-5} \cmidrule(lr){6-9}
Chinchilla & SF-NorMuon & SF-AdamW & AdamW & & SF-NorMuon & SF-AdamW & AdamW & \\
 Ratio & (decay@$Z$) & (decay@$Y$) & (cosine) & Speedup & (decay@$Z$) & (decay@$Y$) & (cosine) & Speedup \\
\midrule
$1\times$ & \textbf{3.318} & 3.399 & 3.354 & 35\% & \textbf{2.777} & 2.852 & 2.826 & 50\% \\
$2\times$ & \textbf{3.229} & 3.282 & 3.231 & 35\% & \textbf{2.729} & 2.780 & 2.768 & 52\% \\
$4\times$ & \textbf{3.170} & 3.205 & 3.170 & 36\% & \textbf{2.698} & 2.726 & 2.720 & 47\% \\
$8\times$ & 3.136 & 3.141 & \textbf{3.124} & 11\% & --- & --- & --- & --- \\
\bottomrule
\end{tabular}%
}
\end{table}

We validate SF-NorMuon on auto-regressive language modeling by training decoder only transformers at two scales: a 125M parameter model and a 772M parameter model. Both architectures follow the LLaMA-2 design \cite{touvron2023llama2openfoundation}, with tied input-output embeddings, rotary positional embeddings (RoPE) \cite{su2023roformerenhancedtransformerrotary}, RMSNorm for pre-normalization \cite{zhang2019rootmeansquarelayer}, and squared ReLU activations in the MLP blocks \cite{zhang2024relu2}. Additional architecture details are provided in \cref{app:arch-hypers}. Our training code is based on the NanoGPT repository \cite{Karpathy2022, modded_nanogpt_2024}. 

\textbf{Dataset and training setup.}
We train on the FineWeb-100B dataset \cite{penedo2024finewebdatasetsdecantingweb} tokenized with the GPT-2 tokenizer \cite{radford2019language}. All experiments use batch size of 512 sequences and sequence length of 1024 tokens, yielding approximately 524K tokens per gradient step. We adopt the Chinchilla scaling \cite{hoffmann2022}, where $N\times$ Chinchilla denotes training on $\sim 20N$ tokens per model parameter. For our 125M model, this corresponds to 2.5B tokens at $1\times$ (5,000 steps), scaling up to 21B tokens at $8\times$ (40,000 steps). 

\textbf{Hyperparameter tuning.}
We start with the 150M model.
For schedule-free optimizers, we use 2,000 warm up steps and tune hyper-parameters at $2\times$ Chinchilla, and reuse these for all settings.
The best hyperparameters for SF-AdamW are $\eta=0.01$, $(\beta_1, \beta_2)=(0.95,0.99)$, and $\lambda=0.05$.  
For SF-NorMuon, we reuse the same learning rate and weight decay parameter, following the $0.2 \eta_t\sqrt{mn}/\|\widehat{P_t}\|_F$ rescaling factor from \cite{kimiteam2026kimik25visualagentic}. The other hyper-parameters were set to $(\beta_1, \beta_2)=(0.9, 0.95)$, momentum $\mu=0.8$. For the non-matrix parameters (embeddings, layer norms), both schedule-free methods run SF-AdamW. 
For the AdamW baseline, we tune the learning rate \textit{separately at each horizon}, using 2,500 warm up steps followed by cosine decay to zero; comparison of these runs can be found in \cref{fig:lr-sweep-adamw} in \cref{app:arch-hypers}. Other hyperparameters were tuned at $2 \times$ Chinchilla.
This represents a well-tuned baseline with per-horizon optimization, using optimized $\eta \in \{0.004, 0.008\}$, and $\beta_1=0.9$, $\beta_2=0.95$, and $\lambda=0.1$. Additional hyper-parameter tuning details and learning rate sweeps are provided in 
\cref{tab:hyperparam-space} in \cref{app:arch-hypers}. 

For the 772M parameter model, we reuse the optimal $8\times$ Chinchilla hyper-parameters from AdamW and the $2\times$-tuned hyper-parameters for schedule-free methods. This approach is motivated by the observation that losses near the optimal learning rates exhibit only small variation, and recent work has shown similar optimal learning rates across small and large Llama-2 model scales (see Table 7 and Table 35 in \cite{semenov2026benchmarking}). We omit the $8\times$ Chinchilla runs for the large model due to compute limitations.

\textbf{Results.}
\cref{tab:main-results} reports validation losses (evaluated at $X$) across training horizons. Note that the SF-NorMuon uses exactly the same learning rate and weight decay as SF-AdamW. For both models, SF-NorMuon with weight decay at $Z$ consistently outperforms SF-AdamW across all horizons. Crucially, SF-NorMuon matches the per-horizon tuned AdamW baselines (except at $8\times$ Chinchilla where it lags marginally), while SF-AdamW lags behind consistency. This demonstrates that SF-NorMuon delivers competitive performance with AdamW at any stopping point, without a horizon-specific learning rate schedule. For the 772M model, the speedup advantage of SF-NorMuon over SF-AdamW is even more pronounced: 50\% at $1\times$ and 52\% at $2\times$ Chinchilla.

\begin{wrapfigure}{l}{0.45\textwidth}
    \centering
    \vspace{-12pt}
    \includegraphics[width=0.45\textwidth]{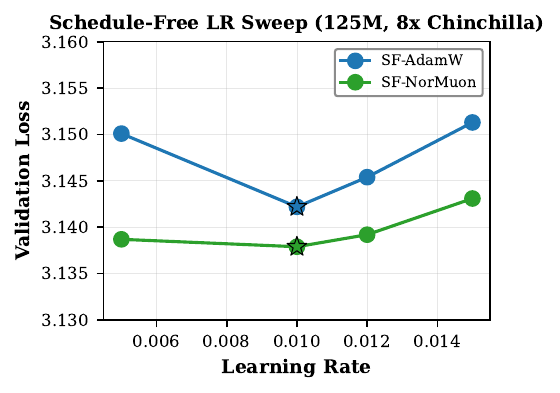}
    \caption{Learning rate sweep comparison between SF-AdamW and SF-NorMuon.
    }
    \label{fig:lr-sweep-sf}
    \vspace{-10pt}
\end{wrapfigure}
\textbf{Improvement across learning rates.}
A key practical advantage of SF-NorMuon is robustness to the choice of learning rate. \cref{fig:lr-sweep-sf} shows a learning-rate sweep at $8\times$ Chinchilla for both schedule-free methods: SF-NorMuon achieves lower validation loss than SF-AdamW across all learning rates tested.
Importantly, SF-NorMuon's performance degrades gracefully away from the optimum, confirming that the spectral geometry provides a favorable optimization landscape even without careful per-run tuning.

\textbf{Other findings.}
We present the full convergence proofs under both Frobenius and spectral smoothness in \cref{app:convergence}. Additional material includes: a self-contained derivation of spectral and sign update rules from first principles (\cref{app:app-metrized}), a review of SF-SGD and SF-AdamW (\cref{app:sf-review}), detailed model architectures and hyperparameter search spaces (\cref{app:arch-hypers}), an ablation study with a comparison against tuned NorMuon across all horizons (\cref{app:ablation}), and a PyTorch implementation of SF-NorMuon in \cref{algo:sf-normuon}(\cref{app:implementation}).

\section{Conclusion}
In this work, we proposed SF-NorMuon, a schedule-free spectral optimizer that matches per-horizon tuned AdamW across Chinchilla horizons without requiring schedule or horizon specification in advance. We proved a $\widetilde{O}(T^{-1/4})$ stationarity guarantee for the underlying schedule-free spectral dynamics for smooth non-convex functions and identified weight decay at the fast iterate is a necessity for long-horizon stability, which is a departure from the convex optimization theory. A limitation of this work is the diversity of model architectures and datasets; 
extending the validation of SF-NorMuon to more diverse settings, and considering larger scales and multi-stage continual learning pipelines are therefore a natural direction for future work.

\section*{Acknowledgments}
We thank Anthony Ashmore for insightful discussions on the importance of learning rate scheduling. We thank Aaron Defazio for valuable guidance on schedule-free methods. We are grateful to Pragna Subrahmanya for assistance with computing infrastructure. We thank Rob Otter for the executive support of the work, and our colleagues at the Global Technology Applied Research center of JPMorganChase
for support.

\section*{Author Contributions}
A.A. and J.L.K. conceived the algorithm. A.A. developed the codebase, conducted the language modeling experiments, and established the convergence and steady-state theory. P.D. performed additional experiments. 
J.L.K. carried out exploratory experiments, shaped the overall research direction and oversaw the presentation of the material. S.C. and N.K. provided feedback and contributed to discussions throughout the project. All authors contributed to writing the manuscript.

\section*{Disclaimer}
This paper was prepared for informational purposes with contributions from the Global Technology Applied Research center of JPMorgan Chase \& Co. This paper is not a product of the Research Department of JPMorgan Chase \& Co. or its affiliates. Neither JPMorgan Chase \& Co. nor any of its affiliates makes any explicit or implied representation or warranty and none of them accept any liability in connection with this paper, including, without limitation, with respect to the completeness, accuracy, or reliability of the information contained herein and the potential legal, compliance, tax, or accounting effects thereof. This document is not intended as investment research or investment advice, or as a recommendation, offer, or solicitation for the purchase or sale of any security, financial instrument, financial product or service, or to be used in any way for evaluating the merits of participating in any transaction.

\newpage 
\printbibliography

@article{lan2012optimal,
  title={An optimal method for stochastic composite optimization},
  author={Lan, Guanghui},
  journal={Mathematical Programming},
  volume={133},
  number={1},
  pages={365--397},
  year={2012},
  publisher={Springer}
}

@inproceedings{nesterov1983method,
  title={A method for solving the convex programming problem with convergence rate O (1/k2)},
  author={Nesterov, Yurii},
  booktitle={Dokl akad nauk Sssr},
  volume={269},
  pages={543},
  year={1983}
}

@article{hagele2024scaling,
  title={Scaling laws and compute-optimal training beyond fixed training durations},
  author={H{\"a}gele, Alexander and Bakouch, Elie and Kosson, Atli and Allal, Loubna B and Von Werra, Leandro and Jaggi, Martin},
  journal={Advances in Neural Information Processing Systems},
  volume={37},
  pages={76232--76264},
  year={2024}
}

@article{schaipp2025surprising,
  title={The surprising agreement between convex optimization theory and learning-rate scheduling for large model training},
  author={Schaipp, Fabian and H{\"a}gele, Alexander and Taylor, Adrien and Simsekli, Umut and Bach, Francis},
  journal={arXiv preprint arXiv:2501.18965},
  year={2025}
}

@article{kasimbeg2025far,
  title={How far away are truly hyperparameter-free learning algorithms?},
  author={Kasimbeg, Priya and Roulet, Vincent and Agarwal, Naman and Medapati, Sourabh and Pedregosa, Fabian and Agarwala, Atish and Dahl, George E},
  journal={arXiv preprint arXiv:2505.24005},
  year={2025}
}

@article{izmailov2018averaging,
  title={Averaging weights leads to wider optima and better generalization},
  author={Izmailov, Pavel and Podoprikhin, Dmitrii and Garipov, Timur and Vetrov, Dmitry and Wilson, Andrew Gordon},
  journal={arXiv preprint arXiv:1803.05407},
  year={2018}
}

@article{degris2024step,
  title={Step-size optimization for continual learning},
  author={Degris, Thomas and Javed, Khurram and Sharifnassab, Arsalan and Liu, Yuxin and Sutton, Richard},
  journal={arXiv preprint arXiv:2401.17401},
  year={2024}
}

@article{meterez2026anytime,
  title={Anytime Pretraining: Horizon-Free Learning-Rate Schedules with Weight Averaging},
  author={Meterez, Alexandru and Nair, Pranav Ajit and Morwani, Depen and Pehlevan, Cengiz and Kakade, Sham},
  journal={arXiv preprint arXiv:2602.03702},
  year={2026}
}

@article{defazio2024road,
  title={The road less scheduled},
  author={Defazio, Aaron and Yang, Xingyu and Mehta, Harsh and Mishchenko, Konstantin and Khaled, Ahmed and Cutkosky, Ashok},
  journal={Advances in Neural Information Processing Systems},
  volume={37},
  pages={9974--10007},
  year={2024}
}

@inproceedings{defazio2023learning,
  title={Learning-rate-free learning by d-adaptation},
  author={Defazio, Aaron and Mishchenko, Konstantin},
  booktitle={International Conference on Machine Learning},
  pages={7449--7479},
  year={2023},
  organization={PMLR}
}

@book{Goodfellow-et-al-2016,
    title={Deep Learning},
    author={Ian Goodfellow and Yoshua Bengio and Aaron Courville},
    publisher={MIT Press},
    note={\url{http://www.deeplearningbook.org}},
    year={2016}
}

@book{Hastie2009,
  title = {The Elements of Statistical Learning},
  ISBN = {9780387848587},
  ISSN = {2197-568X},
  url = {http://dx.doi.org/10.1007/978-0-387-84858-7},
  DOI = {10.1007/978-0-387-84858-7},
  journal = {Springer Series in Statistics},
  publisher = {Springer New York},
  author = {Hastie,  Trevor and Tibshirani,  Robert and Friedman,  Jerome},
  year = {2009}
}

@book{Bishop2024,
  title = {Deep Learning: Foundations and Concepts},
  ISBN = {9783031454684},
  url = {http://dx.doi.org/10.1007/978-3-031-45468-4},
  DOI = {10.1007/978-3-031-45468-4},
  publisher = {Springer International Publishing},
  author = {Bishop,  Christopher M. and Bishop,  Hugh},
  year = {2024}
}

@misc{Gao2024,
Author = {Shervin Minaee and Tomas Mikolov and Narjes Nikzad and Meysam Chenaghlu and Richard Socher and Xavier Amatriain and Jianfeng Gao},
Title = {Large Language Models: A Survey},
Year = {2024},
Eprint = {arXiv:2402.06196},
}

@misc{Wen2023,
Author = {Wayne Xin Zhao and Kun Zhou and Junyi Li and Tianyi Tang and Xiaolei Wang and Yupeng Hou and Yingqian Min and Beichen Zhang and Junjie Zhang and Zican Dong and Yifan Du and Chen Yang and Yushuo Chen and Zhipeng Chen and Jinhao Jiang and Ruiyang Ren and Yifan Li and Xinyu Tang and Zikang Liu and Peiyu Liu and Jian-Yun Nie and Ji-Rong Wen},
Title = {A Survey of Large Language Models},
Year = {2023},
Eprint = {arXiv:2303.18223},
}

@article{brown2020language,
  title={Language models are few-shot learners},
  author={Brown, Tom and Mann, Benjamin and Ryder, Nick and Subbiah, Melanie and Kaplan, Jared D and Dhariwal, Prafulla and Neelakantan, Arvind and Shyam, Pranav and Sastry, Girish and Askell, Amanda and others},
  journal={Advances in neural information processing systems},
  volume={33},
  pages={1877--1901},
  year={2020}
}

@misc{touvron2023,
      title={LLaMA: Open and Efficient Foundation Language Models}, 
      author={Hugo Touvron and Thibaut Lavril and Gautier Izacard and Xavier Martinet and Marie-Anne Lachaux and Timothée Lacroix and Baptiste Rozière and Naman Goyal and Eric Hambro and Faisal Azhar and Aurelien Rodriguez and Armand Joulin and Edouard Grave and Guillaume Lample},
      year={2023},
      eprint={2302.13971},
      archivePrefix={arXiv},
      primaryClass={cs.CL},
      url={https://arxiv.org/abs/2302.13971}, 
}

@misc{Brohan2023,
Author = {Anthony Brohan and Noah Brown and Justice Carbajal and Yevgen Chebotar and Xi Chen and Krzysztof Choromanski and Tianli Ding and Danny Driess and Avinava Dubey and Chelsea Finn and Pete Florence and Chuyuan Fu and Montse Gonzalez Arenas and Keerthana Gopalakrishnan and Kehang Han and Karol Hausman and Alexander Herzog and Jasmine Hsu and Brian Ichter and Alex Irpan and Nikhil Joshi and Ryan Julian and Dmitry Kalashnikov and Yuheng Kuang and Isabel Leal and Lisa Lee and Tsang-Wei Edward Lee and Sergey Levine and Yao Lu and Henryk Michalewski and Igor Mordatch and Karl Pertsch and Kanishka Rao and Krista Reymann and Michael Ryoo and Grecia Salazar and Pannag Sanketi and Pierre Sermanet and Jaspiar Singh and Anikait Singh and Radu Soricut and Huong Tran and Vincent Vanhoucke and Quan Vuong and Ayzaan Wahid and Stefan Welker and Paul Wohlhart and Jialin Wu and Fei Xia and Ted Xiao and Peng Xu and Sichun Xu and Tianhe Yu and Brianna Zitkovich},
Title = {RT-2: Vision-Language-Action Models Transfer Web Knowledge to Robotic Control},
Year = {2023},
Eprint = {arXiv:2307.15818},
}

@misc{King2024,
Author = {Yueen Ma and Zixing Song and Yuzheng Zhuang and Jianye Hao and Irwin King},
Title = {A Survey on Vision-Language-Action Models for Embodied AI},
Year = {2024},
Eprint = {arXiv:2405.14093},
}

@misc{Finn2024,
Author = {Moo Jin Kim and Karl Pertsch and Siddharth Karamcheti and Ted Xiao and Ashwin Balakrishna and Suraj Nair and Rafael Rafailov and Ethan Foster and Grace Lam and Pannag Sanketi and Quan Vuong and Thomas Kollar and Benjamin Burchfiel and Russ Tedrake and Dorsa Sadigh and Sergey Levine and Percy Liang and Chelsea Finn},
Title = {OpenVLA: An Open-Source Vision-Language-Action Model},
Year = {2024},
Eprint = {arXiv:2406.09246},
}

@article{Parisi2019,
  title = {Continual lifelong learning with neural networks: A review},
  volume = {113},
  ISSN = {0893-6080},
  url = {http://dx.doi.org/10.1016/j.neunet.2019.01.012},
  DOI = {10.1016/j.neunet.2019.01.012},
  journal = {Neural Networks},
  publisher = {Elsevier BV},
  author = {Parisi,  German I. and Kemker,  Ronald and Part,  Jose L. and Kanan,  Christopher and Wermter,  Stefan},
  year = {2019},
  month = may,
  pages = {54–71}
}

@misc{wang2024continuallearning,
      title={A Comprehensive Survey of Continual Learning: Theory, Method and Application}, 
      author={Liyuan Wang and Xingxing Zhang and Hang Su and Jun Zhu},
      year={2024},
      eprint={2302.00487},
      archivePrefix={arXiv},
      primaryClass={cs.LG},
      url={https://arxiv.org/abs/2302.00487}, 
}

@article{Kirkpatrick2017,
  title = {Overcoming catastrophic forgetting in neural networks},
  volume = {114},
  ISSN = {1091-6490},
  url = {http://dx.doi.org/10.1073/pnas.1611835114},
  DOI = {10.1073/pnas.1611835114},
  number = {13},
  journal = {Proceedings of the National Academy of Sciences},
  publisher = {Proceedings of the National Academy of Sciences},
  author = {Kirkpatrick,  James and Pascanu,  Razvan and Rabinowitz,  Neil and Veness,  Joel and Desjardins,  Guillaume and Rusu,  Andrei A. and Milan,  Kieran and Quan,  John and Ramalho,  Tiago and Grabska-Barwinska,  Agnieszka and Hassabis,  Demis and Clopath,  Claudia and Kumaran,  Dharshan and Hadsell,  Raia},
  year = {2017},
  month = mar,
  pages = {3521–3526}
}

@article{Delange2021,
   title={A continual learning survey: Defying forgetting in classification tasks},
   ISSN={1939-3539},
   url={http://dx.doi.org/10.1109/TPAMI.2021.3057446},
   DOI={10.1109/tpami.2021.3057446},
   journal={IEEE Transactions on Pattern Analysis and Machine Intelligence},
   publisher={Institute of Electrical and Electronics Engineers (IEEE)},
   author={Delange, Matthias and Aljundi, Rahaf and Masana, Marc and Parisot, Sarah and Jia, Xu and Leonardis, Ales and Slabaugh, Greg and Tuytelaars, Tinne},
   year={2021},
}

@inproceedings{
loshchilov2017cosine,
title={{SGDR}: Stochastic Gradient Descent with Warm Restarts},
author={Ilya Loshchilov and Frank Hutter},
booktitle={International Conference on Learning Representations},
year={2017},
url={https://openreview.net/forum?id=Skq89Scxx}
}

@misc{hu2024WSD,
      title={MiniCPM: Unveiling the Potential of Small Language Models with Scalable Training Strategies}, 
      author={Shengding Hu and Yuge Tu and Xu Han and Chaoqun He and Ganqu Cui and Xiang Long and Zhi Zheng and Yewei Fang and Yuxiang Huang and Weilin Zhao and Xinrong Zhang and Zheng Leng Thai and Kaihuo Zhang and Chongyi Wang and Yuan Yao and Chenyang Zhao and Jie Zhou and Jie Cai and Zhongwu Zhai and Ning Ding and Chao Jia and Guoyang Zeng and Dahai Li and Zhiyuan Liu and Maosong Sun},
      year={2024},
      eprint={2404.06395},
      archivePrefix={arXiv},
      primaryClass={cs.CL},
      url={https://arxiv.org/abs/2404.06395}, 
}

@misc{hoffmann2022,
      title={Training Compute-Optimal Large Language Models}, 
      author={Jordan Hoffmann and Sebastian Borgeaud and Arthur Mensch and Elena Buchatskaya and Trevor Cai and Eliza Rutherford and Diego de Las Casas and Lisa Anne Hendricks and Johannes Welbl and Aidan Clark and Tom Hennigan and Eric Noland and Katie Millican and George van den Driessche and Bogdan Damoc and Aurelia Guy and Simon Osindero and Karen Simonyan and Erich Elsen and Jack W. Rae and Oriol Vinyals and Laurent Sifre},
      year={2022},
      eprint={2203.15556},
      archivePrefix={arXiv},
      primaryClass={cs.CL},
      url={https://arxiv.org/abs/2203.15556}, 
}

@misc{li2025modelmergingpretraininglarge,
      title={Model Merging in Pre-training of Large Language Models}, 
      author={Yunshui Li and Yiyuan Ma and Shen Yan and Chaoyi Zhang and Jing Liu and Jianqiao Lu and Ziwen Xu and Mengzhao Chen and Minrui Wang and Shiyi Zhan and Jin Ma and Xunhao Lai and Deyi Liu and Yao Luo and Xingyan Bin and Hongbin Ren and Mingji Han and Wenhao Hao and Bairen Yi and LingJun Liu and Bole Ma and Xiaoying Jia and Xun Zhou and Siyuan Qiao and Liang Xiang and Yonghui Wu},
      year={2025},
      eprint={2505.12082},
      archivePrefix={arXiv},
      primaryClass={cs.CL},
      url={https://arxiv.org/abs/2505.12082}, 
}

@inproceedings{Kasimbeg2025AlgoPerfResults,
title           = {Accelerating neural network training: An analysis of the {AlgoPerf} competition},
author          = {Kasimbeg, Priya and Schneider, Frank and Eschenhagen, Runa and Bae, Juhan and Sastry, Chandramouli Shama and Saroufim, Mark and Boyuan, Feng and Wright, Less and Yang, Edward Z. and Nado, Zachary and Medapati, Sourabh and Hennig, Philipp and Rabbat, Michael and Dahl, George E.},
booktitle       = {The Thirteenth International Conference on Learning Representations},
year            = {2025},
url             = {https://openreview.net/forum?id=CtM5xjRSfm}
}

@misc{hagele2024scalinglawscomputeoptimaltraining,
      title={Scaling Laws and Compute-Optimal Training Beyond Fixed Training Durations}, 
      author={Alexander Hägele and Elie Bakouch and Atli Kosson and Loubna Ben Allal and Leandro Von Werra and Martin Jaggi},
      year={2024},
      eprint={2405.18392},
      archivePrefix={arXiv},
      primaryClass={cs.LG},
      url={https://arxiv.org/abs/2405.18392}, 
}

@misc{
semenov2026benchmarking,
title={Benchmarking Optimizers for Large Language Model Pretraining},
author={Andrei Semenov and Matteo Pagliardini and Martin Jaggi},
year={2026},
url={https://openreview.net/forum?id=Jw7khYzYzl}
}

@inproceedings{
song2025through,
title={Through the River: Understanding the Benefit of Schedule-Free Methods for Language Model Training},
author={Minhak Song and Beomhan Baek and Kwangjun Ahn and Chulhee Yun},
booktitle={High-dimensional Learning Dynamics 2025},
year={2025},
url={https://openreview.net/forum?id=b5HYeRzG9M}
}

@inproceedings{kingma2015adam,
  author    = {Kingma, Diederik P. and Ba, Jimmy},
  title     = {Adam: A Method for Stochastic Optimization},
  booktitle = {International Conference on Learning Representations (ICLR)},
  year      = {2015},
  url       = {https://arxiv.org/abs/1412.6980}
}

@inproceedings{
loshchilov2018decoupled,
title={Decoupled Weight Decay Regularization},
author={Ilya Loshchilov and Frank Hutter},
booktitle={International Conference on Learning Representations},
year={2019},
url={https://openreview.net/forum?id=Bkg6RiCqY7},
}

@misc{dangelo2024,
      title={Why Do We Need Weight Decay in Modern Deep Learning?}, 
      author={Francesco D'Angelo and Maksym Andriushchenko and Aditya Varre and Nicolas Flammarion},
      year={2024},
      eprint={2310.04415},
      archivePrefix={arXiv},
      primaryClass={cs.LG},
      url={https://arxiv.org/abs/2310.04415}, 
}

@misc{qiu2026,
      title={Hyperparameter Transfer Enables Consistent Gains of Matrix-Preconditioned Optimizers Across Scales}, 
      author={Shikai Qiu and Zixi Chen and Hoang Phan and Qi Lei and Andrew Gordon Wilson},
      year={2026},
      eprint={2512.05620},
      archivePrefix={arXiv},
      primaryClass={cs.LG},
      url={https://arxiv.org/abs/2512.05620}, 
}

@misc{liu2025muonscalablellmtraining,
      title={Muon is Scalable for LLM Training}, 
      author={Jingyuan Liu and Jianlin Su and Xingcheng Yao and Zhejun Jiang and Guokun Lai and Yulun Du and Yidao Qin and Weixin Xu and Enzhe Lu and Junjie Yan and Yanru Chen and Huabin Zheng and Yibo Liu and Shaowei Liu and Bohong Yin and Weiran He and Han Zhu and Yuzhi Wang and Jianzhou Wang and Mengnan Dong and Zheng Zhang and Yongsheng Kang and Hao Zhang and Xinran Xu and Yutao Zhang and Yuxin Wu and Xinyu Zhou and Zhilin Yang},
      year={2025},
      eprint={2502.16982},
      archivePrefix={arXiv},
      primaryClass={cs.LG},
      url={https://arxiv.org/abs/2502.16982}, 
}

@misc{kimiteam2026kimik25visualagentic,
      title={Kimi K2.5: Visual Agentic Intelligence}, 
      author={Kimi Team},
      year={2026},
      eprint={2602.02276},
      archivePrefix={arXiv},
      primaryClass={cs.CL},
      url={https://arxiv.org/abs/2602.02276}, 
}

@misc{singh2026arceetrinitylargetechnical,
      title={Arcee Trinity Large Technical Report}, 
      author={Varun Singh and Lucas Krauss and Sami Jaghouar and Matej Sirovatka and Charles Goddard and Fares Obied and Jack Min Ong and Jannik Straube and Fern and Aria Harley and Conner Stewart and Colin Kealty and Maziyar Panahi and Simon Kirsten and Anushka Deshpande and Anneketh Vij and Arthur Bresnu and Pranav Veldurthi and Raghav Ravishankar and Hardik Bishnoi and DatologyAI Team and Arcee AI Team and Prime Intellect Team and Mark McQuade and Johannes Hagemann and Lucas Atkins},
      year={2026},
      eprint={2602.17004},
      archivePrefix={arXiv},
      primaryClass={cs.LG},
      url={https://arxiv.org/abs/2602.17004}, 
}

@misc{glm5team2026glm5vibecodingagentic,
      title={GLM-5: from Vibe Coding to Agentic Engineering}, 
      author={GLM-5-Team},
      year={2026},
      eprint={2602.15763},
      archivePrefix={arXiv},
      primaryClass={cs.LG},
      url={https://arxiv.org/abs/2602.15763}, 
}

@misc{jordan2024muon,
  author       = {Jordan, Keller and others},
  title        = {Muon: An optimizer for hidden layers in neural networks},
  year         = {2024},
  howpublished = {\url{https://kellerjordan.github.io/posts/muon/}},
  note         = {Accessed: 2026-01-25}
}

@misc{jordan2024muon_repo,
  author       = {Jordan, Keller and others},
  title        = {Muon (GitHub repository): An optimizer for hidden layers in neural networks},
  year         = {2024},
  url          = {https://github.com/KellerJordan/Muon},
  note         = {GitHub repository, master branch},
  urldate      = {2026-01-25}
}

@article{ahn2025dion,
  title={Dion: Distributed Orthonormalized Updates},
  author={Ahn, Kwangjun and Xu, Byron and Abreu, Natalie and Langford, John},
  journal={arXiv preprint: 2504.05295},
  year={2025}
}

@misc{bernstein2024oldoptimizernewnorm,
      title={Old Optimizer, New Norm: An Anthology}, 
      author={Jeremy Bernstein and Laker Newhouse},
      year={2024},
      eprint={2409.20325},
      archivePrefix={arXiv},
      primaryClass={cs.LG},
      url={https://arxiv.org/abs/2409.20325}, 
}

@misc{gupta2018shampoopreconditionedstochastictensor,
      title={Shampoo: Preconditioned Stochastic Tensor Optimization}, 
      author={Vineet Gupta and Tomer Koren and Yoram Singer},
      year={2018},
      eprint={1802.09568},
      archivePrefix={arXiv},
      primaryClass={cs.LG},
      url={https://arxiv.org/abs/1802.09568}, 
}

@misc{davis2026spectralgradientupdateshelp,
      title={When do spectral gradient updates help in deep learning?}, 
      author={Damek Davis and Dmitriy Drusvyatskiy},
      year={2026},
      eprint={2512.04299},
      archivePrefix={arXiv},
      primaryClass={cs.LG},
      url={https://arxiv.org/abs/2512.04299}, 
}

@misc{Sagun2017,
Author = {Levent Sagun and Utku Evci and V. Ugur Guney and Yann Dauphin and Leon Bottou},
Title = {Empirical Analysis of the Hessian of Over-Parametrized Neural Networks},
Year = {2017},
Eprint = {arXiv:1706.04454},
}

@InProceedings{ghorbani19b,
  title = 	 {An Investigation into Neural Net Optimization via Hessian Eigenvalue Density},
  author =       {Ghorbani, Behrooz and Krishnan, Shankar and Xiao, Ying},
  booktitle = 	 {Proceedings of the 36th International Conference on Machine Learning},
  pages = 	 {2232--2241},
  year = 	 {2019},
  editor = 	 {Chaudhuri, Kamalika and Salakhutdinov, Ruslan},
  volume = 	 {97},
  series = 	 {Proceedings of Machine Learning Research},
  publisher =    {PMLR},
  url = 	 {https://proceedings.mlr.press/v97/ghorbani19b.html},
}

@misc{chang2026convergencemuon,
      title={On the Convergence of Muon and Beyond}, 
      author={Da Chang and Yongxiang Liu and Ganzhao Yuan},
      year={2026},
      eprint={2509.15816},
      archivePrefix={arXiv},
      primaryClass={cs.LG},
      url={https://arxiv.org/abs/2509.15816}, 
}

@misc{shen2026convergenceanalysismuon,
      title={On the Convergence Analysis of Muon}, 
      author={Wei Shen and Ruichuan Huang and Minhui Huang and Cong Shen and Jiawei Zhang},
      year={2026},
      eprint={2505.23737},
      archivePrefix={arXiv},
      primaryClass={stat.ML},
      url={https://arxiv.org/abs/2505.23737}, 
}

@misc{sato2025convergenceboundcriticalbatch,
      title={Convergence Bound and Critical Batch Size of Muon Optimizer}, 
      author={Naoki Sato and Hiroki Naganuma and Hideaki Iiduka},
      year={2025},
      eprint={2507.01598},
      archivePrefix={arXiv},
      primaryClass={cs.LG},
      url={https://arxiv.org/abs/2507.01598}, 
}

@misc{li2025noteconvergencemuon,
      title={A Note on the Convergence of Muon}, 
      author={Jiaxiang Li and Mingyi Hong},
      year={2025},
      eprint={2502.02900},
      archivePrefix={arXiv},
      primaryClass={math.OC},
      url={https://arxiv.org/abs/2502.02900}, 
}

@misc{li2025normuonmakingmuonefficient,
      title={NorMuon: Making Muon more efficient and scalable}, 
      author={Zichong Li and Liming Liu and Chen Liang and Weizhu Chen and Tuo Zhao},
      year={2025},
      eprint={2510.05491},
      archivePrefix={arXiv},
      primaryClass={cs.LG},
      url={https://arxiv.org/abs/2510.05491}, 
}

@misc{si2025adamuonadaptivemuonoptimizer,
      title={AdaMuon: Adaptive Muon Optimizer}, 
      author={Chongjie Si and Debing Zhang and Wei Shen},
      year={2025},
      eprint={2507.11005},
      archivePrefix={arXiv},
      primaryClass={cs.LG},
      url={https://arxiv.org/abs/2507.11005}, 
}

@article{chen2023symbolic,
  title={Symbolic discovery of optimization algorithms},
  author={Chen, Xiangning and Liang, Chen and Huang, Da and Real, Esteban and Wang, Kaiyuan and Pham, Hieu and Dong, Xuanyi and Luong, Thang and Hsieh, Cho-Jui and Lu, Yifeng and others},
  journal={Advances in neural information processing systems},
  volume={36},
  pages={49205--49233},
  year={2023}
}

@article{goyal2017accurate,
  title={Accurate, large minibatch sgd: Training imagenet in 1 hour},
  author={Goyal, Priya and Doll{\'a}r, Piotr and Girshick, Ross and Noordhuis, Pieter and Wesolowski, Lukasz and Kyrola, Aapo and Tulloch, Andrew and Jia, Yangqing and He, Kaiming},
  journal={arXiv preprint arXiv:1706.02677},
  year={2017}
}

@article{kalra2024warmup,
  title={Why warmup the learning rate? underlying mechanisms and improvements},
  author={Kalra, Dayal Singh and Barkeshli, Maissam},
  journal={Advances in Neural Information Processing Systems},
  volume={37},
  pages={111760--111801},
  year={2024}
}

@inproceedings{Krogh1991,
author = {Krogh, Anders and Hertz, John A.},
title = {A simple weight decay can improve generalization},
year = {1991},
isbn = {1558602224},
publisher = {Morgan Kaufmann Publishers Inc.},
address = {San Francisco, CA, USA},
booktitle = {Proceedings of the 5th International Conference on Neural Information Processing Systems},
pages = {950–957},
numpages = {8},
location = {Denver, Colorado},
series = {NIPS'91}
}

@misc{an2025asgoadaptivestructuredgradient,
      title={ASGO: Adaptive Structured Gradient Optimization}, 
      author={Kang An and Yuxing Liu and Rui Pan and Yi Ren and Shiqian Ma and Donald Goldfarb and Tong Zhang},
      year={2025},
      eprint={2503.20762},
      archivePrefix={arXiv},
      primaryClass={cs.LG},
      url={https://arxiv.org/abs/2503.20762}, 
}

@misc{touvron2023llama2openfoundation,
      title={Llama 2: Open Foundation and Fine-Tuned Chat Models}, 
      author={Hugo Touvron and Louis Martin and Kevin Stone and Peter Albert and Amjad Almahairi and Yasmine Babaei and Nikolay Bashlykov and Soumya Batra and Prajjwal Bhargava and Shruti Bhosale and Dan Bikel and Lukas Blecher and Cristian Canton Ferrer and Moya Chen and Guillem Cucurull and David Esiobu and Jude Fernandes and Jeremy Fu and Wenyin Fu and Brian Fuller and Cynthia Gao and Vedanuj Goswami and Naman Goyal and Anthony Hartshorn and Saghar Hosseini and Rui Hou and Hakan Inan and Marcin Kardas and Viktor Kerkez and Madian Khabsa and Isabel Kloumann and Artem Korenev and Punit Singh Koura and Marie-Anne Lachaux and Thibaut Lavril and Jenya Lee and Diana Liskovich and Yinghai Lu and Yuning Mao and Xavier Martinet and Todor Mihaylov and Pushkar Mishra and Igor Molybog and Yixin Nie and Andrew Poulton and Jeremy Reizenstein and Rashi Rungta and Kalyan Saladi and Alan Schelten and Ruan Silva and Eric Michael Smith and Ranjan Subramanian and Xiaoqing Ellen Tan and Binh Tang and Ross Taylor and Adina Williams and Jian Xiang Kuan and Puxin Xu and Zheng Yan and Iliyan Zarov and Yuchen Zhang and Angela Fan and Melanie Kambadur and Sharan Narang and Aurelien Rodriguez and Robert Stojnic and Sergey Edunov and Thomas Scialom},
      year={2023},
      eprint={2307.09288},
      archivePrefix={arXiv},
      primaryClass={cs.CL},
      url={https://arxiv.org/abs/2307.09288}, 
}

@misc{zhang2019rootmeansquarelayer,
      title={Root Mean Square Layer Normalization}, 
      author={Biao Zhang and Rico Sennrich},
      year={2019},
      eprint={1910.07467},
      archivePrefix={arXiv},
      primaryClass={cs.LG},
      url={https://arxiv.org/abs/1910.07467}, 
}

@misc{penedo2024finewebdatasetsdecantingweb,
      title={The FineWeb Datasets: Decanting the Web for the Finest Text Data at Scale}, 
      author={Guilherme Penedo and Hynek Kydlíček and Loubna Ben allal and Anton Lozhkov and Margaret Mitchell and Colin Raffel and Leandro Von Werra and Thomas Wolf},
      year={2024},
      eprint={2406.17557},
      archivePrefix={arXiv},
      primaryClass={cs.CL},
      url={https://arxiv.org/abs/2406.17557}, 
}

@article{radford2019language,
  title={Language models are unsupervised multitask learners},
  author={Radford, Alec and Wu, Jeffrey and Child, Rewon and Luan, David and Amodei, Dario and Sutskever, Ilya and others},
  year={2019},
  publisher={OpenAI}
}

@misc{zhang2024relu2,
      title={ReLU$^2$ Wins: Discovering Efficient Activation Functions for Sparse LLMs}, 
      author={Zhengyan Zhang and Yixin Song and Guanghui Yu and Xu Han and Yankai Lin and Chaojun Xiao and Chenyang Song and Zhiyuan Liu and Zeyu Mi and Maosong Sun},
      year={2024},
      eprint={2402.03804},
      archivePrefix={arXiv},
      primaryClass={cs.LG},
      url={https://arxiv.org/abs/2402.03804}, 
}

@misc{defazio2025gradientsrapidlyincreasenear,
      title={Why Gradients Rapidly Increase Near the End of Training}, 
      author={Aaron Defazio},
      year={2025},
      eprint={2506.02285},
      archivePrefix={arXiv},
      primaryClass={cs.LG},
      url={https://arxiv.org/abs/2506.02285}, 
}

@misc{vanlaarhoven2017l2regularizationversusbatch,
      title={L2 Regularization versus Batch and Weight Normalization}, 
      author={Twan van Laarhoven},
      year={2017},
      eprint={1706.05350},
      archivePrefix={arXiv},
      primaryClass={cs.LG},
      url={https://arxiv.org/abs/1706.05350}, 
}

@article{Choquette2021,
  title = {NVIDIA A100 Tensor Core GPU: Performance and Innovation},
  volume = {41},
  ISSN = {1937-4143},
  url = {http://dx.doi.org/10.1109/MM.2021.3061394},
  DOI = {10.1109/mm.2021.3061394},
  number = {2},
  journal = {IEEE Micro},
  publisher = {Institute of Electrical and Electronics Engineers (IEEE)},
  author = {Choquette,  Jack and Gandhi,  Wishwesh and Giroux,  Olivier and Stam,  Nick and Krashinsky,  Ronny},
  year = {2021},
  month = Mar,
  pages = {29–35}
}

@inproceedings{Ansel2024,
  series = {ASPLOS ’24},
  title = {PyTorch 2: Faster Machine Learning Through Dynamic Python Bytecode Transformation and Graph Compilation},
  url = {http://dx.doi.org/10.1145/3620665.3640366},
  DOI = {10.1145/3620665.3640366},
  booktitle = {Proceedings of the 29th ACM International Conference on Architectural Support for Programming Languages and Operating Systems,  Volume 2},
  publisher = {ACM},
  author = {Ansel,  Jason and Yang,  Edward and He,  Horace and Gimelshein,  Natalia and Jain,  Animesh and Voznesensky,  Michael and Bao,  Bin and Bell,  Peter and Berard,  David and Burovski,  Evgeni and Chauhan,  Geeta and Chourdia,  Anjali and Constable,  Will and Desmaison,  Alban and DeVito,  Zachary and Ellison,  Elias and Feng,  Will and Gong,  Jiong and Gschwind,  Michael and Hirsh,  Brian and Huang,  Sherlock and Kalambarkar,  Kshiteej and Kirsch,  Laurent and Lazos,  Michael and Lezcano,  Mario and Liang,  Yanbo and Liang,  Jason and Lu,  Yinghai and Luk,  C. K. and Maher,  Bert and Pan,  Yunjie and Puhrsch,  Christian and Reso,  Matthias and Saroufim,  Mark and Siraichi,  Marcos Yukio and Suk,  Helen and Zhang,  Shunting and Suo,  Michael and Tillet,  Phil and Zhao,  Xu and Wang,  Eikan and Zhou,  Keren and Zou,  Richard and Wang,  Xiaodong and Mathews,  Ajit and Wen,  William and Chanan,  Gregory and Wu,  Peng and Chintala,  Soumith},
  year = {2024},
  month = Apr,
  pages = {929–947},
  collection = {ASPLOS ’24}
}

@misc{su2023roformerenhancedtransformerrotary,
      title={RoFormer: Enhanced Transformer with Rotary Position Embedding}, 
      author={Jianlin Su and Yu Lu and Shengfeng Pan and Ahmed Murtadha and Bo Wen and Yunfeng Liu},
      year={2023},
      eprint={2104.09864},
      archivePrefix={arXiv},
      primaryClass={cs.CL},
      url={https://arxiv.org/abs/2104.09864}, 
}

@misc{you2019doeslearningratedecay,
      title={How Does Learning Rate Decay Help Modern Neural Networks?}, 
      author={Kaichao You and Mingsheng Long and Jianmin Wang and Michael I. Jordan},
      year={2019},
      eprint={1908.01878},
      archivePrefix={arXiv},
      primaryClass={cs.LG},
      url={https://arxiv.org/abs/1908.01878}, 
}

@misc{pethick2025trainingdeeplearningmodels,
      title={Training Deep Learning Models with Norm-Constrained LMOs}, 
      author={Thomas Pethick and Wanyun Xie and Kimon Antonakopoulos and Zhenyu Zhu and Antonio Silveti-Falls and Volkan Cevher},
      year={2025},
      eprint={2502.07529},
      archivePrefix={arXiv},
      primaryClass={cs.LG},
      url={https://arxiv.org/abs/2502.07529}, 
}

@misc{Karpathy2022,
  author = {Andrej Karpathy},
  title = {\text{NanoGPT}},
  year = {2022},
  publisher = {GitHub},
  journal = {GitHub repository},
  howpublished = {\url{https://github.com/karpathy/nanoGPT}},
  commit = {325be85d9be8c81b436728a420e85796c57dba7e}
}

@misc{modded_nanogpt_2024,
  author       = {Keller Jordan and Jeremy Bernstein and Brendan Rappazzo and
                  @fernbear.bsky.social and Boza Vlado and You Jiacheng and
                  Franz Cesista and Braden Koszarsky and @Grad62304977},
  title        = {modded-nanogpt: Speedrunning the NanoGPT baseline},
  year         = {2024},
  url          = {https://github.com/KellerJordan/modded-nanogpt}
}

@misc{ahn2025dion2simplemethodshrink,
      title={Dion2: A Simple Method to Shrink Matrix in Muon}, 
      author={Kwangjun Ahn and Noah Amsel and John Langford},
      year={2025},
      eprint={2512.16928},
      archivePrefix={arXiv},
      primaryClass={cs.LG},
      url={https://arxiv.org/abs/2512.16928}, 
}

@misc{ren2026rethinkinglanguagemodelscaling,
      title={Rethinking Language Model Scaling under Transferable Hypersphere Optimization}, 
      author={Liliang Ren and Yang Liu and Yelong Shen and Weizhu Chen},
      year={2026},
      eprint={2603.28743},
      archivePrefix={arXiv},
      primaryClass={cs.LG},
      url={https://arxiv.org/abs/2603.28743}, 
}

@misc{amsel2026polarexpressoptimalmatrix,
      title={The Polar Express: Optimal Matrix Sign Methods and Their Application to the Muon Algorithm}, 
      author={Noah Amsel and David Persson and Christopher Musco and Robert M. Gower},
      year={2026},
      eprint={2505.16932},
      archivePrefix={arXiv},
      primaryClass={cs.LG},
      url={https://arxiv.org/abs/2505.16932}, 
}

@misc{muon2boostingmuonadaptive,
      title={Muon$^2$: Boosting Muon via Adaptive Second-Moment Preconditioning}, 
      author={Ziyue Liu and Ruijie Zhang and Zhengyang Wang and Yequan Zhao and Yupeng Su and Zi Yang and Zheng Zhang},
      year={2026},
      eprint={2604.09967},
      archivePrefix={arXiv},
      primaryClass={cs.LG},
      url={https://arxiv.org/abs/2604.09967}, 
}

@misc{khaled2025muonbpfastermuonblockperiodic,
      title={MuonBP: Faster Muon via Block-Periodic Orthogonalization}, 
      author={Ahmed Khaled and Kaan Ozkara and Tao Yu and Mingyi Hong and Youngsuk Park},
      year={2025},
      eprint={2510.16981},
      archivePrefix={arXiv},
      primaryClass={cs.LG},
      url={https://arxiv.org/abs/2510.16981}, 
}

@inproceedings{shazeer2017outrageously,
  title={Outrageously large neural networks: The sparsely-gated mixture-of-experts layer},
  author={Shazeer, Noam and Mirhoseini, Azalia and Mahdavi, Krzysztof and Davis, Andrew and Le, Quoc and Hinton, Geoffrey and Dean, Jeff},
  booktitle={International Conference on Learning Representations},
  year={2017}
}

@inproceedings{lepikhin2021gshard,
  title={{GS}hard: Scaling giant models with conditional computation and automatic sharding},
  author={Lepikhin, Dmitry and Lee, HyoukJoong and Xu, Yuanzhong and Chen, Dehao and Firat, Orhan and Huang, Yanping and Krikun, Maxim and Shazeer, Noam and Chen, Zhifeng},
  booktitle={International Conference on Learning Representations},
  year={2021}
}

@article{fedus2022switch,
  title={Switch {T}ransformers: scaling to trillion parameter models with simple and efficient sparsity},
  author={Fedus, William and Zoph, Barret and Shazeer, Noam},
  journal={Journal of Machine Learning Research},
  volume={23},
  number={120},
  pages={1--39},
  year={2022}
}

@misc{kravatskiy2025kyfannormsbeyond,
      title={The Ky Fan Norms and Beyond: Dual Norms and Combinations for Matrix Optimization}, 
      author={Alexey Kravatskiy and Ivan Kozyrev and Nikolai Kozlov and Alexander Vinogradov and Daniil Merkulov and Ivan Oseledets},
      year={2025},
      eprint={2512.09678},
      archivePrefix={arXiv},
      primaryClass={math.OC},
      url={https://arxiv.org/abs/2512.09678}, 
}

\newpage 
\appendix
\crefalias{section}{appendix}
\noindent\makebox[\linewidth]{\rule{\linewidth}{1.5pt}}
\section*{\centering \large Technical Appendices and Supplementary Material for\\ ``Anytime Training with Schedule-Free Spectral Optimization''}
\noindent\makebox[\linewidth]{\rule{\linewidth}{1.5pt}}

In this appendix, we provide more thorough preliminaries and background information that supplement the main text. We also state the formal version of the informal theorems from the main text, as well as their proofs. This appendix is organized as follows: 
\begin{itemize}
    \item In \cref{app:app-metrized}, we review metrized deep learning, and derive optimizers from norms on weight matrices. 
    \item In \cref{app:sf-review} , we review the Schedule-free method. In particular, we discuss SF-SGD and its convergence bound, and the algorithm for SF-AdamW. 
    \item In \cref{app:convergence} proves in the detail the convergence of SF-Spectral Descent, for noisy Lipschitz smooth functions in both the Frobenius and spectral norm. 
    In addition, we prove the lemmas used in the steady-state analysis. 
    \item In \cref{app:arch-hypers},  we review the architecture of the language models used in our experiments and present the hyper-parameter selection procedure. 
    \item In \cref{app:ablation}, studies the effect of ablating momentum term for SF-NorMuon (setting $\mu=0$), and ablating the row-normalization. Furthermore, we present a detailed comparison against NorMuon with cosine decay of learning rate.  
    \item In \cref{app:implementation}, we present a PyTorch implementation of SF-NorMuon algorithm. 
\end{itemize}

\section{Optimizers from Weight-Space Matrix Geometry}\label{app:app-metrized}

For the reader's convenience, we collect here the basic derivations of the
update rules discussed in the main text. The starting point is always the same: given a
current iterate \(W_t\) and gradient \(G_t := \nabla f(W_t)\), we choose an update
\(\Delta W_t\) by minimizing the first-order model of the loss subject to a constraint,
or equivalently a quadratic penalty, in the geometry of interest. Our treatment follows the work of Bernstein and Newhouse \cite{bernstein2024oldoptimizernewnorm}. 

\subsection{Steepest descent in a prescribed geometry}

Let \(\|\cdot\|_{\mathcal X}\) be any norm on the space of matrices. The corresponding
norm-constrained steepest-descent step is
\begin{equation}
\Delta W_t
\in
\arg\min_{\|\Delta W\|_{\mathcal X}\le \eta}
\langle G_t,\Delta W\rangle.
\label{eq:general_constrained_step}
\end{equation}
Since the objective is linear, the solution lies on the boundary of the ball and points in
an extremal descent direction for the chosen norm. Writing
\begin{equation}
\|G_t\|_{\mathcal X,*}
:=
\sup_{\|U\|_{\mathcal X}\le 1}\langle G_t,U\rangle
\end{equation}
for the dual norm, \eqref{eq:general_constrained_step} is equivalent to
\begin{equation}
\Delta W_t = -\eta\, U_t,
\qquad
U_t\in\arg\max_{\|U\|_{\mathcal X}\le 1}\langle G_t,U\rangle,
\label{eq:duality_map_general}
\end{equation}
and the optimal value is \(-\eta \|G_t\|_{\mathcal X,*}\).

In practice one often uses the equivalent regularized local model
\begin{equation}
\Delta W_t
\in
\arg\min_{\Delta W}
\left\{
\langle G_t,\Delta W\rangle
+
\frac{1}{2\eta}\|\Delta W\|_{\mathcal X,t}^2
\right\},
\label{eq:general_regularized_step}
\end{equation}
where \(\|\cdot\|_{\mathcal X,t}\) may depend on time through a momentum estimate or a
preconditioner. The constrained form \eqref{eq:general_constrained_step} produces
normalized directions such as sign and polar updates, while the regularized form
\eqref{eq:general_regularized_step} produces the preconditioned-gradient forms used by
practical optimizers.

\subsection{The sign geometry: \(p=1\), \(q=\infty\)}

Take the matrix norm
\begin{equation}
\|A\|_{\ell_1\to\ell_\infty}
=
\sup_{\|u\|_1\le 1}\|Au\|_\infty.
\end{equation}
A direct calculation shows that this is simply the entrywise max norm:
\begin{equation}
\|A\|_{\ell_1\to\ell_\infty}
=
\max_{i,j}|A_{ij}|.
\label{eq:l1_to_linf_equals_max}
\end{equation}
Indeed, for each row \(i\),
\begin{equation}
|(Au)_i|
=
\left|\sum_j A_{ij}u_j\right|
\le
\max_j |A_{ij}|\sum_j |u_j|
\le
\max_j |A_{ij}|,
\end{equation}
so \(\|Au\|_\infty \le \max_{i,j}|A_{ij}|\). Equality is achieved by choosing \(u\) to be
a signed coordinate vector supported on an entry attaining the maximum.

Substituting \eqref{eq:l1_to_linf_equals_max} into
\eqref{eq:general_constrained_step} gives
\begin{equation}
\min_{\max_{i,j}|\Delta W_{ij}|\le \eta}
\sum_{i,j} (G_t)_{ij}(\Delta W)_{ij}.
\end{equation}
This decouples coordinatewise, and for each entry the minimizer is
\begin{equation}
(\Delta W_t)_{ij}
=
-\eta\,\mathrm{sign}\bigl((G_t)_{ij}\bigr).
\end{equation}
Hence the steepest-descent rule is
\begin{equation}
\Delta W_t
=
-\eta\,\mathrm{sign}(G_t),
\label{eq:sign_sgd_derivation}
\end{equation}
which is the base update underlying signSGD:
\begin{equation}
W_{t+1}=W_t-\eta\,\mathrm{sign}(G_t).
\label{eq:signsgd_update_appendix}
\end{equation}

\subsection{Momentum in the sign geometry}
To overcome the noise from the computing the gradient on a mini-batch a standard practical modification is to replace the raw gradient by a smoothed momentum
estimate. Let
\begin{equation}
M_t=\beta M_{t-1}+(1-\beta)G_t,
\qquad 0\le \beta <1.
\label{eq:ema_momentum_appendix}
\end{equation}
Applying the same steepest-descent rule to \(M_t\) instead of \(G_t\) yields
\begin{equation}
W_{t+1}=W_t-\eta\,\mathrm{sign}(M_t),
\label{eq:sign_momentum_update}
\end{equation}
which is the update for signSGD with momentum.

\paragraph{Lion.}
Lion uses a sign update applied to a momentum direction that directly incorporates current gradient as well. In the standard implementation, one first forms
\begin{equation}
C_t = \beta_1 M_{t-1} + (1-\beta_1) G_t,
\label{eq:lion_predictor}
\end{equation}
then updates
\begin{equation}
W_{t+1}=W_t-\eta\,\mathrm{sign}(C_t),
\label{eq:lion_update}
\end{equation}
and finally refreshes the state
\begin{equation}
M_t=\beta_2 M_{t-1}+(1-\beta_2)G_t.
\label{eq:lion_momentum}
\end{equation}
Thus Lion can be viewed as a two-timescale version of the basic sign
steepest-descent rule.

\paragraph{Adam with \(\beta_2=0\).}
Adam augments the gradient with both a momentum buffer and a variance buffer:
\begin{equation}
m_t=\beta_1 m_{t-1}+(1-\beta_1)G_t,
\qquad
v_t=\beta_2 v_{t-1}+(1-\beta_2)G_t^{\odot 2},
\label{eq:adam_moments_appendix}
\end{equation}
where \(G_t^{\odot 2}\) denotes the entrywise square. Its coordinatewise update is
\begin{equation}
W_{t+1}
=
W_t-\eta\,\frac{m_t}{\sqrt{v_t}+\varepsilon}.
\label{eq:adam_update_appendix}
\end{equation}
When \(\beta_2=0\), the variance buffer collapses to the current squared gradient,
\begin{equation}
v_t = G_t^{\odot 2},
\end{equation}
and so
\begin{equation}
W_{t+1}
=
W_t-\eta\,\mathrm{sign}(M_t).
\label{eq:adam_beta2_zero_appendix}
\end{equation}
Thus, Adam with \(\beta_2=0\) keeps the momentum buffer but normalizes it
coordinatewise by the current gradient magnitude. This leads us back to signSGD with momentum. 

\subsection{The spectral geometry: \(p=q=2\)}

Now take the matrix norm
\begin{equation}
\|A\|_{\ell_2\to\ell_2} = \|A\|_{\op},
\end{equation}
the spectral norm. The norm-constrained step becomes
\begin{equation}
\Delta W_t
\in
\arg\min_{\|\Delta W\|_{\op}\le \eta}
\langle G_t,\Delta W\rangle.
\label{eq:spectral_constrained_problem}
\end{equation}
Let the singular value decomposition of \(G_t\) be
\begin{equation}
G_t = U_t \Sigma_t V_t^\top.
\end{equation}
By von Neumann's trace inequality,
\begin{equation}
\langle G_t, U\rangle \le \|G_t\|_* \|U\|_{\op}
\qquad\text{for all }U,
\end{equation}
so over the unit spectral-norm ball the maximum of \(\langle G_t,U\rangle\) is
\(\|G_t\|_*\), attained by
\begin{equation}
U = U_t V_t^\top.
\end{equation}
This matrix is the polar factor of \(G_t\), which we denote by
\(\polar(G_t)\). Therefore
\begin{equation}
\Delta W_t
=
-\eta\,\polar(G_t),
\label{eq:spectral_descent_derivation}
\end{equation}
and the corresponding update for spectral descent is
\begin{equation}
W_{t+1}=W_t-\eta\,\polar(G_t).
\label{eq:spectral_descent_update_appendix}
\end{equation}

\subsection{Momentum in the spectral geometry}
\paragraph{Muon.}
Exactly as in the sign case, we may replace \(G_t\) by a momentum-smoothed matrix
\begin{equation}
M_t=\beta M_{t-1}+(1-\beta)G_t.
\label{eq:spectral_momentum_state}
\end{equation}
Applying spectral steepest descent to \(M_t\) gives the update for Muon
\begin{equation}
W_{t+1}=W_t-\eta\,\polar(M_t).
\label{eq:spectral_momentum_update}
\end{equation}
In practical implementations, this polar factor is not computed by an exact SVD but
is approximated by a small number of Newton-Schulz iterations. This minimizes the computational overhead as they can be implemented using only matrix
multiplications (GEMMs). 

\paragraph{Shampoo.}
Shampoo maintains left and right second-moment buffers for each matrix parameter and
updates by preconditioning with their inverse quarter-powers 
\begin{equation}
M_t=\beta_1 M_{t-1}+(1-\beta_1)G_t,\qquad
L_t=\beta_2 L_{t-1}+(1-\beta_2)M_tM_t^\top,\qquad
R_t=\beta_2 R_{t-1}+(1-\beta_2)M_t^\top M_t,
\end{equation}
followed by the update
\begin{equation}
W_{t+1}=W_t-\eta\,L_t^{-1/4}M_tR_t^{-1/4}.
\end{equation}
In the limiting case \(\beta_2=0\), these buffers
collapse to the instantaneous matrices
\begin{equation}
L_t=M_tM_t^\top,\qquad R_t=M_t^\top M_t.
\end{equation}
If \(M_t=U\Sigma V^\top\) is the singular value decomposition, then
\begin{equation}
L_t^{-1/4}M_tR_t^{-1/4}
=
U\Sigma^{-1/2}U^\top\,(U\Sigma V^\top)\,V\Sigma^{-1/2}V^\top
=
UV^\top
=
\polar(M_t),
\end{equation}
where the identity is exact in the idealized full-rank case (and otherwise
understood with pseudo-inverses on the support of \(M_t\)). Hence, when the second-moment
buffers are collapsed in this way, Shampoo reduces to the same polar transform of the
momentum matrix that underlies spectral descent and Muon. 

\section{Review of SF-SGD and SF-AdamW}\label{app:sf-review}

Schedule-free (SF) methods were introduced to remove the need for a prescribed
training horizon in learning-rate scheduling. The basic idea is to keep a
\emph{fast} sequence \(z_t\), on which the base optimizer runs at a nearly
constant learning rate, while returning an \emph{averaged} sequence \(x_t\)
whose effective learning rate decays automatically through online averaging.
Gradients are evaluated at an interpolation point \(y_t\) between these two
states. In this way, schedule-free optimization replaces explicit time-based
annealing by implicit annealing through online weight averaging
\cite{defazio2024road}.

\paragraph{SF-SGD.}
In its simplest form, Schedule-Free SGD maintains three sequences,
\begin{equation}
y_t=\beta x_t+(1-\beta)z_t,\qquad
g_t=\nabla \mathcal{L}(y_t;\xi_t),\qquad
z_{t+1}=z_t-\eta g_t,
\end{equation}
together with the online average
\begin{equation}
x_{t+1}=(1-c_{t+1})x_t+c_{t+1}z_{t+1},
\qquad c_{t+1}=\frac{1}{t+1}.
\end{equation}
Here \(z_t\) is the base iterate, \(x_t\) is the returned iterate, and \(y_t\)
is the point at which the stochastic gradient is computed. The parameter
\(\beta\in[0,1]\) interpolates between Polyak-Ruppert averaging
(\(\beta=0\)) and primal averaging (\(\beta=1\)).

\paragraph{Implicit decay of the effective learning rate.}
The averaging update can be rewritten as
\begin{equation}
x_{t+1}-x_t
=
c_{t+1}(z_{t+1}-x_t)
=
c_{t+1}(z_t-x_t)-c_{t+1}\eta g_t.
\end{equation}
Using
\begin{equation}
y_t=\beta x_t+(1-\beta)z_t
\qquad\Longrightarrow\qquad
z_t-x_t=\frac{y_t-x_t}{1-\beta},
\end{equation}
we obtain
\begin{equation}
x_{t+1}-x_t
=
\frac{c_{t+1}}{1-\beta}(y_t-x_t)-c_{t+1}\eta g_t.
\end{equation}
Thus the returned iterate \(x_t\) moves as though it were taking a gradient step
with \emph{effective} learning rate 
\begin{equation}
\eta_t^{\mathrm{eff}} \approx c_{t+1}\eta = \frac{\eta}{t+1}.
\end{equation}
Thus, the effective learning rate decays like \(1/t\) even though the base update
on \(z_t\) uses a constant step size. This is the basic mechanism by which
schedule-free methods replace an explicit schedule. In the convex Lipschitz setting, this implicit learning rate decay is sufficient to retain the optimal worst-case convergence rate of \(O(T^{-1/2})\). For a convex \(G\)-Lipschitz function with 
\(D=\|x_1-x_\star\|\), the choice of \(\eta=D/(G\sqrt{T})\) for any \(\beta\in[0,1]\) leads to
\begin{equation}
\mathbb{E}\bigl[F(x_T)-F(x_\star)\bigr]\le \frac{DG}{\sqrt{T}}.
\end{equation}

\paragraph{SF-AdamW.}

For deep-learning applications, the raw gradient is replaced by a preconditioned
update together with warmup and decoupled weight decay. 

\begin{algorithm}[htb]
\caption{Schedule-Free AdamW}
\label{alg:sf_adamw_review}
\begin{algorithmic}[1]
\State \textbf{Input:} base learning rate \(\eta\), interpolation parameter \(\beta_1\), variance parameter \(\beta_2\), weight decay \(\lambda\), warmup steps \(T_{\mathrm{warmup}}\), numerical constant \(\varepsilon\), decay location \(\tilde y_t\in\{y_t,z_t\}\)
\For{$t=1,2,\dots,T$}
    \State \(y_t \gets (1-\beta_1)z_t + \beta_1 x_t\) \Comment{Momentum via interpolation}
    \State \(g_t \gets \nabla \mathcal{L}(y_t;\xi_t)\) \Comment{Gradient is evaluated at \(y_t\)}
    \State \(v_t \gets \beta_2 v_{t-1} + (1-\beta_2) g_t^{\odot 2}\)
    \State \(\eta_t \gets \eta \sqrt{1-\beta_2^t}\,\min\!\left(1,t /T_{\mathrm{warmup}}\right)\) \Comment{Warmup and Adam bias-correction}
    \State \(z_{t+1} \gets z_t - \eta_t\, g_t / (\sqrt{v_t}+\varepsilon) - \eta_t \lambda \tilde y_t\)
    \State \(s_t \gets s_{t-1} + \eta_t^2\)
    \State \(c_{t+1} \gets \eta_t^2 / s_t\)
    \State \(x_{t+1} \gets (1-c_{t+1})x_t + c_{t+1}z_{t+1}\) \Comment{Update weighted iterate average}
\EndFor
\State \textbf{return} \(x_T\)
\end{algorithmic}
\end{algorithm}

The choice \(\tilde y_t=y_t\) corresponds to applying weight decay at the
gradient location, which matches the interpretation of decay as an
\(\ell_2\)-regularizer in the loss, while \(\tilde y_t=z_t\) applies decay at
the fast iterate. A useful practical feature is that SF-AdamW does not require
more memory than AdamW. One stores \(z_t\) and \(y_t\) in memory,
and reconstructs
\begin{equation}
x_t=\frac{y_t-(1-\beta_1)z_t}{\beta_1}
\end{equation}
when evaluation is needed. Note that unlike classical momentum, which introduces a separate momentum buffer, schedule-free momentum is implemented through the interpolation \(y_t=(1-\beta)z_t+\beta x_t\) in weight space.  

\section{Convergence of Schedule-Free Spectral optimization}\label{app:convergence}

We first state the iterations of the schedule-free spectral optimization and the assumptions of Lipschitz smoothness and bounded diameter.

\paragraph{Iterates.}
For parameters \(0\le \beta<1\), \(0\le \mu<1\), stepsize \(\eta>0\),
and batch size \(B\ge 1\), consider
\begin{align}
Y_t &= \beta X_t+(1-\beta)Z_t, \label{eq:Y_def_common}\\
G_t &= \frac1B\sum_{i=1}^B \nabla f(Y_t;\xi_{t,i}), \label{eq:G_def_common}\\
M_0 &= G_0,\qquad
M_t=\mu M_{t-1}+(1-\mu)G_t \quad (t\ge 1), \label{eq:M_def_common}\\
P_t &= \polar(M_t), \label{eq:P_def_common}\\
Z_{t+1} &= Z_t-\eta P_t, \label{eq:Z_def_common}\\
X_{t+1} &= \frac{t}{t+1}X_t+\frac{1}{t+1}Z_{t+1},
\qquad X_0=Z_0. \label{eq:X_def_common}
\end{align}

Assume there exists \(W^\star\in\R^{m\times n}\) such that
\[
f(W^\star)=f^\star,
\qquad
\nabla f(W^\star)=0, 
\]

and let 
\[
r:=\min\{m,n\},
\qquad
\Delta:=f(Y_0)-f^\star.
\]

Throughout, the stochastic oracle is unbiased with bounded Frobenius variance:
\begin{equation}
\EE[G_t\mid \mathcal F_t]=\nabla f(Y_t)=:g_t,
\qquad
\EE\!\left[\|G_t-g_t\|_F^2\mid \mathcal F_t\right]\le \frac{\sigma^2}{B}.
\label{eq:variance_assumption_common}
\end{equation}

\paragraph{Smoothness assumptions.}
For the spectral proof, we assume there exists a constant $L_S$ such that 
\begin{equation}
\|\nabla f(A)-\nabla f(B)\|_*
\le
L_S\|A-B\|_{\op}
\qquad\text{for all }A,B\in\R^{m\times n}.
\label{eq:spectral_smoothness}
\end{equation}

For the Frobenius proof, we assume there exists a constant $L_F$ such that 
\begin{equation}
\|\nabla f(A)-\nabla f(B)\|_F
\le
L_F\|A-B\|_F
\qquad\text{for all }A,B\in\R^{m\times n}.
\label{eq:frobenius_smoothness}
\end{equation}

\paragraph{Bounded diameter assumptions.}
For the spectral proof, we assume there exists a constant $D_S>0$ such that
\begin{equation}
\|W^\star\|_{\op} \le D_S/2,
\qquad
\|Z_t\|_{\op} \le D_S/2
\qquad \text{for all } t,
\label{eq:diameter_spectral}
\end{equation}
while for the Frobenius proof, we assume there exists a constant $D_F>0$ such that
\begin{equation}
\|W^\star\|_F \le D_F/2,
\qquad
\|Z_t\|_F \le D_F/2
\qquad \text{for all } t.
\label{eq:diameter_frobenius}
\end{equation}

Consequently, $\|X_t\|_{\op},\|Y_t\|_{\op}\le D_S/2$ for all $t$, and hence
\[
\|Y_t-W^\star\|_{\op} \le D_S,
\qquad
\|Z_t-W^\star\|_{\op} \le D_S
\qquad \text{for all } t, 
\]
and a similar statement holds for the Frobenius norm. 

These are standard bounded-iterate assumptions (see Theorem 7 in \cite{gupta2018shampoopreconditionedstochastictensor} and Theorem 1 in \cite{an2025asgoadaptivestructuredgradient}), and they can be enforced algorithmically by considering a projected variant that keep the fast iterates $Z_t$ inside the corresponding norm ball. The iterates $X, Y$ are unweighted and weighted averages of current and previous $Z$ iterates and hence remain within the ball. 

\subsection{Reusable identities and bounds}

\begin{lemma}[Iterate identities]
\label{lem:iterate_identities}
Define
\[
S_t:=Z_t-Y_t.
\]
Then \(S_t=\beta(Z_t-X_t)\), and for every \(t\ge 0\),
\begin{align}
S_{t+1}
&=
\frac{t}{t+1}\bigl(S_t-\beta\eta P_t\bigr),
\label{eq:S_recursion_common}\\
Y_{t+1}-Y_t
&=
\frac{S_t}{t+1}
-\eta\left(1-\beta+\frac{\beta}{t+1}\right)P_t.
\label{eq:Y_increment_common}
\end{align}
Consequently, if \(\|\cdot\|_\diamond\) is any norm for which
\(\|Y_t-W^\star\|_\diamond\le D_\diamond\) and
\(\|Z_t-W^\star\|_\diamond\le D_\diamond\), then
\begin{equation}
\|S_t\|_\diamond\le 2D_\diamond,
\qquad
\|Y_{t+1}-Y_t\|_\diamond
\le
\frac{2D_\diamond}{t+1}
+\eta\|P_t\|_\diamond.
\label{eq:Y_increment_generic_norm}
\end{equation}
In particular,
\begin{align}
\|Y_{t+1}-Y_t\|_{\op}
&\le
\frac{2D_S}{t+1}+\eta,
\label{eq:Y_increment_spectral}\\
\|Y_{t+1}-Y_t\|_F
&\le
\frac{2D_F}{t+1}+\eta\sqrt r.
\label{eq:Y_increment_frobenius}
\end{align}
\end{lemma}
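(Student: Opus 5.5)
We will derive everything from the three update rules \eqref{eq:Y_def_common}, \eqref{eq:Z_def_common}, \eqref{eq:X_def_common} by direct algebra, and then take norms. For the first claim, \eqref{eq:Y_def_common} gives $Y_t = \beta X_t + (1-\beta)Z_t$. Subtracting from $Z_t$ yields $S_t = Z_t - Y_t = \beta(Z_t - X_t)$ immediately.

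For the recursion \eqref{eq:S_recursion_common}, we compute $Z_{t+1}-X_{t+1}$ using \eqref{eq:X_def_common}:
\[
Z_{t+1}-X_{t+1} = \tfrac{t}{t+1}\bigl(Z_{t+1}-X_t\bigr) = \tfrac{t}{t+1}\bigl(Z_t - X_t - \eta P_t\bigr).
\]
Multiplying by $\beta$ and using $\beta(Z_t-X_t)=S_t$ gives $S_{t+1} = \frac{t}{t+1}(S_t - \beta\eta P_t)$.

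For the increment \eqref{eq:Y_increment_common}, we write $Y_{t+1}-Y_t = (Z_{t+1}-Z_t) - (S_{t+1}-S_t)$. The first piece equals $-\eta P_t$. For the second piece, the recursion gives
\[
S_{t+1}-S_t = -\tfrac{1}{t+1}S_t - \tfrac{t}{t+1}\beta\eta P_t .
\]
Combining, we get
\[
Y_{t+1}-Y_t = \tfrac{S_t}{t+1} - \eta\Bigl(1 - \tfrac{\beta t}{t+1}\Bigr)P_t,
\]
and we note that $1-\frac{\beta t}{t+1} = 1-\beta+\frac{\beta}{t+1}$, as claimed. The only point needing care is bookkeeping of the indices, in particular that the convention $X_0=Z_0$ makes the $t=0$ case consistent: there $S_0=0$ and $S_1=0$.

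For the norm bounds, we apply the triangle inequality: $\|S_t\|_\diamond \le \|Z_t-W^\star\|_\diamond + \|W^\star - Y_t\|_\diamond \le 2D_\diamond$. We also have $0\le 1-\beta+\frac{\beta}{t+1}\le 1$ since $\beta\in[0,1)$. Together these give \eqref{eq:Y_increment_generic_norm}.

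Finally, we specialize the generic bound to the two norms of interest. The polar factor $P_t = UV^\top$ has all nonzero singular values equal to one, with at most $r$ of them. Hence $\|P_t\|_{\op}\le 1$ and $\|P_t\|_F \le \sqrt r$, which yields \eqref{eq:Y_increment_spectral} and \eqref{eq:Y_increment_frobenius}. The diameter hypotheses needed in each norm follow from \eqref{eq:diameter_spectral}, \eqref{eq:diameter_frobenius} and the convex-combination remark that precedes the lemma.

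No step is a real obstacle. The most error-prone step is the $\frac{t}{t+1}$ versus $\frac1{t+1}$ algebra in the increment identity.
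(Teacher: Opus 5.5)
Your proposal is correct and follows the paper's proof essentially step for step. You use the same identity $S_t=\beta(Z_t-X_t)$, the same use of the averaging rule to get $S_{t+1}=\frac{t}{t+1}(S_t-\beta\eta P_t)$, the same decomposition $Y=Z-S$ for the increment, and the same triangle-inequality bounds with $\|P_t\|_{\op}\le 1$ and $\|P_t\|_F\le\sqrt r$. Your explicit checks that $0\le 1-\beta+\frac{\beta}{t+1}\le 1$ and that the $t=0$ case is consistent ($S_0=S_1=0$) are small details the paper leaves implicit.
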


\begin{proof}
Since \(Y_t=\beta X_t+(1-\beta)Z_t\), we have
\[
S_t=Z_t-Y_t=\beta(Z_t-X_t).
\]
Using \eqref{eq:Z_def_common} and \eqref{eq:X_def_common},
\begin{align*}
S_{t+1}
&=Z_{t+1}-Y_{t+1}\\
&=\beta(Z_{t+1}-X_{t+1})\\
&=
\beta\left(Z_{t+1}-\frac{t}{t+1}X_t-\frac{1}{t+1}Z_{t+1}\right)\\
&=
\frac{t}{t+1}\beta(Z_{t+1}-X_t)\\
&=
\frac{t}{t+1}\beta\bigl((Z_t-\eta P_t)-X_t\bigr)\\
&=
\frac{t}{t+1}\bigl(S_t-\beta\eta P_t\bigr),
\end{align*}
which proves \eqref{eq:S_recursion_common}. Then
\begin{align*}
Y_{t+1}-Y_t
&=(Z_{t+1}-S_{t+1})-(Z_t-S_t)\\
&=
-\eta P_t-S_{t+1}+S_t\\
&=
\frac{S_t}{t+1}
-\eta\left(1-\beta+\frac{\beta}{t+1}\right)P_t,
\end{align*}
which is \eqref{eq:Y_increment_common}. The norm bounds follow from
\[
\|S_t\|_\diamond
=
\|Z_t-Y_t\|_\diamond
\le
\|Z_t-W^\star\|_\diamond+\|Y_t-W^\star\|_\diamond
\le 2D_\diamond,
\]
together with \(\|P_t\|_{\op}\le 1\) and \(\|P_t\|_F\le \sqrt r\).
\end{proof}

\begin{lemma}[Polar alignment]
\label{lem:polar_alignment}
For every \(t\ge 0\),
\begin{equation}
\langle g_t,P_t\rangle
\ge
\|g_t\|_* - 2\|g_t-M_t\|_*.
\label{eq:polar_alignment_bound}
\end{equation}
\end{lemma}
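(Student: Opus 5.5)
The plan is to split $g_t$ into the momentum buffer and the tracking error, and then use the two defining properties of the polar factor: it attains the nuclear norm of $M_t$, and it has unit operator norm. The two norms involved are dual to each other, which is exactly what controls the error terms.

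\textbf{Step 1 (decomposition).} Write $g_t = M_t + (g_t - M_t)$. By bilinearity of the Frobenius inner product,
\[
\langle g_t,P_t\rangle = \langle M_t,P_t\rangle + \langle g_t-M_t,P_t\rangle .
\]

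\textbf{Step 2 (first term).} Let $M_t=U\Sigma V^\top$ be a compact SVD, so that $P_t=UV^\top$. Then
\[
\langle M_t,P_t\rangle=\tr(V\Sigma U^\top UV^\top)=\tr(\Sigma)=\|M_t\|_* .
\]
This uses the maximizer characterization of the polar factor from the spectral-geometry appendix. If $M_t=0$, then $P_t=0$ and the identity still holds trivially.

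\textbf{Step 3 (second term).} By von Neumann's trace inequality (Hölder duality between the operator and nuclear norms) and $\|P_t\|_{\op}\le 1$,
\[
\langle g_t-M_t,P_t\rangle \ge -\|g_t-M_t\|_*\,\|P_t\|_{\op} \ge -\|g_t-M_t\|_* .
\]

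\textbf{Step 4 (back to $g_t$).} By the reverse triangle inequality for the nuclear norm, $\|M_t\|_*\ge \|g_t\|_*-\|g_t-M_t\|_*$. Combining Steps 1–4 gives
\[
\langle g_t,P_t\rangle\ge \|g_t\|_* - 2\|g_t-M_t\|_*,
\]
which is the claim.

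There is no real obstacle here. The only point needing care is the degenerate case: when $M_t$ is rank-deficient or zero, the polar factor must be interpreted as $UV^\top$ on the support of $M_t$. This keeps $\|P_t\|_{\op}\le 1$ and $\langle M_t,P_t\rangle=\|M_t\|_*$ valid in every case.
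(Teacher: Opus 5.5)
Your proof is correct and follows essentially the same argument as the paper. The paper likewise splits $\langle g_t,P_t\rangle = \langle M_t,P_t\rangle + \langle g_t-M_t,P_t\rangle$, uses $\langle M_t,P_t\rangle=\|M_t\|_*$ and $\|P_t\|_{\op}\le 1$ to lower-bound the error term by $-\|g_t-M_t\|_*$, and finishes with the reverse triangle inequality for the nuclear norm.
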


\begin{proof}
Since \(P_t=\polar(M_t)\),
\[
\langle M_t,P_t\rangle=\|M_t\|_*.
\]
Also,
\[
\langle g_t-M_t,P_t\rangle
\ge
-\|g_t-M_t\|_*\|P_t\|_{\op}
\ge
-\|g_t-M_t\|_*.
\]
Therefore
\[
\langle g_t,P_t\rangle
=
\langle M_t,P_t\rangle+\langle g_t-M_t,P_t\rangle
\ge
\|M_t\|_*-\|g_t-M_t\|_*.
\]
Finally,
\[
\|M_t\|_*
\ge
\|g_t\|_*-\|g_t-M_t\|_*,
\]
which yields \eqref{eq:polar_alignment_bound}.
\end{proof}

\begin{lemma}[Stochastic EMA error]
\label{lem:ema_noise}
Let
\[
C_0=g_0,
\qquad
C_t=\mu C_{t-1}+(1-\mu)g_t,
\qquad
E_t:=C_t-M_t.
\]
Then
\begin{equation}
E_t=\mu E_{t-1}+(1-\mu)(g_t-G_t),
\label{eq:E_recursion_common}
\end{equation}
and
\begin{equation}
\frac1T\sum_{t=0}^{T-1}\EE\|E_t\|_*
\le
\sqrt r\left(
\frac{\sigma}{(1-\mu)T\sqrt B}
+
\sqrt{\frac{1-\mu}{1+\mu}}\frac{\sigma}{\sqrt B}
\right).
\label{eq:E_avg_common}
\end{equation}
\end{lemma}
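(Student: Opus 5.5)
The recursion \eqref{eq:E_recursion_common} follows directly from the definitions. We have
\[
E_t = C_t - M_t = \mu(C_{t-1}-M_{t-1}) + (1-\mu)(g_t - G_t).
\]
At the base, $E_0 = g_0 - G_0$. Unrolling then gives
\[
E_t = \mu^t (g_0-G_0) + (1-\mu)\sum_{s=1}^{t}\mu^{t-s}(g_s-G_s),
\]
so $E_t$ is a weighted sum of the martingale differences $\xi_s := g_s - G_s$.

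Next, we pass from nuclear to Frobenius norm using $\|A\|_*\le\sqrt r\,\|A\|_F$, and apply Jensen: $\EE\|E_t\|_*\le\sqrt r\,(\EE\|E_t\|_F^2)^{1/2}$. The key structural fact is orthogonality of the noise increments. For $s<s'$, $\xi_s$ is $\F_{s'}$-measurable, since $G_s$ is realized before $Y_{s'}$ is built. Together with $\EE[\xi_{s'}\mid\F_{s'}]=0$ from \cref{assu:noise-assumption-main}, this gives $\EE\langle\xi_s,\xi_{s'}\rangle=0$. Therefore
\[
\EE\|E_t\|_F^2 = \mu^{2t}\EE\|\xi_0\|_F^2 + (1-\mu)^2\sum_{s=1}^t\mu^{2(t-s)}\EE\|\xi_s\|_F^2 \le \frac{\sigma^2}{B}\Bigl(\mu^{2t} + \frac{(1-\mu)^2}{1-\mu^2}\Bigr).
\]
Using $\sqrt{a+b}\le\sqrt a+\sqrt b$ and $(1-\mu)^2/(1-\mu^2)=(1-\mu)/(1+\mu)$, we get
\[
\EE\|E_t\|_*\le \frac{\sqrt r\,\sigma}{\sqrt B}\Bigl(\mu^t + \sqrt{\tfrac{1-\mu}{1+\mu}}\Bigr).
\]

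Finally, we average over $t=0,\dots,T-1$. The geometric part sums to $\sum_t\mu^t\le 1/(1-\mu)$, which yields the $\sigma/((1-\mu)T\sqrt B)$ term, while the stationary part contributes $\sqrt{(1-\mu)/(1+\mu)}\,\sigma/\sqrt B$. Together these give exactly \eqref{eq:E_avg_common}.

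The only delicate step is justifying the vanishing cross terms. This needs the filtration convention in \cref{assu:noise-assumption-main}: $\F_{s'}$ contains all randomness up to forming $Y_{s'}$, and hence contains $G_s$ for $s<s'$. The argument also uses that the minibatch variance bound holds conditionally at every step. Everything else is routine geometric-series bookkeeping.
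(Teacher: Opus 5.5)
Your proof is correct and follows the paper's argument. The only difference is that the paper gets the same second-moment bound from the one-step conditional recursion $\EE[\|E_t\|_F^2\mid\F_t]\le\mu^2\|E_{t-1}\|_F^2+(1-\mu)^2\sigma^2/B$ rather than unrolling first; this relies on the same vanishing cross term, since $E_{t-1}$ is $\F_t$-measurable, and then uses Jensen, $\|A\|_*\le\sqrt r\,\|A\|_F$, and the geometric-series average exactly as you do, with your filtration argument for the cross terms being more explicit than the paper's.
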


\begin{proof}
The recursion \eqref{eq:E_recursion_common} is immediate from the definitions.
Using \eqref{eq:variance_assumption_common},
\[
\EE\!\left[\|E_t\|_F^2\mid\mathcal F_t\right]
\le
\mu^2\|E_{t-1}\|_F^2+(1-\mu)^2\frac{\sigma^2}{B}.
\]
Solving this recursion and applying Jensen gives
\[
\EE\|E_t\|_F
\le
\mu^t\frac{\sigma}{\sqrt B}
+
\sqrt{\frac{1-\mu}{1+\mu}}\frac{\sigma}{\sqrt B}.
\]
Since \(\|A\|_*\le \sqrt r\,\|A\|_F\), averaging over \(t=0,\dots,T-1\) yields
\eqref{eq:E_avg_common}.
\end{proof}

\begin{lemma}[Tracking bound under spectral smoothness]
\label{lem:tracking_spectral}
Assume \eqref{eq:spectral_smoothness} and \eqref{eq:diameter_spectral}. Then
\begin{equation}
\frac1T\sum_{t=0}^{T-1}\EE\|g_t-M_t\|_*
\le
\frac{\mu L_S\eta}{1-\mu}
+
\frac{2\mu L_S D_S\log(eT)}{(1-\mu)T}
+
\sqrt r\left(
\frac{\sigma}{(1-\mu)T\sqrt B}
+
\sqrt{\frac{1-\mu}{1+\mu}}\frac{\sigma}{\sqrt B}
\right).
\label{eq:tracking_spectral_final}
\end{equation}
\end{lemma}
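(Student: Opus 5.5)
The plan is to split the tracking error through the auxiliary noiseless EMA $C_t$ of \cref{lem:ema_noise}:
\[
g_t - M_t = (g_t - C_t) + (C_t - M_t) = (g_t - C_t) + E_t .
\]
The stochastic part $\frac1T\sum_t \EE\|E_t\|_*$ is already bounded by \eqref{eq:E_avg_common}, and this bound reproduces exactly the last group of terms in \eqref{eq:tracking_spectral_final}. So it remains to bound the deterministic bias $\|g_t - C_t\|_*$ pathwise.

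For the bias term, I would unroll the EMA recursion. From $C_0 = g_0$ and $C_t = \mu C_{t-1} + (1-\mu) g_t$ we get
\[
g_t - C_t = \mu\,(g_t - C_{t-1}) = \mu\,(g_t - g_{t-1}) + \mu\,(g_{t-1} - C_{t-1}).
\]
Setting $b_t := g_t - C_t$, this reads $b_t = \mu (g_t - g_{t-1}) + \mu\, b_{t-1}$ with $b_0 = 0$. Hence
\[
\|b_t\|_* \le \sum_{k=1}^{t} \mu^{t-k+1}\,\|g_k - g_{k-1}\|_* .
\]
By spectral smoothness \eqref{eq:spectral_smoothness}, each increment is controlled by the step in $Y$:
\[
\|g_k - g_{k-1}\|_* \le L_S \|Y_k - Y_{k-1}\|_{\op}.
\]
Then \eqref{eq:Y_increment_spectral} from \cref{lem:iterate_identities} gives $\|Y_k - Y_{k-1}\|_{\op} \le 2D_S/k + \eta$. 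The diameter assumption \eqref{eq:diameter_spectral} is what licenses the hypotheses of that lemma. Note the indexing: the increment from $Y_{k-1}$ to $Y_k$ carries the factor $1/k$.

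Next, I would average over $t = 0, \dots, T-1$ and swap the order of summation. The $\eta$ part contributes
\[
L_S\,\eta \sum_{j\ge 1} \mu^j = \frac{\mu L_S \eta}{1-\mu}
\]
uniformly in $t$. For the drift part, each $k$ gets total weight
\[
\sum_{t\ge k} \mu^{t-k+1} \le \frac{\mu}{1-\mu}.
\]
This gives
\[
\frac1T \sum_{t} \|b_t\|_* \le \frac{\mu L_S\eta}{1-\mu} + \frac{2\mu L_S D_S}{(1-\mu)T} \sum_{k=1}^{T-1} \frac1k .
\]
The harmonic sum satisfies $\sum_{k=1}^{T-1} 1/k \le 1 + \log T = \log(eT)$. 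These two pieces are exactly the first two terms of \eqref{eq:tracking_spectral_final}. Since the bias bound holds pathwise, taking expectations is trivial. Adding the $E_t$ bound and using the triangle inequality for $\|\cdot\|_*$ finishes the proof.

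The main obstacle is bookkeeping rather than a conceptual step. First, the index convention ($M_0 = G_0$ versus the main-text $M_1 = G_1$) must make the increment factor $1/k$ rather than $1/(k-1)$; otherwise the harmonic sum picks up a divergent first term. Second, the interchange of sums must produce the factor $\mu/(1-\mu)$, not $1/(1-\mu)$. I would check both against the base case $b_0 = 0$ explicitly.
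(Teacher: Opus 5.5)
Your proposal is correct and follows the paper's own proof. Both use the split $g_t-M_t=(g_t-C_t)+E_t$, the recursion $\|g_t-C_t\|_*\le \mu L_S\bigl(2D_S/t+\eta\bigr)+\mu\|g_{t-1}-C_{t-1}\|_*$ obtained from spectral smoothness and \eqref{eq:Y_increment_spectral}, and \cref{lem:ema_noise} for the noise term. Your explicit unrolling with $b_0=0$ and the exchange of sums, which yields the factor $\mu/(1-\mu)$ and a harmonic sum bounded by $\log(eT)$, fills in the ``iterating and averaging'' step that the paper leaves implicit.
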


\begin{proof}
Write
\[
g_t-M_t=(g_t-C_t)+(C_t-M_t),
\qquad
A_t:=\|g_t-C_t\|_*.
\]
Then
\begin{align*}
A_t
&=
\|g_t-\mu C_{t-1}-(1-\mu)g_t\|_*\\
&=
\mu\|g_t-C_{t-1}\|_*\\
&\le
\mu\|g_t-g_{t-1}\|_*+\mu A_{t-1}\\
&\le
\mu L_S\|Y_t-Y_{t-1}\|_{\op}+\mu A_{t-1}\\
&\le
\mu L_S\left(\frac{2D_S}{t}+\eta\right)+\mu A_{t-1},
\end{align*}
where we used \eqref{eq:Y_increment_spectral}. Iterating and averaging gives
\[
\frac1T\sum_{t=0}^{T-1}A_t
\le
\frac{\mu L_S\eta}{1-\mu}
+
\frac{2\mu L_S D_S\log(eT)}{(1-\mu)T}.
\]
Combining this with Lemma~\ref{lem:ema_noise} proves
\eqref{eq:tracking_spectral_final}.
\end{proof}

\begin{lemma}[Tracking bound under Frobenius smoothness]
\label{lem:tracking_frobenius}
Assume \eqref{eq:frobenius_smoothness} and \eqref{eq:diameter_frobenius}. Then
\begin{equation}
\frac1T\sum_{t=0}^{T-1}\EE\|g_t-M_t\|_*
\le
\frac{\mu r L_F\eta}{1-\mu}
+
\frac{2\mu \sqrt r\,L_F D_F\log(eT)}{(1-\mu)T}
+
\sqrt r\left(
\frac{\sigma}{(1-\mu)T\sqrt B}
+
\sqrt{\frac{1-\mu}{1+\mu}}\frac{\sigma}{\sqrt B}
\right).
\label{eq:tracking_frobenius_final}
\end{equation}
\end{lemma}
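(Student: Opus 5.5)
We follow the proof of \cref{lem:tracking_spectral} line by line. The only changes are that smoothness is used in the Frobenius norm, and nuclear and Frobenius norms are compared using the rank bound $\|A\|_*\le\sqrt r\,\|A\|_F$.

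We split $g_t-M_t=(g_t-C_t)+(C_t-M_t)=(g_t-C_t)+E_t$, with $C_t$ the noiseless EMA and $E_t$ as in \cref{lem:ema_noise}. By the triangle inequality,
\[
\EE\|g_t-M_t\|_*\le \EE\|g_t-C_t\|_*+\EE\|E_t\|_*.
\]
The stochastic part is already handled: \cref{lem:ema_noise} gives exactly the $\sqrt r(\cdots)$ term in \eqref{eq:tracking_frobenius_final}. Nothing new is needed for it.

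For the deterministic drift we set $A_t:=\|g_t-C_t\|_*$. As before, $A_t=\mu\|g_t-C_{t-1}\|_*\le \mu\|g_t-g_{t-1}\|_*+\mu A_{t-1}$. We then bound the gradient increment in three steps:
\[
\|g_t-g_{t-1}\|_*\le\sqrt r\,\|g_t-g_{t-1}\|_F\le \sqrt r\,L_F\|Y_t-Y_{t-1}\|_F\le \sqrt r\,L_F\Bigl(\frac{2D_F}{t}+\eta\sqrt r\Bigr).
\]
The first step is the rank comparison, the second is \eqref{eq:frobenius_smoothness}, and the third is \eqref{eq:Y_increment_frobenius} from \cref{lem:iterate_identities}, which applies under \eqref{eq:diameter_frobenius}. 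This yields
\[
A_t\le \mu\Bigl(rL_F\eta+\frac{2\sqrt r\,L_FD_F}{t}\Bigr)+\mu A_{t-1},
\]
which is the same recursion as in the spectral case with $L_S\eta$ replaced by $rL_F\eta$ and $L_SD_S$ replaced by $\sqrt r L_F D_F$. Since $A_0=0$ (because $C_0=g_0$), unrolling gives $A_t\le\sum_{s=1}^t\mu^{t-s+1}\,b_s$, where $b_s$ denotes the bracketed forcing term above.

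Finally we average over $t=0,\dots,T-1$ and swap the sums:
\[
\frac1T\sum_t A_t\le \frac1T\sum_{s}b_s\sum_{k\ge1}\mu^k=\frac{\mu}{(1-\mu)T}\sum_{s=1}^{T-1}b_s.
\]
The constant part of $b_s$ contributes $\mu rL_F\eta/(1-\mu)$. The $1/s$ part contributes $\frac{2\mu\sqrt r L_F D_F}{(1-\mu)T}\sum_{s\le T}\frac1s\le\frac{2\mu\sqrt r L_FD_F\log(eT)}{(1-\mu)T}$, using $H_T\le 1+\log T=\log(eT)$. Adding the noise term gives \eqref{eq:tracking_frobenius_final}.

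The only delicate point is the bookkeeping at $t=0$ and $t=1$, where the bound $2D_F/t$ has to be read via \eqref{eq:Y_increment_frobenius} at index $t-1$. Beyond that, the main work is making sure the $\sqrt r$ factors combine correctly: the $\eta$ term collects a factor $r$ (one $\sqrt r$ from the rank comparison and one from $\|P_t\|_F\le\sqrt r$), while the diameter term collects only $\sqrt r$.
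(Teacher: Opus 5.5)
Your proposal is correct and follows the paper's proof essentially verbatim. You use the same split $g_t-M_t=(g_t-C_t)+E_t$, the same chain $\|\cdot\|_*\le\sqrt r\,\|\cdot\|_F$, Frobenius smoothness, and \eqref{eq:Y_increment_frobenius} to get the recursion for $A_t$, followed by unrolling, averaging, and invoking \cref{lem:ema_noise}. Your explicit sum swap using $A_0=0$ and $H_{T-1}\le\log(eT)$ simply fills in the ``iterating and averaging'' step that the paper leaves implicit.
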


\begin{proof}
Again write
\[
g_t-M_t=(g_t-C_t)+(C_t-M_t),
\qquad
A_t:=\|g_t-C_t\|_*.
\]
Then
\begin{align*}
A_t
&\le
\mu\|g_t-g_{t-1}\|_*+\mu A_{t-1}\\
&\le
\mu\sqrt r\,\|g_t-g_{t-1}\|_F+\mu A_{t-1}\\
&\le
\mu\sqrt r\,L_F\|Y_t-Y_{t-1}\|_F+\mu A_{t-1}\\
&\le
\mu\sqrt r\,L_F\left(\frac{2D_F}{t}+\eta\sqrt r\right)+\mu A_{t-1},
\end{align*}
where we used \eqref{eq:Y_increment_frobenius}. Iterating and averaging gives
\[
\frac1T\sum_{t=0}^{T-1}A_t
\le
\frac{\mu r L_F\eta}{1-\mu}
+
\frac{2\mu \sqrt r\,L_F D_F\log(eT)}{(1-\mu)T}.
\]
Combining with Lemma~\ref{lem:ema_noise} proves
\eqref{eq:tracking_frobenius_final}.
\end{proof}

\subsection{Spectral-smooth analysis}

\begin{theorem}[Stationarity at the \(Y\)-iterates under spectral smoothness]
\label{thm:Y_stationarity_spectral_noisy_reorg}
Assume \eqref{eq:variance_assumption_common},
\eqref{eq:spectral_smoothness}, and \eqref{eq:diameter_spectral}. Then for every
\(T\ge 1\),
\begin{align}
\frac1T\sum_{t=0}^{T-1}\EE\|\nabla f(Y_t)\|_*
\le {}&
\frac{\Delta + 2L_S D_S^2\!\left(\log(eT)+\frac{\pi^2}{6}\right)}
{(1-\beta)T\eta}
+ \frac{L_S\eta}{2(1-\beta)}
+ \frac{2\mu L_S\eta}{(1-\beta)(1-\mu)}
\notag\\
&\quad
+ \frac{2L_S D_S\log(eT)}{(1-\beta)T}
\left(1+\frac{2\mu}{1-\mu}\right)
+ \frac{2\sqrt r\,\sigma}{(1-\beta)\sqrt B}\sqrt{\frac{1-\mu}{1+\mu}}
\notag\\
&\quad
+ \frac{2\sqrt r\,\sigma}{(1-\beta)(1-\mu)T\sqrt B}.
\label{eq:Y_stat_bound_spectral_log}
\end{align}
\end{theorem}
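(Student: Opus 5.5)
The approach is a one-step descent inequality along the $Y$-iterates, using spectral smoothness. Summing it telescopes $f(Y_t)$, and the per-step drift and momentum errors are controlled by \cref{lem:iterate_identities}, \cref{lem:polar_alignment} and \cref{lem:tracking_spectral}.

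\textbf{Descent inequality.} From \eqref{eq:spectral_smoothness} I obtain the quadratic upper bound $f(B)\le f(A)+\langle \nabla f(A),B-A\rangle+\frac{L_S}{2}\|B-A\|_{\op}^2$. This follows by integrating along the segment and using duality of $\|\cdot\|_*$ and $\|\cdot\|_{\op}$. Apply it with $A=Y_t$, $B=Y_{t+1}$ and insert \eqref{eq:Y_increment_common}. This gives
\[
f(Y_{t+1})\le f(Y_t)+\frac{\langle g_t,S_t\rangle}{t+1}-\eta\Bigl(1-\beta+\frac{\beta}{t+1}\Bigr)\langle g_t,P_t\rangle+\frac{L_S}{2}\|Y_{t+1}-Y_t\|_{\op}^2 .
\]
For the polar term I use \cref{lem:polar_alignment}, together with $1-\beta\le 1-\beta+\beta/(t+1)\le 1$. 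This yields $-\eta(1-\beta)\|g_t\|_*+2\eta\|g_t-M_t\|_*$, after discarding a nonpositive piece. Here I use that $\|g_t\|_*-2\|g_t-M_t\|_*$ may be negative, so I split positive and negative parts to apply the two coefficient bounds correctly.

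\textbf{Drift term.} I bound $\langle g_t,S_t\rangle\le\|g_t\|_*\|S_t\|_{\op}$. Then $\|g_t\|_*=\|\nabla f(Y_t)-\nabla f(W^\star)\|_*\le L_S D_S$ and $\|S_t\|_{\op}\le 2D_S$, so the term is at most $2L_SD_S^2/(t+1)$. Summing over $t$ gives $2L_SD_S^2\log(eT)$. The quadratic term is controlled by \eqref{eq:Y_increment_spectral}: $(2D_S/(t+1)+\eta)^2\le 2\cdot 4D_S^2/(t+1)^2+2\eta^2$. Its $1/(t+1)^2$ part sums to $\pi^2/6$, which accounts for the $2L_SD_S^2\pi^2/6$ piece. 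The constants need slight adjustment, e.g.\ using $(a+b)^2\le a^2+2ab+b^2$. The cross term $2ab=4D_S\eta/(t+1)$ yields, after dividing by $(1-\beta)T\eta$, the $2L_SD_S\log(eT)/((1-\beta)T)$ term. The remaining $\eta^2 L_S/2$ gives $L_S\eta/(2(1-\beta))$.

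\textbf{Assembly.} I sum over $t=0,\dots,T-1$, take expectations, telescope $f(Y_0)-\EE f(Y_T)\le\Delta$, and divide by $(1-\beta)T\eta$. The accumulated term $\frac{2}{(1-\beta)T}\sum_t\EE\|g_t-M_t\|_*$ is then bounded by twice \cref{lem:tracking_spectral}. This produces $\frac{2\mu L_S\eta}{(1-\beta)(1-\mu)}$, the $\frac{4\mu L_SD_S\log(eT)}{(1-\beta)(1-\mu)T}$ piece (combining with the earlier one into the factor $1+2\mu/(1-\mu)$), and the two noise terms with factor $2\sqrt r\sigma/((1-\beta)\sqrt B)$.

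\textbf{Main obstacle.} I expect the main obstacle to be the bookkeeping of exact constants. This covers handling the mixed coefficient $1-\beta+\beta/(t+1)$ on the possibly negative alignment lower bound, and ensuring the $D_S^2$, $D_S\eta$ and $\eta^2$ pieces of the quadratic term match the stated form. I would first try a cruder bound, $\eta(1-\beta+\beta/(t+1))\langle g_t,P_t\rangle\ge\eta(1-\beta)\|g_t\|_*-2\eta\|g_t-M_t\|_*$. This is justified because $\langle g_t,P_t\rangle\ge -\|g_t\|_*$ forces, at worst, a correction absorbed into the same $2\|g_t-M_t\|_*$ slack.
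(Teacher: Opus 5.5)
Your proposal is correct and follows the paper's proof essentially step for step. It uses the spectral descent lemma along $Y_t$, the increment identity from \cref{lem:iterate_identities}, the bound $\langle g_t,S_t\rangle\le 2L_SD_S^2$, \cref{lem:polar_alignment} with the coefficient $1-\beta+\beta/(t+1)\in[1-\beta,1]$, the exact expansion of $\bigl(2D_S/(t+1)+\eta\bigr)^2$, and twice \cref{lem:tracking_spectral} after dividing by $(1-\beta)T\eta$. Only the exact expansion $a^2+2ab+b^2$ reproduces the stated constants ($2a^2+2b^2$ doubles the $\pi^2/6$ and $L_S\eta$ terms). The coefficient step needs no appeal to $\langle g_t,P_t\rangle\ge-\|g_t\|_*$: apply the lower endpoint $1-\beta$ to the $\|g_t\|_*$ term and the upper endpoint $1$ to the $2\|g_t-M_t\|_*$ term.
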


\begin{proof}
By \eqref{eq:spectral_smoothness},
\[
f(Y_{t+1})
\le
f(Y_t)
+\langle g_t,Y_{t+1}-Y_t\rangle
+\frac{L_S}{2}\|Y_{t+1}-Y_t\|_{\op}^2.
\]
Using Lemma~\ref{lem:iterate_identities},
\[
Y_{t+1}-Y_t
=
\frac{S_t}{t+1}
-\eta\left(1-\beta+\frac{\beta}{t+1}\right)P_t,
\]
hence
\begin{align*}
f(Y_{t+1})
\le {}&
f(Y_t)
+\frac{1}{t+1}\langle g_t,S_t\rangle
-\eta\left(1-\beta+\frac{\beta}{t+1}\right)\langle g_t,P_t\rangle
+\frac{L_S}{2}\|Y_{t+1}-Y_t\|_{\op}^2.
\end{align*}
Now \(\|g_t\|_*\le L_S D_S\), \(\|S_t\|_{\op}\le 2D_S\),
\(\|Y_{t+1}-Y_t\|_{\op}\le 2D_S/(t+1)+\eta\), and
Lemma~\ref{lem:polar_alignment} gives
\[
\langle g_t,P_t\rangle
\ge
\|g_t\|_* - 2\|g_t-M_t\|_*.
\]
Therefore
\begin{equation}
f(Y_{t+1})
\le
f(Y_t)
-\eta(1-\beta)\|g_t\|_*
+2\eta\|g_t-M_t\|_*
+\frac{2L_S D_S^2}{t+1}
+\frac{L_S}{2}\left(\frac{2D_S}{t+1}+\eta\right)^2.
\label{eq:one_step_spectral_reorg}
\end{equation}
Summing \eqref{eq:one_step_spectral_reorg} from \(t=0\) to \(T-1\),
taking expectations, and using \(f(Y_T)\ge f^\star\), we obtain
\begin{align*}
\eta(1-\beta)\sum_{t=0}^{T-1}\EE\|g_t\|_*
\le {}&
\Delta
+2\eta\sum_{t=0}^{T-1}\EE\|g_t-M_t\|_*
+2L_S D_S^2\sum_{t=0}^{T-1}\frac1{t+1}\\
&\quad
+\frac{L_S}{2}\sum_{t=0}^{T-1}\left(\frac{2D_S}{t+1}+\eta\right)^2.
\end{align*}
Using
\[
\sum_{t=0}^{T-1}\frac1{t+1}\le \log(eT),
\qquad
\sum_{t=0}^{T-1}\frac1{(t+1)^2}\le \frac{\pi^2}{6},
\]
and substituting Lemma~\ref{lem:tracking_spectral} yields
\eqref{eq:Y_stat_bound_spectral_log}.
\end{proof}

\subsection{Frobenius-smooth analysis}

\begin{theorem}[Stationarity at the \(Y\)-iterates under Frobenius smoothness]
\label{thm:Y_stationarity_frobenius_noisy_reorg}
Assume \eqref{eq:variance_assumption_common},
\eqref{eq:frobenius_smoothness}, and \eqref{eq:diameter_frobenius}. Then for every
\(T\ge 1\),
\begin{align}
\frac1T\sum_{t=0}^{T-1}\EE\|\nabla f(Y_t)\|_*
\le {}&
\frac{\Delta + 2L_F D_F^2\!\left(\log(eT)+\frac{\pi^2}{6}\right)}
{(1-\beta)T\eta}
+ \frac{L_F r\,\eta}{2(1-\beta)}
+ \frac{2\mu L_F r\,\eta}{(1-\beta)(1-\mu)}
\notag\\
&\quad
+ \frac{2\sqrt r\,L_F D_F\log(eT)}{(1-\beta)T}
\left(1+\frac{2\mu}{1-\mu}\right)
+ \frac{2\sqrt r\,\sigma}{(1-\beta)\sqrt B}\sqrt{\frac{1-\mu}{1+\mu}}
\notag\\
&\quad
+ \frac{2\sqrt r\,\sigma}{(1-\beta)(1-\mu)T\sqrt B}.
\label{eq:Y_stat_bound_frobenius_log}
\end{align}
\end{theorem}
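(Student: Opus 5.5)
The proof runs in parallel to that of \cref{thm:Y_stationarity_spectral_noisy_reorg}, replacing the spectral descent lemma by the Frobenius one. By \eqref{eq:frobenius_smoothness} we have
\[
f(Y_{t+1})\le f(Y_t)+\langle g_t,Y_{t+1}-Y_t\rangle+\frac{L_F}{2}\|Y_{t+1}-Y_t\|_F^2 .
\]
Substituting the increment \eqref{eq:Y_increment_common} from \cref{lem:iterate_identities} splits the inner product into a drift term $\frac{1}{t+1}\langle g_t,S_t\rangle$ and a descent term $-\eta\bigl(1-\beta+\frac{\beta}{t+1}\bigr)\langle g_t,P_t\rangle$.

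For the drift term we use Cauchy--Schwarz in Frobenius: $\|g_t\|_F=\|\nabla f(Y_t)-\nabla f(W^\star)\|_F\le L_F D_F$ and $\|S_t\|_F\le 2D_F$. This gives $2L_FD_F^2/(t+1)$, which after summation yields the $\log(eT)$ part of the leading term.

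For the descent term, \cref{lem:polar_alignment} gives $\langle g_t,P_t\rangle\ge \|g_t\|_*-2\|g_t-M_t\|_*$. We multiply by the coefficient, which lies between $1-\beta$ and $1$. The positive part contributes at least $\eta(1-\beta)\|g_t\|_*$, and the error part is at most $2\eta\|g_t-M_t\|_*$.

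For the quadratic term, \eqref{eq:Y_increment_frobenius} gives $\|Y_{t+1}-Y_t\|_F\le 2D_F/(t+1)+\eta\sqrt r$. Expanding the square produces three pieces:
\begin{itemize}
\item $2L_FD_F^2/(t+1)^2$, whose sum gives the $\pi^2/6$ term;
\item $2L_FD_F\eta\sqrt r/(t+1)$, whose sum gives $\eta\log(eT)$ times that constant and hence the non-$\mu$ part of the $\sqrt r\,L_FD_F\log(eT)/T$ term after division by $\eta(1-\beta)T$;
\item $L_Fr\eta^2/2$, which gives the $L_Fr\eta/(2(1-\beta))$ term.
\end{itemize}

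We then sum over $t=0,\dots,T-1$, take expectations, telescope, and use $f(Y_T)\ge f^\star$. Dividing by $\eta(1-\beta)T$ and applying $\sum 1/(t+1)\le\log(eT)$ and $\sum 1/(t+1)^2\le\pi^2/6$ leaves the averaged term $\frac{2}{(1-\beta)T}\sum_t\EE\|g_t-M_t\|_*$. We bound it by \cref{lem:tracking_frobenius}, and each of the three pieces of that lemma, doubled and divided by $1-\beta$, produces the remaining terms:
\begin{itemize}
\item $2\mu rL_F\eta/((1-\beta)(1-\mu))$;
\item the $2\mu/(1-\mu)$ correction multiplying $2\sqrt r L_FD_F\log(eT)/((1-\beta)T)$;
\item the two noise terms.
\end{itemize}

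The main obstacle is bookkeeping, namely the mixing of norms. The descent uses the nuclear/spectral duality, while smoothness is Frobenius, so $\sqrt r$ factors must be placed carefully. This happens through $\|P_t\|_F\le\sqrt r$ in the quadratic term and through $\|\cdot\|_*\le\sqrt r\|\cdot\|_F$ inside the tracking lemma.

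A second subtlety is that the cross term from expanding the square must be absorbed into the $\log(eT)$ drift term with exactly the constant $2\sqrt r L_FD_F$ displayed. If the constants do not match exactly, we would bound the cross term via $2ab\le a^2+b^2$ instead. That would inflate the $D_F^2$ and $r\eta$ terms only by absolute constants.

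Finally, \cref{thm:sf_spectral_descent} is obtained from \cref{thm:Y_stationarity_frobenius_noisy_reorg} by merging terms. We use $\frac12+\frac{2\mu}{1-\mu}\le\frac{1+3\mu}{1-\mu}$ and $1+\frac{2\mu}{1-\mu}=\frac{1+\mu}{1-\mu}$, and reindex the time variable.
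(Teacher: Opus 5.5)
Your proposal is correct and follows the paper's proof essentially line by line. The steps match: the Frobenius descent lemma, the $Y$-increment identity, Cauchy--Schwarz on the drift term, polar alignment with the coefficient $1-\beta+\beta/(t+1)$ squeezed between $1-\beta$ and $1$, expanding the squared increment, and substituting the Frobenius tracking lemma. Your constants also check out exactly: the cross term yields precisely the non-$\mu$ part of the $2\sqrt r\,L_F D_F\log(eT)/((1-\beta)T)$ term, so the fallback via $2ab\le a^2+b^2$ is never needed.
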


\begin{proof}
By \eqref{eq:frobenius_smoothness},
\[
f(Y_{t+1})
\le
f(Y_t)
+\langle g_t,Y_{t+1}-Y_t\rangle
+\frac{L_F}{2}\|Y_{t+1}-Y_t\|_F^2.
\]
Using Lemma~\ref{lem:iterate_identities},
\[
Y_{t+1}-Y_t
=
\frac{S_t}{t+1}
-\eta\left(1-\beta+\frac{\beta}{t+1}\right)P_t.
\]
Hence
\begin{align*}
f(Y_{t+1})
\le {}&
f(Y_t)
+\frac{1}{t+1}\langle g_t,S_t\rangle
-\eta\left(1-\beta+\frac{\beta}{t+1}\right)\langle g_t,P_t\rangle
+\frac{L_F}{2}\|Y_{t+1}-Y_t\|_F^2.
\end{align*}
Since \(\|g_t\|_F\le L_F D_F\), \(\|S_t\|_F\le 2D_F\),
\(\|Y_{t+1}-Y_t\|_F\le 2D_F/(t+1)+\eta\sqrt r\), and
Lemma~\ref{lem:polar_alignment} gives
\[
\langle g_t,P_t\rangle
\ge
\|g_t\|_* - 2\|g_t-M_t\|_*,
\]
we obtain
\begin{equation}
f(Y_{t+1})
\le
f(Y_t)
-\eta(1-\beta)\|g_t\|_*
+2\eta\|g_t-M_t\|_*
+\frac{2L_F D_F^2}{t+1}
+\frac{L_F}{2}\left(\frac{2D_F}{t+1}+\eta\sqrt r\right)^2.
\label{eq:one_step_frobenius_reorg}
\end{equation}
Summing \eqref{eq:one_step_frobenius_reorg} from \(t=0\) to \(T-1\),
taking expectations, and using \(f(Y_T)\ge f^\star\), we get
\begin{align*}
\eta(1-\beta)\sum_{t=0}^{T-1}\EE\|g_t\|_*
\le {}&
\Delta
+2\eta\sum_{t=0}^{T-1}\EE\|g_t-M_t\|_*
+2L_F D_F^2\sum_{t=0}^{T-1}\frac1{t+1}\\
&\quad
+\frac{L_F}{2}\sum_{t=0}^{T-1}\left(\frac{2D_F}{t+1}+\eta\sqrt r\right)^2.
\end{align*}
Using
\[
\sum_{t=0}^{T-1}\frac1{t+1}\le \log(eT),
\qquad
\sum_{t=0}^{T-1}\frac1{(t+1)^2}\le \frac{\pi^2}{6},
\]
and substituting Lemma~\ref{lem:tracking_frobenius} yields
\eqref{eq:Y_stat_bound_frobenius_log}.
\end{proof}

\subsection{Corollaries with optimized hyperparameters}

\begin{corollary}[Tuned rates under spectral smoothness]
\label{cor:spectral_tuned_rates}
Assume the hypotheses of
Theorem~\ref{thm:Y_stationarity_spectral_noisy_reorg}, and fix
\(\beta\in[0,1)\).

\begin{enumerate}
\item[(i)] \textbf{Noisy case.}
There exist choices \(\mu_T\in[0,1)\) and \(\eta_T>0\) such that
\begin{equation}
\frac1T\sum_{t=0}^{T-1}\EE\|\nabla f(Y_t)\|_*
=
O\!\left(\left(\frac{\log T}{T}\right)^{1/4}\right).
\label{eq:spectral_rate_noisy}
\end{equation}

\item[(ii)] \textbf{Noiseless case.}
If \(\sigma=0\), then there exist choices \(\mu_T\in[0,1)\) and
\(\eta_T>0\) such that
\begin{equation}
\frac1T\sum_{t=0}^{T-1}\|\nabla f(Y_t)\|_*
=
O\!\left(\sqrt{\frac{\log T}{T}}\right).
\label{eq:spectral_rate_noiseless}
\end{equation}

\item[(iii)] \textbf{Log-free specialization when \(\beta=0\).}
If \(\beta=0\), then the schedule-free drift disappears, and the rates improve to
\begin{equation}
\frac1T\sum_{t=0}^{T-1}\EE\|\nabla f(Y_t)\|_*
=
O\!\left(T^{-1/4}\right)
\qquad\text{in the noisy case,}
\label{eq:spectral_rate_beta_zero_noisy}
\end{equation}
and
\begin{equation}
\frac1T\sum_{t=0}^{T-1}\|\nabla f(Y_t)\|_*
=
O\!\left(T^{-1/2}\right)
\qquad\text{in the noiseless case.}
\label{eq:spectral_rate_beta_zero_noiseless}
\end{equation}
\end{enumerate}
\end{corollary}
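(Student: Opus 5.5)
The plan is to start from the bound of Theorem~\ref{thm:Y_stationarity_spectral_noisy_reorg} and optimize over $\eta$ and $\mu$ for fixed $\beta$, treating $\beta$, $L_S$, $D_S$, $\Delta$, $r$, $\sigma$, $B$ as constants. First I will collapse the right-hand side of \eqref{eq:Y_stat_bound_spectral_log} into a few dominant terms. Write $\ell_T:=\log(eT)+\pi^2/6$ and $\delta:=1-\mu$. Since $1/(1-\beta)$ is a fixed constant, using $\mu\le 1$ and $1+2\mu/(1-\mu)\le 3/\delta$ the bound becomes, up to constants,
\[
\frac{\Delta+L_SD_S^2\ell_T}{T\eta}+\frac{L_S\eta}{\delta}+\frac{L_SD_S\log(eT)}{\delta T}+\sqrt r\,\frac{\sigma}{\sqrt B}\sqrt{\delta}+\frac{\sqrt r\,\sigma}{\delta T\sqrt B}.
\]
Here $\sqrt{(1-\mu)/(1+\mu)}\le\sqrt\delta$, and the $L_S\eta/2$ term is absorbed into $L_S\eta/\delta$.

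\textbf{Noisy case.} I would take $\delta=T^{-1/2}$ and $\eta=(\ell_T/T)^{3/4}$, so $\mu_T=1-T^{-1/2}\in[0,1)$. The terms then scale as follows.
\begin{itemize}
\item The descent term is $\ell_T/(T\eta)=(\ell_T/T)^{1/4}$.
\item The tracking term is $\eta/\delta=\ell_T^{3/4}T^{-1/4}\lesssim(\log T)^{3/4}T^{-1/4}$.
\item The noise term is $\sqrt\delta=T^{-1/4}$.
\item The drift term is $\log T/(\delta T)=\log T\cdot T^{-1/2}$, which is lower order.
\item The transient term is $1/(\delta T)=T^{-1/2}$.
\end{itemize}
Every term is $\widetilde O(T^{-1/4})$. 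Getting exactly $(\log T/T)^{1/4}$ as in \eqref{eq:spectral_rate_noisy} needs one more balancing step. Minimizing $\ell_T/(T\eta)+\eta/\delta+\sqrt\delta$ first over $\eta$ gives $\eta=\sqrt{\ell_T\delta/T}$, leaving $2\sqrt{\ell_T/(\delta T)}+\sqrt\delta$. Then choosing $\delta=\sqrt{\ell_T/T}$ gives $(\ell_T/T)^{1/4}$ for all three leading terms. In that case the drift term is $\log T/(\delta T)=\sqrt{\log T/T}$, which is still lower order. For small $T$ with $\delta>1$ I clip to $\mu=0$; the bound then holds trivially up to constants.

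\textbf{Noiseless case.} With $\sigma=0$ the $\sqrt\delta$ term is absent, so I set $\mu=0$ ($\delta=1$) and $\eta=\sqrt{\ell_T/T}$. This makes the descent and tracking terms both $\sqrt{\ell_T/T}$, and the drift term is $\log T/T$, giving \eqref{eq:spectral_rate_noiseless}.

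\textbf{Case $\beta=0$.} Here $S_t=\beta(Z_t-X_t)=0$. In the proof of Theorem~\ref{thm:Y_stationarity_spectral_noisy_reorg} the terms $\frac{1}{t+1}\langle g_t,S_t\rangle$ and the $2D_S/(t+1)$ part of $\|Y_{t+1}-Y_t\|$ therefore vanish, and $Y_{t+1}-Y_t=-\eta P_t$. So I will re-run that one-step inequality, and the tracking Lemma~\ref{lem:tracking_spectral} with increment bound $\eta$ alone, to get a bound with no $\log(eT)$ and no $\pi^2/6$ terms. It reads
\[
\frac{\Delta}{T\eta}+\frac{L_S\eta}{2}+\frac{2\mu L_S\eta}{1-\mu}+\text{noise terms}.
\]
The same parameter choices without $\ell_T$, namely $\delta=T^{-1/2}$ and $\eta=T^{-3/4}$ (noisy), or $\mu=0$ and $\eta=T^{-1/2}$ (noiseless), give the log-free rates.

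The main obstacle is this $\beta=0$ item. It cannot be read off \eqref{eq:Y_stat_bound_spectral_log}, because setting $\beta=0$ there leaves the $D_S\log(eT)$ terms intact. I must check that every use of $\|S_t\|\le 2D_S$ and of the $2D_S/(t+1)$ increment bound disappears when $S_t\equiv0$, and that the bounded-diameter assumption is then needed only for nothing beyond $\Delta$. The remaining parts are routine balancing.
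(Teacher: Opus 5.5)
Your proposal is correct and follows essentially the same route as the paper. You bound the theorem in terms of $1-\mu$, take $\eta\asymp\sqrt{\ell_T(1-\mu)/T}$ and $1-\mu=\min\{1,\sqrt{\ell_T/T}\}$ (the paper's $\alpha_{T,S}$ up to constants), use $\mu=0$ with $\eta\asymp\sqrt{\ell_T/T}$ when $\sigma=0$, and for $\beta=0$ re-run the one-step and tracking arguments with $S_t\equiv 0$, leaving only $\Delta/(T\eta)+L_S\eta/2+2\mu L_S\eta/(1-\mu)$ plus noise terms; your refinement from $1-\mu=T^{-1/2}$ to $1-\mu=\sqrt{\ell_T/T}$ is exactly what is needed to get $(\log T/T)^{1/4}$ rather than $(\log T)^{3/4}T^{-1/4}$.
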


\begin{proof}
Let
\[
A_{T,S}
:=
\Delta + 2L_S D_S^2\!\left(\log(eT)+\frac{\pi^2}{6}\right).
\]
Then \(A_{T,S}=O(\log T)\).

For part (i), write
\[
\alpha:=1-\mu\in(0,1].
\]
Then Theorem~\ref{thm:Y_stationarity_spectral_noisy_reorg} becomes
\begin{align}
\frac1T\sum_{t=0}^{T-1}\EE\|\nabla f(Y_t)\|_*
\le {}&
\frac{1}{1-\beta}
\Biggl[
\frac{A_{T,S}}{T\eta}
+
\frac{L_S(4-3\alpha)}{2\alpha}\eta
+
\frac{2L_S D_S\log(eT)}{T}\frac{2-\alpha}{\alpha}
\notag\\
&\hspace{3.25cm}
+
\frac{2\sqrt r\,\sigma}{\sqrt B}\sqrt{\frac{\alpha}{2-\alpha}}
+
\frac{2\sqrt r\,\sigma}{\alpha T\sqrt B}
\Biggr].
\label{eq:spectral_alpha_bound}
\end{align}
For fixed \(\alpha\), the exact optimizer in \(\eta\) is
\begin{equation}
\eta_{T,S}^\star(\alpha)
=
\sqrt{
\frac{2A_{T,S}\alpha}{L_S T(4-3\alpha)}
}.
\label{eq:spectral_eta_star_alpha}
\end{equation}
Substituting \eqref{eq:spectral_eta_star_alpha} into
\eqref{eq:spectral_alpha_bound} gives
\begin{align}
\frac1T\sum_{t=0}^{T-1}\EE\|\nabla f(Y_t)\|_*
\le {}&
\frac{1}{1-\beta}
\Biggl[
\sqrt{
\frac{2A_{T,S}L_S}{T}\cdot \frac{4-3\alpha}{\alpha}
}
+
\frac{2L_S D_S\log(eT)}{T}\frac{2-\alpha}{\alpha}
\notag\\
&\hspace{3.25cm}
+
\frac{2\sqrt r\,\sigma}{\sqrt B}\sqrt{\frac{\alpha}{2-\alpha}}
+
\frac{2\sqrt r\,\sigma}{\alpha T\sqrt B}
\Biggr].
\label{eq:spectral_plugged_alpha}
\end{align}

Choose
\begin{equation}
\alpha_{T,S}
:=
\min\left\{
1,\;
2\sqrt{\frac{A_{T,S}L_S B}{r\sigma^2 T}}
\right\},
\qquad
\mu_{T,S}:=1-\alpha_{T,S},
\qquad
\eta_{T,S}:=\eta_{T,S}^\star(\alpha_{T,S}).
\label{eq:spectral_mu_eta_choice}
\end{equation}
This is the leading-order optimal choice obtained by balancing the first and
third terms in \eqref{eq:spectral_plugged_alpha}. With this choice,
\[
\sqrt{
\frac{2A_{T,S}L_S}{T}\cdot \frac{4-3\alpha_{T,S}}{\alpha_{T,S}}
}
=
O\!\left(\left(\frac{A_{T,S}}{T}\right)^{1/4}\right),
\]
and
\[
\frac{2\sqrt r\,\sigma}{\sqrt B}\sqrt{\frac{\alpha_{T,S}}{2-\alpha_{T,S}}}
=
O\!\left(\left(\frac{A_{T,S}}{T}\right)^{1/4}\right).
\]
The remaining terms are lower order:
\[
\frac{2L_S D_S\log(eT)}{T}\frac{2-\alpha_{T,S}}{\alpha_{T,S}}
=
O\!\left(\sqrt{\frac{\log T}{T}}\right),
\qquad
\frac{2\sqrt r\,\sigma}{\alpha_{T,S}T\sqrt B}
=
O\!\left(\frac{1}{\sqrt{T\log T}}\right).
\]
Hence
\[
\frac1T\sum_{t=0}^{T-1}\EE\|\nabla f(Y_t)\|_*
=
O\!\left(\left(\frac{A_{T,S}}{T}\right)^{1/4}\right)
=
O\!\left(\left(\frac{\log T}{T}\right)^{1/4}\right),
\]
which proves part (i).

For part (ii), when \(\sigma=0\), the stochastic terms vanish from
\eqref{eq:spectral_alpha_bound}. The remaining bound is increasing in \(\mu\),
hence minimized by
\[
\mu_{T,S}=0
\qquad\text{equivalently}\qquad
\alpha_{T,S}=1.
\]
Then the exact optimizer in \(\eta\) is
\begin{equation}
\eta_{T,S}
=
\sqrt{\frac{2A_{T,S}}{L_S T}}.
\label{eq:spectral_eta_star_noiseless}
\end{equation}
Substituting \(\mu_{T,S}=0\) and \eqref{eq:spectral_eta_star_noiseless} into
Theorem~\ref{thm:Y_stationarity_spectral_noisy_reorg} gives
\[
\frac1T\sum_{t=0}^{T-1}\|\nabla f(Y_t)\|_*
\le
\frac{1}{1-\beta}
\left[
\sqrt{\frac{2L_S A_{T,S}}{T}}
+
\frac{2L_S D_S\log(eT)}{T}
\right].
\]
Since \(A_{T,S}=O(\log T)\), this implies
\[
\frac1T\sum_{t=0}^{T-1}\|\nabla f(Y_t)\|_*
=
O\!\left(\sqrt{\frac{\log T}{T}}\right),
\]
which proves part (ii).

For part (iii), suppose \(\beta=0\). Then \(Y_t=Z_t\), hence
\[
S_t=Z_t-Y_t=0
\qquad\text{for all }t,
\]
and therefore
\[
Y_{t+1}-Y_t=-\eta P_t.
\]
Thus the harmonic-drift term disappears from both the descent and tracking
arguments. Repeating the proof of
Theorem~\ref{thm:Y_stationarity_spectral_noisy_reorg} with \(S_t\equiv 0\) gives
\begin{align}
\frac1T\sum_{t=0}^{T-1}\EE\|\nabla f(Y_t)\|_*
\le {}&
\frac{\Delta}{T\eta}
+\frac{L_S\eta}{2}
+\frac{2\mu L_S\eta}{1-\mu}
+\frac{2\sqrt r\,\sigma}{\sqrt B}\sqrt{\frac{1-\mu}{1+\mu}}
+\frac{2\sqrt r\,\sigma}{(1-\mu)T\sqrt B}.
\label{eq:spectral_beta_zero_bound}
\end{align}
Writing \(\alpha:=1-\mu\), this becomes
\[
\frac1T\sum_{t=0}^{T-1}\EE\|\nabla f(Y_t)\|_*
\le
\frac{\Delta}{T\eta}
+\frac{L_S(4-3\alpha)}{2\alpha}\eta
+\frac{2\sqrt r\,\sigma}{\sqrt B}\sqrt{\frac{\alpha}{2-\alpha}}
+\frac{2\sqrt r\,\sigma}{\alpha T\sqrt B}.
\]
For fixed \(\alpha\), the exact optimizer in \(\eta\) is
\[
\eta^\star
=
\sqrt{\frac{2\Delta\,\alpha}{L_S T(4-3\alpha)}}.
\]
Substituting this back and balancing the first two leading terms gives
\[
\alpha_T
\asymp
\min\left\{
1,\;
2\sqrt{\frac{\Delta L_S B}{r\sigma^2 T}}
\right\},
\qquad
\mu_T:=1-\alpha_T,
\qquad
\eta_T:=\sqrt{\frac{2\Delta\,\alpha_T}{L_S T(4-3\alpha_T)}}.
\]
With this choice,
\[
\frac1T\sum_{t=0}^{T-1}\EE\|\nabla f(Y_t)\|_*
=
O(T^{-1/4}),
\]
which proves the noisy claim in \eqref{eq:spectral_rate_beta_zero_noisy}.

If in addition \(\sigma=0\), then \eqref{eq:spectral_beta_zero_bound} reduces to
\[
\frac1T\sum_{t=0}^{T-1}\|\nabla f(Y_t)\|_*
\le
\frac{\Delta}{T\eta}+\frac{L_S\eta}{2}.
\]
Its exact optimizer is
\[
\mu_T=0,
\qquad
\eta_T=\sqrt{\frac{2\Delta}{L_S T}}.
\]
Substituting yields
\[
\frac1T\sum_{t=0}^{T-1}\|\nabla f(Y_t)\|_*
\le
\sqrt{\frac{2L_S\Delta}{T}}
=
O(T^{-1/2}),
\]
which proves \eqref{eq:spectral_rate_beta_zero_noiseless}.
\end{proof}

\begin{corollary}[Tuned rates under Frobenius smoothness]
\label{cor:frobenius_tuned_rates}
Assume the hypotheses of
Theorem~\ref{thm:Y_stationarity_frobenius_noisy_reorg}, and fix
\(\beta\in[0,1)\).

\begin{enumerate}
\item[(i)] \textbf{Noisy case.}
There exist choices \(\mu_T\in[0,1)\) and \(\eta_T>0\) such that
\begin{equation}
\frac1T\sum_{t=0}^{T-1}\EE\|\nabla f(Y_t)\|_*
=
O\!\left(\left(\frac{\log T}{T}\right)^{1/4}\right).
\label{eq:frobenius_rate_noisy}
\end{equation}

\item[(ii)] \textbf{Noiseless case.}
If \(\sigma=0\), then there exist choices \(\mu_T\in[0,1)\) and
\(\eta_T>0\) such that
\begin{equation}
\frac1T\sum_{t=0}^{T-1}\|\nabla f(Y_t)\|_*
=
O\!\left(\sqrt{\frac{\log T}{T}}\right).
\label{eq:frobenius_rate_noiseless}
\end{equation}

\item[(iii)] \textbf{Log-free specialization when \(\beta=0\).}
If \(\beta=0\), then the schedule-free drift disappears, and the rates improve to
\begin{equation}
\frac1T\sum_{t=0}^{T-1}\EE\|\nabla f(Y_t)\|_*
=
O\!\left(T^{-1/4}\right)
\qquad\text{in the noisy case,}
\label{eq:frobenius_rate_beta_zero_noisy}
\end{equation}
and
\begin{equation}
\frac1T\sum_{t=0}^{T-1}\|\nabla f(Y_t)\|_*
=
O\!\left(T^{-1/2}\right)
\qquad\text{in the noiseless case.}
\label{eq:frobenius_rate_beta_zero_noiseless}
\end{equation}
\end{enumerate}
\end{corollary}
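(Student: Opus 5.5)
The plan is to mirror the proof of Corollary~\ref{cor:spectral_tuned_rates} almost verbatim, replacing the spectral constants by their Frobenius counterparts from Theorem~\ref{thm:Y_stationarity_frobenius_noisy_reorg}. Set
\[
A_{T,F}:=\Delta+2L_FD_F^2\Bigl(\log(eT)+\tfrac{\pi^2}{6}\Bigr)=O(\log T),
\]
and write $\alpha:=1-\mu\in(0,1]$. The two $\eta$-linear terms $\frac{L_Fr\eta}{2}+\frac{2\mu L_F r\eta}{1-\mu}$ combine into $\frac{L_F r(4-3\alpha)}{2\alpha}\eta$. This gives a bound of the same shape as \eqref{eq:spectral_alpha_bound}, with three substitutions: $L_S\to L_Fr$ in the $\eta$-term, $L_SD_S\to\sqrt r\,L_FD_F$ in the drift term, and the noise terms unchanged.

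For part (i), fix $\alpha$ and minimise the two-term function $A_{T,F}/(T\eta)+c_\alpha\eta$, where $c_\alpha=L_Fr(4-3\alpha)/(2\alpha)$. This gives $\eta^\star(\alpha)=\sqrt{2A_{T,F}\alpha/(L_FrT(4-3\alpha))}$, and the value becomes $\sqrt{2A_{T,F}L_Fr(4-3\alpha)/(T\alpha)}$. Then balance this $O(\sqrt{A_{T,F}/(T\alpha)})$ term against the noise term $\frac{2\sqrt r\sigma}{\sqrt B}\sqrt{\alpha/(2-\alpha)}=O(\sqrt\alpha)$. The choice is
\[
\alpha_T=\min\Bigl\{1,\,2\sqrt{A_{T,F}L_FB/(\sigma^2T)}\Bigr\},
\]
where the exact constant is irrelevant. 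Both terms are then $O((A_{T,F}/T)^{1/4})$. Next, check that the remaining terms are lower order. The drift term $\frac{\sqrt r L_FD_F\log(eT)}{T}\cdot\frac{2-\alpha}{\alpha}$ is $O(\log T/(T\alpha_T))=O(\sqrt{\log T/T})$, since $\alpha_T\gtrsim\sqrt{\log T/T}$. The term $\frac{\sigma}{\alpha_T T}$ is $O(1/\sqrt{T\log T})$. If the minimum in $\alpha_T$ is attained at $1$, the bound is already $O(\sqrt{A_{T,F}/T})$, which is at most the claimed rate. This proves $O((\log T/T)^{1/4})$.

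For part (ii), with $\sigma=0$ the bound is increasing in $\mu$, so take $\mu=0$ and $\eta=\sqrt{2A_{T,F}/(L_FrT)}$. This yields
\[
\frac{1}{1-\beta}\Bigl[\sqrt{2L_FrA_{T,F}/T}+2\sqrt r L_FD_F\log(eT)/T\Bigr]=O\bigl(\sqrt{\log T/T}\bigr).
\]

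For part (iii), set $\beta=0$, so $Y_t=Z_t$ and $S_t\equiv0$. Then $Y_{t+1}-Y_t=-\eta P_t$, and the $\frac1{t+1}\langle g_t,S_t\rangle$ term vanishes. The harmonic parts disappear both from the descent inequality \eqref{eq:one_step_frobenius_reorg} and from the tracking recursion in Lemma~\ref{lem:tracking_frobenius}, where only $\mu\sqrt rL_F\cdot\eta\sqrt r$ survives. Rerunning the summation gives the analogue of \eqref{eq:spectral_beta_zero_bound}, with $\Delta$ in place of $A_{T,F}$ and $L_Fr$ in place of $L_S$. The same tuning, with $\Delta$ now constant, then gives $O(T^{-1/4})$ in the noisy case and $\sqrt{2L_Fr\Delta/T}=O(T^{-1/2})$ in the noiseless case.

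The main obstacle is mild: one must verify in part (iii) that the $\beta=0$ rerun genuinely removes every $\log T$ source. In particular, the $\frac{2D_F}{t+1}$ term in \eqref{eq:Y_increment_frobenius}, used inside the tracking lemma, must drop out. It does, because that term originates from $S_t/(t+1)$ in \eqref{eq:Y_increment_common}. The rest is bookkeeping with the same case split on whether $\alpha_T=1$.
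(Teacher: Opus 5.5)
Your proposal is correct and follows the paper's proof essentially step for step: the same $A_{T,F}$ and $\alpha=1-\mu$ reparametrization, the same $\eta^\star(\alpha)$ and choice $\alpha_T=\min\{1,2\sqrt{A_{T,F}L_FB/(\sigma^2T)}\}$, $\mu=0$ with $\eta=\sqrt{2A_{T,F}/(L_FrT)}$ in the noiseless case, and the same $S_t\equiv 0$ rerun (removing every $\log T$ source, with $\Delta$ replacing $A_{T,F}$) for $\beta=0$. Your remark on the $\alpha_T=1$ branch is a detail the paper leaves implicit; when you write it up, say explicitly that the noise term is controlled there by the defining condition $\sigma/\sqrt B\le 2\sqrt{A_{T,F}L_F/T}$.
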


\begin{proof}
Let
\[
A_{T,F}
:=
\Delta + 2L_F D_F^2\!\left(\log(eT)+\frac{\pi^2}{6}\right).
\]
Then \(A_{T,F}=O(\log T)\).

For part (i), write
\[
\alpha:=1-\mu\in(0,1].
\]
Then Theorem~\ref{thm:Y_stationarity_frobenius_noisy_reorg} becomes
\begin{align}
\frac1T\sum_{t=0}^{T-1}\EE\|\nabla f(Y_t)\|_*
\le {}&
\frac{1}{1-\beta}
\Biggl[
\frac{A_{T,F}}{T\eta}
+
\frac{L_F r(4-3\alpha)}{2\alpha}\eta
+
\frac{2\sqrt r\,L_F D_F\log(eT)}{T}\frac{2-\alpha}{\alpha}
\notag\\
&\hspace{3.25cm}
+
\frac{2\sqrt r\,\sigma}{\sqrt B}\sqrt{\frac{\alpha}{2-\alpha}}
+
\frac{2\sqrt r\,\sigma}{\alpha T\sqrt B}
\Biggr].
\label{eq:frobenius_alpha_bound}
\end{align}
For fixed \(\alpha\), the exact optimizer in \(\eta\) is
\begin{equation}
\eta_{T,F}^\star(\alpha)
=
\sqrt{
\frac{2A_{T,F}\alpha}{L_F r\,T(4-3\alpha)}
}.
\label{eq:frobenius_eta_star_alpha}
\end{equation}
Substituting \eqref{eq:frobenius_eta_star_alpha} into
\eqref{eq:frobenius_alpha_bound} gives
\begin{align}
\frac1T\sum_{t=0}^{T-1}\EE\|\nabla f(Y_t)\|_*
\le {}&
\frac{1}{1-\beta}
\Biggl[
\sqrt{
\frac{2A_{T,F}L_F r}{T}\cdot \frac{4-3\alpha}{\alpha}
}
+
\frac{2\sqrt r\,L_F D_F\log(eT)}{T}\frac{2-\alpha}{\alpha}
\notag\\
&\hspace{3.25cm}
+
\frac{2\sqrt r\,\sigma}{\sqrt B}\sqrt{\frac{\alpha}{2-\alpha}}
+
\frac{2\sqrt r\,\sigma}{\alpha T\sqrt B}
\Biggr].
\label{eq:frobenius_plugged_alpha}
\end{align}

Choose
\begin{equation}
\alpha_{T,F}
:=
\min\left\{
1,\;
2\sqrt{\frac{A_{T,F}L_F B}{\sigma^2 T}}
\right\},
\qquad
\mu_{T,F}:=1-\alpha_{T,F},
\qquad
\eta_{T,F}:=\eta_{T,F}^\star(\alpha_{T,F}).
\label{eq:frobenius_mu_eta_choice}
\end{equation}
This is the leading-order optimal choice obtained by balancing the first and
third terms in \eqref{eq:frobenius_plugged_alpha}. With this choice,
\[
\sqrt{
\frac{2A_{T,F}L_F r}{T}\cdot \frac{4-3\alpha_{T,F}}{\alpha_{T,F}}
}
=
O\!\left(\left(\frac{A_{T,F}}{T}\right)^{1/4}\right),
\]
and
\[
\frac{2\sqrt r\,\sigma}{\sqrt B}\sqrt{\frac{\alpha_{T,F}}{2-\alpha_{T,F}}}
=
O\!\left(\left(\frac{A_{T,F}}{T}\right)^{1/4}\right).
\]
The remaining terms are lower order:
\[
\frac{2\sqrt r\,L_F D_F\log(eT)}{T}\frac{2-\alpha_{T,F}}{\alpha_{T,F}}
=
O\!\left(\sqrt{\frac{\log T}{T}}\right),
\qquad
\frac{2\sqrt r\,\sigma}{\alpha_{T,F}T\sqrt B}
=
O\!\left(\frac{1}{\sqrt{T\log T}}\right).
\]
Hence
\[
\frac1T\sum_{t=0}^{T-1}\EE\|\nabla f(Y_t)\|_*
=
O\!\left(\left(\frac{A_{T,F}}{T}\right)^{1/4}\right)
=
O\!\left(\left(\frac{\log T}{T}\right)^{1/4}\right),
\]
which proves part (i).

For part (ii), when \(\sigma=0\), the stochastic terms vanish from
\eqref{eq:frobenius_alpha_bound}. The remaining bound is increasing in \(\mu\),
hence minimized by
\[
\mu_{T,F}=0
\qquad\text{equivalently}\qquad
\alpha_{T,F}=1.
\]
Then the exact optimizer in \(\eta\) is
\begin{equation}
\eta_{T,F}
=
\sqrt{\frac{2A_{T,F}}{L_F r\,T}}.
\label{eq:frobenius_eta_star_noiseless}
\end{equation}
Substituting \(\mu_{T,F}=0\) and \eqref{eq:frobenius_eta_star_noiseless} into
Theorem~\ref{thm:Y_stationarity_frobenius_noisy_reorg} gives
\[
\frac1T\sum_{t=0}^{T-1}\|\nabla f(Y_t)\|_*
\le
\frac{1}{1-\beta}
\left[
\sqrt{\frac{2L_F r\,A_{T,F}}{T}}
+
\frac{2\sqrt r\,L_F D_F\log(eT)}{T}
\right].
\]
Since \(A_{T,F}=O(\log T)\), this implies
\[
\frac1T\sum_{t=0}^{T-1}\|\nabla f(Y_t)\|_*
=
O\!\left(\sqrt{\frac{\log T}{T}}\right),
\]
which proves part (ii).

For part (iii), suppose \(\beta=0\). Then \(Y_t=Z_t\), so
\[
S_t=0
\qquad\text{and}\qquad
Y_{t+1}-Y_t=-\eta P_t.
\]
Thus the harmonic-drift term disappears from both the descent and tracking
arguments. Repeating the proof of
Theorem~\ref{thm:Y_stationarity_frobenius_noisy_reorg} with \(S_t\equiv 0\) gives
\begin{align}
\frac1T\sum_{t=0}^{T-1}\EE\|\nabla f(Y_t)\|_*
\le {}&
\frac{\Delta}{T\eta}
+\frac{L_F r\,\eta}{2}
+\frac{2\mu L_F r\,\eta}{1-\mu}
+\frac{2\sqrt r\,\sigma}{\sqrt B}\sqrt{\frac{1-\mu}{1+\mu}}
+\frac{2\sqrt r\,\sigma}{(1-\mu)T\sqrt B}.
\label{eq:frobenius_beta_zero_bound}
\end{align}
Writing \(\alpha:=1-\mu\), this becomes
\[
\frac1T\sum_{t=0}^{T-1}\EE\|\nabla f(Y_t)\|_*
\le
\frac{\Delta}{T\eta}
+\frac{L_F r(4-3\alpha)}{2\alpha}\eta
+\frac{2\sqrt r\,\sigma}{\sqrt B}\sqrt{\frac{\alpha}{2-\alpha}}
+\frac{2\sqrt r\,\sigma}{\alpha T\sqrt B}.
\]
For fixed \(\alpha\), the exact optimizer in \(\eta\) is
\[
\eta^\star
=
\sqrt{\frac{2\Delta\,\alpha}{L_F r\,T(4-3\alpha)}}.
\]
Substituting this back and balancing the first two leading terms gives
\[
\alpha_T
\asymp
\min\left\{
1,\;
2\sqrt{\frac{\Delta L_F B}{\sigma^2 T}}
\right\},
\qquad
\mu_T:=1-\alpha_T,
\qquad
\eta_T:=\sqrt{\frac{2\Delta\,\alpha_T}{L_F r\,T(4-3\alpha_T)}}.
\]
With this choice,
\[
\frac1T\sum_{t=0}^{T-1}\EE\|\nabla f(Y_t)\|_*
=
O(T^{-1/4}),
\]
which proves \eqref{eq:frobenius_rate_beta_zero_noisy}.

If in addition \(\sigma=0\), then \eqref{eq:frobenius_beta_zero_bound} reduces to
\[
\frac1T\sum_{t=0}^{T-1}\|\nabla f(Y_t)\|_*
\le
\frac{\Delta}{T\eta}+\frac{L_F r\,\eta}{2}.
\]
Its exact optimizer is
\[
\mu_T=0,
\qquad
\eta_T=\sqrt{\frac{2\Delta}{L_F r\,T}}.
\]
Substituting yields
\[
\frac1T\sum_{t=0}^{T-1}\|\nabla f(Y_t)\|_*
\le
\sqrt{\frac{2L_F r\,\Delta}{T}}
=
O(T^{-1/2}),
\]
which proves \eqref{eq:frobenius_rate_beta_zero_noiseless}.
\end{proof}

\subsection{Steady-State Analysis for Weight Decay at $Z$}
\label{app:steady-state}

In this subsection, we prove the boundedness and steady-state characterization lemmas stated in \cref{sec:sf-wd}. For convenience, we restate the lemmas here.

\subsubsection{Proof of \cref{lem:Z-bounded}}

\textbf{Lemma~\ref{lem:Z-bounded}} (Restated).
\textit{Consider the update 
$Z_{t+1} \gets Z_t -\eta \lambda Z_t - \hat{\eta}_t\widehat{P}_t$
and assume $0 < \eta \lambda < 1$ with $\hat{\eta} = 0.2 \eta \sqrt{mn}/\|\widehat{P}_t\|_F$. Then for all $t$, we have 
\[
\|Z_t\|_F \leq  \|Z_0\|_F (1- \eta \lambda)^{t}+  \frac{0.2 \sqrt{mn}}{\lambda}  \implies 
\|Z_t\|_F \leq \frac{0.2 \sqrt{mn}}{\lambda} ~\textrm{as} ~ t \to \infty.
\]
Furthermore, since $X_t$ and $Y_t$ are convex combinations of $\{Z_s\}_{s \leq t}$, they satisfy the same bound.}
\begin{proof}
Triangle inequality gives
\[
\|Z_{t+1}\|_F \leq (1-\eta\lambda)\|Z_t\|_F + \hat{\eta}_t\|\widehat{P}_t\|_F = (1-\eta\lambda)\|Z_t\|_F + 0.2\eta\sqrt{mn}.
\]
Unrolling and bounding the geometric sum:
\[
\|Z_t\|_F \leq (1-\eta\lambda)^t\|Z_0\|_F + 0.2\eta\sqrt{mn}\sum_{k=0}^{t-1}(1-\eta\lambda)^k \leq (1-\eta\lambda)^t\|Z_0\|_F + \frac{0.2\sqrt{mn}}{\lambda}.
\]
\end{proof}

\subsubsection{Proof of \cref{lem:Z-steady-state}}

\textbf{Lemma~\ref{lem:Z-steady-state}} (Quasi-Steady-State of $Z$, Restated).
\textit{Under the same conditions as \cref{lem:Z-bounded}, for $\eta\lambda \ll 1$ the steady-state satisfies
\[
\lambda\rho_t = 0.1\Bigl[-\alpha_t + \sqrt{\alpha_t^2 + 2\eta\lambda}\Bigr],
\]
where $\rho_t := \|Z_t\|_F / \sqrt{mn}$ is the RMS norm and $\alpha_t := \langle \widehat{P}_t, Z_t \rangle / (\|\widehat{P}_t\|_F \|Z_t\|_F)$ is the alignment.}

\begin{proof}
Squaring the update $Z_{t+1} = (1-\eta\lambda)Z_t - \hat{\eta}_t\widehat{P}_t$ gives
\begin{align}
\|Z_{t+1}\|_F^2 &= (1-\eta\lambda)^2\|Z_t\|_F^2 - 2(1-\eta\lambda)\hat{\eta}_t\langle Z_t, \widehat{P}_t \rangle + \hat{\eta}_t^2\|\widehat{P}_t\|_F^2. \label{eq:Z-squared-expansion}
\end{align}
The NorMuon learning rate $\hat{\eta}_t = 0.2\eta\sqrt{mn}/\|\widehat{P}_t\|_F$ ensures that $\hat{\eta}_t \|\widehat{P}_t\|_F = 0.2\eta\sqrt{mn}$ is constant. Substituting into \eqref{eq:Z-squared-expansion}:
\[
\|Z_{t+1}\|_F^2 = (1-\eta\lambda)^2\|Z_t\|_F^2 - 0.4\eta(1-\eta\lambda)\alpha_t\sqrt{mn}\|Z_t\|_F + 0.04\eta^2 mn.
\]
Dividing by $mn$:
\begin{align}
\rho_{t+1}^2 &= (1-\eta\lambda)^2\rho_t^2 - 0.4\eta(1-\eta\lambda)\alpha_t\rho_t + 0.04\eta^2. \label{eq:rho-dynamics}
\end{align}
In steady state, $\rho_{t+1} = \rho_t =: \rho_t$. Rearranging \eqref{eq:rho-dynamics}:
\[
\rho_t^2 \bigl[1 - (1-\eta\lambda)^2\bigr] = -0.4\eta(1-\eta\lambda)\alpha_t\rho_t + 0.04\eta^2.
\]
For $\eta\lambda \ll 1$, we have $1 - (1-\eta\lambda)^2 \approx 2\eta\lambda$ and $1 - \eta\lambda \approx 1$, giving the quadratic $\lambda \rho_t^2 + 0.2\alpha_t\rho_t - 0.02\eta = 0$. Taking the positive root:
\[
\lambda\rho_t = 0.1\Bigl[-\alpha_t + \sqrt{\alpha_t^2 + 2\eta\lambda}\Bigr].
\]
\end{proof}

\section{Architecture and Hyperparameter Details}\label{app:arch-hypers}

This appendix provides complete details of the model architectures and hyperparameter sweeps used in our experiments. All experiments were conducted on 8$\times$ NVIDIA A100-40GB GPUs \cite{Choquette2021} using PyTorch 2 \cite{Ansel2024}.

\paragraph{Model Architecture}

\begin{table}[htb]
\centering
\caption{Small and large Model architecture details.}
\label{tab:model-arch}
\begin{tabular}{@{}lcc@{}}
\toprule
\textbf{Hyperparameter} & \textbf{Small Model (125M)} & \textbf{Large Model (772M)} \\
\midrule
Vocabulary size & 50,304 & 50,304 \\
Number of layers & 12 & 36 \\
Hidden dimension ($d_{\text{model}}$) & 768 & 1,280 \\
Number of attention heads & 6 & 10 \\
Head dimension & 128 & 128 \\
MLP hidden dimension & 3,072 & 5,120 \\
Context length & 1,024 & 1,024 \\
Total parameters & $\sim$124M & $\sim$772M \\
\quad 1D parameters (embedding/head) & 38,633,472 & 64,389,120 \\
\quad 2D parameters (attention/MLP) & 84,934,656 & 707,788,800 \\
Batch size (global) & 512 & 512 \\
Tokens per batch & 524,288 & 524,288 \\
\bottomrule
\end{tabular}
\end{table}

We use a Llama-style GPT architecture with rotary positional embeddings (RoPE), RMSNorm, and QK-normalization in attention. The MLP uses a squared ReLU activation. Weight tying is applied between the token embedding and output head.

\paragraph{Training Regimes}

\begin{table}[htb]
\centering
\caption{Training budgets in steps and tokens for each model and regime.}
\label{tab:training-budget}
\begin{tabular}{@{}llcccc@{}}
\toprule
\textbf{Model} & & \textbf{$1\times$} & \textbf{$2\times$} & \textbf{$4\times$} & \textbf{$8\times$} \\
\midrule
Small (125M) & Steps & 5,000 & 10,000 & 20,000 & 40,000 \\
             & Tokens & 2.6B & 5.2B & 10.5B & 21.0B \\
\midrule
Large (772M) & Steps & 30,000 & 60,000 & 120,000 & --- \\
             & Tokens & 15.7B & 31.5B & 62.9B & --- \\
\bottomrule
\end{tabular}
\end{table}

We follow Chinchilla-optimal training budgets \cite{hoffmann2022}, scaling the number of training steps proportionally with model size. For the small model, we evaluate four training horizons denoted $1\times$, $2\times$, $4\times$, and $8\times$ Chinchilla-optimal. For the large model, we train at $1\times$, $2\times$, and $4\times$ Chinchilla-optimal. Table~\ref{tab:training-budget} summarizes the training budgets.

\paragraph{Hyperparameter Search}

We conduct extensive hyperparameter sweeps for each optimizer using the 125M model. In particular, for AdamW we tuned the all hyper parameters except learning rate at $2\times$ Chinchilla but the \textit{learning rate was tuned for each horizon}, while for SF-AdamW we tuned at $2\times$ Chinchilla and used the same configuration for runs up to $8\times$ Chinchilla. Using the learning rate adjustment described in Section~\ref{sec:sf-spec}, SF-NorMuon was able to use an identical learning rate and weight decay as SF-AdamW. Table~\ref{tab:hyperparam-space} summarizes the search space.

\begin{table}[htb]
\centering
\caption{Hyperparameter search space for each optimizer. Values in \textbf{bold} indicate the best configuration found. The learning rate in bold is for AdamW corresponds to the $2\times$ training duration.}
\label{tab:hyperparam-space}
\begin{tabular}{@{}llp{7cm}@{}}
\toprule
\textbf{Optimizer} & \textbf{Hyperparameter} & \textbf{Values Searched} \\
\midrule
AdamW & Learning rate & \{0.002, 0.004, 0.006, \textbf{0.008}, 0.01\} \\
      & $\beta_1$ & \{\textbf{0.9}, 0.95, 0.99, 0.995\} \\
      & $\beta_2$ & \{0.9, \textbf{0.95}, 0.99, 0.995\} \\
      & Weight decay $\lambda$ & \{0, 0.05, \textbf{0.1}\} \\
      & Warmup steps & \{1000, 1500, 2000, \textbf{2500}\} \\
      & Scheduler & Cosine decay to 0 \\
\midrule
SF-AdamW & Learning rate & \{0.005, \textbf{0.008}, 0.01, 0.012, 0.015\} \\
          & $\beta_1$ & \{0.9, \textbf{0.95}, 0.99, 0.995\} \\
          & $\beta_2$ & \{0.9, 0.95, \textbf{0.99}, 0.995\} \\
         & Weight decay $\lambda$ & \{0, \textbf{0.05}, 0.1\} \\
         & Warmup steps & \{1500, \textbf{2000}, 2500\} \\
         & Decay location & \{$\mathbf{y}$, $z$\} \\
\midrule
SF-NorMuon & Learning rate & \{0.005, \textbf{0.008}, 0.01, 0.012, 0.015\} \\
            & $\beta_1$ & \{\textbf{0.9}, 0.95, 0.99, 0.995\} \\
            & $\beta_2$ & \{0.9, \textbf{0.95}, 0.99, 0.995\} \\
           & Momentum $\mu$ & \{\textbf{0.8}, 0.9, 0.95, 0.99\} \\
           & Weight decay $\lambda$ & \{0, \textbf{0.05}, 0.1\} \\
           & Warmup steps & \{1500, \textbf{2000}, 2500\} \\
           & Decay location & \{$y$, $\mathbf{z}$\} \\
\bottomrule
\end{tabular}
\end{table}

Table~\ref{tab:best-hypers} reports the best hyperparameter configuration for each optimizer on the 125M model at $8\times$ Chinchilla-optimal training. These configurations were used for all experiments conducted for the large model. 

\begin{table}[htb]
\centering
\caption{Best hyperparameter configurations for each optimizer on the small model (learning rate corresponds to $8\times$ Chinchilla for AdamW).}
\label{tab:best-hypers}
\begin{tabular}{@{}lccc@{}}
\toprule
\textbf{Hyperparameter} & \textbf{AdamW} & \textbf{SF-AdamW} & \textbf{SF-NorMuon} \\
\midrule
Learning rate & 0.004 & 0.008 & 0.008 \\
$(\beta_1, \beta_2)$ for 1D params & (0.9, 0.95) & (0.95, 0.99) & (0.95, 0.99) \\
$(\beta_1, \beta_2)$ for 2D params & (0.9, 0.95) & (0.95, 0.99) & (0.9, 0.95) \\
Momentum $\mu$ & --- & --- & 0.8 \\
Weight decay $\lambda$ & 0.1 & 0.05 & 0.05 \\
Warmup steps & 2500 & 2000 & 2000 \\
Decay location & --- & $y$ & $z$ \\
Scheduler & Cosine & None & None \\
\bottomrule
\end{tabular}
\end{table}

\paragraph{Remarks.} We make the following observations from our hyperparameter sweeps:

\begin{itemize}[leftmargin=*,itemsep=2pt,topsep=2pt] 

\item \textbf{Momentum parameters:} For SF-AdamW, $(\beta_1, \beta_2) = (0.95, 0.99)$ consistently outperforms the SF-AdamW default of $(0.9, 0.95)$. For SF-NorMuon, we use separate beta values for 1D and 2D parameters, with the NorMuon-specific momentum $\mu=0.8$ providing the best results. 

\item \textbf{Decay location:} For SF-AdamW, applying weight decay at the $y$ iterate (gradient evaluation point) works best. For SF-NorMuon, applying decay at the $z$ iterate (base sequence) achieves the lowest validation loss. 
\end{itemize}

\paragraph{Learning Rate Sweep Results}

Figures~\ref{fig:lr-sweep-adamw}--\ref{fig:lr-sweep-sf} show the validation loss curves across different learning rates for each optimizer. For all experiments, other hyperparameters are fixed at their optimal values from Table~\ref{tab:best-hypers}.

\begin{figure}[htb]
\centering
\includegraphics[width=\textwidth]{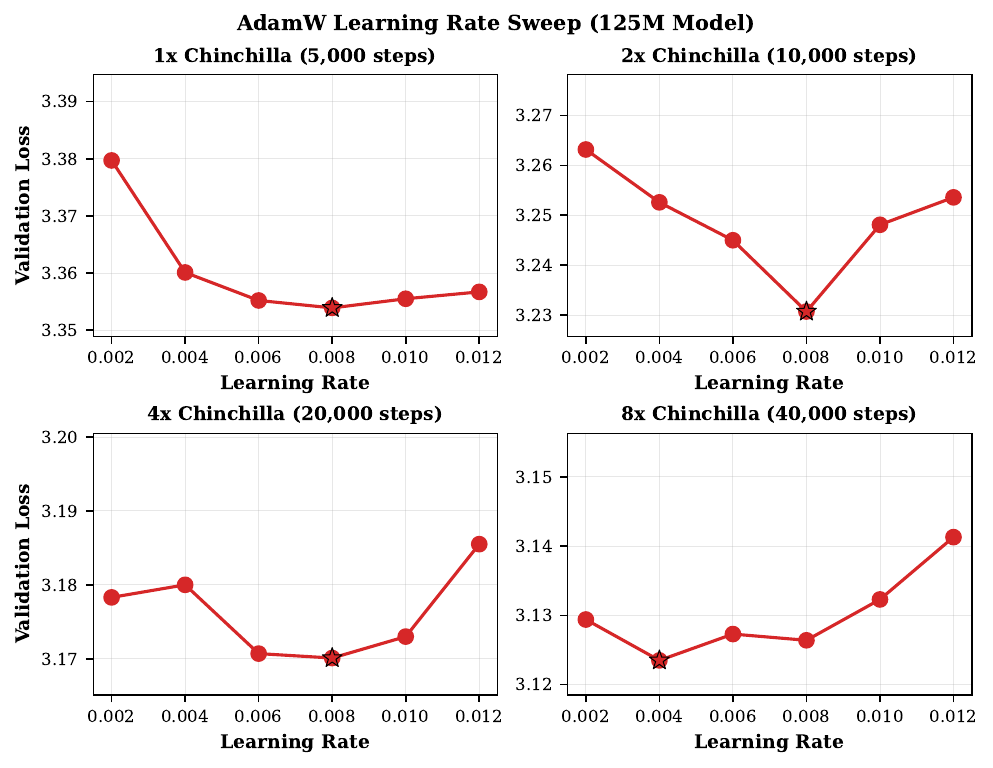}
\caption{Learning rate sweep for AdamW with cosine scheduling on the 125M model across four training horizons: $1\times$ (5K steps), $2\times$ (10K steps), $4\times$ (20K steps), and $8\times$ (40K steps) Chinchilla-optimal. The optimal learning rate is $\eta=0.008$ for $1\times$-- $4\times$ Chinchilla, and $\eta=0.004$ for $8\times$ Chinchilla, indicating that longer training benefits from smaller learning rates.}
\label{fig:lr-sweep-adamw}
\end{figure}

\section{Ablation of SF-NorMuon and Comparison with NorMuon}\label{app:ablation}

We conduct ablation experiments to validate the key design choices in SF-NorMuon, and compare schedule-free methods against NorMuon across multiple training horizons.

\subsection{Ablation of SF-NorMuon Components}

In \cref{fig:ablation}, we conduct ablation experiments to validate the two key modifications that distinguish SF-NorMuon from schedule-free spectral descent: the explicit momentum buffer and the row-wise adaptive normalization. All experiments use the 125M parameter model trained for 8$\times$ Chinchilla tokens ($\approx$21B tokens) with decay at $Z$.

\paragraph{Explicit momentum.} As discussed in \S\ref{sec:sf-spec}, the polar update treats all singular directions uniformly, which can be more aggressive than coordinate-wise methods. We therefore introduce an explicit momentum buffer $M_t = \mu M_{t-1} + (1-\mu)G_t$ to smooth the gradient before computing its polar factor. The left panel of \cref{fig:ablation} compares SF-NorMuon ($\mu=0.8$) against the ablation with no explicit momentum ($\mu=0$). Removing momentum increases the final validation loss from 3.14 to 3.28, a rather large degradation. This confirms that explicit momentum is essential for stable, effective training with spectral updates.

\paragraph{Row-wise normalization.} The polar factor can produce updates with high variance across neurons, as different rows of a weight matrix may have vastly different typical magnitudes. Row-wise adaptive normalization addresses this by maintaining a running average of squared row norms and normalizing accordingly. The right panel of \cref{fig:ablation} compares SF-NorMuon against the variant without row normalization on a tighter y-axis scale to reveal the small but consistent improvement. SF-NorMuon achieves a final loss of 3.136 compared to 3.142 for SF-Muon, and reaches the same loss approximately 12\% faster in terms of tokens processed. This speedup is quite similar to the speedup achieved by NorMuon relative to Muon \cite{li2025normuonmakingmuonefficient}. 

\subsection{Comparison with NorMuon}

We compare schedule-free methods (SF-NorMuon, SF-AdamW) against their scheduled counterparts (NorMuon, AdamW) across multiple training horizons (1$\times$, 2$\times$, 4$\times$, 8$\times$ Chinchilla). All scheduled runs use a cosine learning rate schedule tuned for each horizon. \cref{tab:optimizer_comparison} reports the final validation loss for each method at each horizon, and \cref{fig:optimizer_comparison} visualizes the training curves.

\begin{table}[htb]
\centering
\caption{Final validation loss at each training horizon (125M model). Scheduled methods (NorMuon, AdamW) use cosine decay tuned for each horizon; schedule-free methods (SF-NorMuon, SF-AdamW) use the same hyperparameters throughout and can be evaluated at any checkpoint.}
\label{tab:optimizer_comparison}
\begin{tabular}{lcccc}
\toprule
Horizon & SF-NorMuon & SF-AdamW & NorMuon & AdamW \\
\midrule
1$\times$ (2.6B tokens) & 3.318 & 3.399 & 3.292 & 3.354 \\
2$\times$ (5.2B tokens) & 3.229 & 3.282 & 3.200 & 3.231 \\
4$\times$ (10.5B tokens) & 3.170 & 3.205 & 3.139 & 3.170 \\
8$\times$ (21.0B tokens) & 3.136 & 3.141 & 3.103 & 3.124 \\
\bottomrule
\end{tabular}
\end{table}

\paragraph{NorMuon vs AdamW.} Across all horizons, NorMuon consistently outperforms AdamW. This confirms the benefits of spectral-norm geometry for matrix-valued parameters as discussed in \S\ref{sec:sf-spec}, because the polar update makes uniform progress across all singular directions of the gradient rather than being dominated by large coordinate-wise entries.

\begin{figure}[htb]
    \centering
    \includegraphics[width=0.5\linewidth]{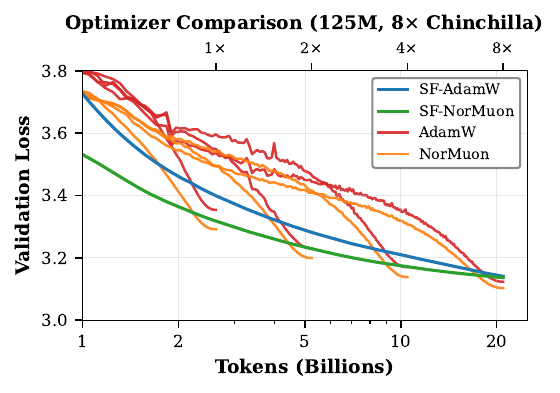}
    \caption{%
    Optimizer comparison on the 125M model. Scheduled methods (NorMuon, AdamW) are shown with separate curves for each horizon (1$\times$, 2$\times$, 4$\times$, 8$\times$ Chinchilla), while schedule-free methods (SF-NorMuon, SF-AdamW) use a single run evaluated at any point. SF-NorMuon closely tracks scheduled NorMuon throughout training, demonstrating that the benefits of spectral-norm optimization transfer effectively to the schedule-free setting.
    }
    \label{fig:optimizer_comparison}
\end{figure}

\paragraph{SF-NorMuon is competitive with scheduled NorMuon.} Despite being an anytime optimizer that does not know the training horizon in advance, SF-NorMuon closely tracks the performance of scheduled NorMuon. At each horizon, the gap between SF-NorMuon and NorMuon is only $\sim 0.02-0.03$ nats. This is notable because scheduled methods benefit from the learning rate cool down at the end of training horizon \cite{you2019doeslearningratedecay}, yet SF-NorMuon remains competitive through implicit learning rate decay via online averaging. In contrast, SF-AdamW under performs both AdamW and also SF-NorMuon substantially.  

\newpage 

\section{Reference PyTorch Implementation}
\label{app:implementation}

Below we provide a self-contained PyTorch implementation of SF-NorMuon, closely following \cref{algo:sf-normuon}. Our implementation is based on the code from \cite{defazio2024road,jordan2024muon_repo}.

\paragraph{Implementation notes.}
\begin{itemize}[leftmargin=*]
    \item The \texttt{zeropower\_via\_newtonschulz5} function computes an approximation to the polar factor $P = \polar(M)$ using 5 iterations of the Newton-Schulz method with coefficients $(a, b, c) = (3.4445, -4.7750, 2.0315)$. The function handles both tall and wide matrices by transposing when necessary. There are other alternatives methods to accomplish the polar transformation which either use randomized methods \cite{ahn2025dion,ahn2025dion2simplemethodshrink} or differ in the choice of coefficients \cite{amsel2026polarexpressoptimalmatrix}. 
    
    \item In addition to the live weights \texttt{p} (the gradient evaluation sequence $Y_t$), the optimizer maintains three state variables per parameter: \texttt{z} (the fast iterate $Z_t$), \texttt{v} (the row-wise second moment estimate $v_t$), and \texttt{mom} (the momentum buffer $M_t$).
    
    \item The \texttt{eval()} method computes $X_t$ on the fly using $Z_t$ and $Y_t$. The \texttt{train()} method restores \texttt{p} to the $Y_t$ iterate for further training. 
    
    \item Weight decay is applied to the fast iterate $Z_t$, consistent with our analysis in \cref{sec:sf-wd}, and the learning rate scaling factor \texttt{eta\_scale} (default 0.2) normalizes the effective step size to be comparable to Adam's RMS-normalized updates.
\end{itemize}

\clearpage
\newgeometry{margin=0.835in}
\begin{lstlisting}[language=Python, basicstyle=\ttfamily\scriptsize]
import math, torch

@torch.compile
def zeropower_via_newtonschulz5(G, steps=5, eps=1e-7):
    assert len(G.shape) == 2
    a, b, c = (3.4445, -4.7750, 2.0315)
    X = G.bfloat16()
    X /= (X.norm() + eps)
    if G.size(0) > G.size(1):
        X = X.T
        for _ in range(steps):
            A = X @ X.T
            B = A @ X
            X = a * X + b * B + c * A @ B
        return X.T
    for _ in range(steps):
        A = X @ X.T
        B = A @ X
        X = a * X + b * B + c * A @ B
    return X

class NorMuonScheduleFree(torch.optim.Optimizer):
    def __init__(self, params, lr=0.005, betas=(0.9, 0.95), momentum=0.8,
                 eps=1e-8, weight_decay=0.1, warmup_steps=2000, eta_scale=0.2):
        defaults = dict(lr=lr, betas=betas, momentum=momentum, eps=eps,
                        weight_decay=weight_decay, warmup_steps=warmup_steps,
                        eta_scale=eta_scale, k=0, train_mode=False, weight_sum=0.0)
        super().__init__(params, defaults)

    @torch.no_grad()
    def eval(self):
        for group in self.param_groups:
            if group["train_mode"]:
                beta = group["betas"][0]
                for p in group["params"]:
                    state = self.state[p]
                    if "z" in state: p.lerp_(end=state["z"], weight=1.0 - 1.0 / beta)
                group["train_mode"] = False

    @torch.no_grad()
    def train(self):
        for group in self.param_groups:
            if not group["train_mode"]:
                beta = group["betas"][0]
                for p in group["params"]:
                    state = self.state[p]
                    if "z" in state: p.lerp_(end=state["z"], weight=1.0 - beta)
                group["train_mode"] = True

    @torch.no_grad()
    def step(self, closure=None):
        loss = closure() if closure else None
        for group in self.param_groups:
            beta, beta2 = group["betas"]
            mu, eps = group["momentum"], group["eps"]
            eta_scale, decay = group["eta_scale"], group["weight_decay"]
            k, warmup_steps = group["k"], group["warmup_steps"]
            sched = (k + 1) / warmup_steps if k < warmup_steps else 1.0
            lr = group["lr"] * sched
            weight = lr * lr
            weight_sum = group["weight_sum"] = group["weight_sum"] + weight
            ckp1 = weight / weight_sum
            for p in group["params"]:
                if p.grad is None: continue
                grad, state = p.grad, self.state[p]
                if "z" not in state:
                    state["z"] = p.clone()
                    state["v"] = torch.zeros(p.shape[0], device=p.device, dtype=torch.float32)
                    state["mom"] = torch.zeros_like(p)
                z, v, mom = state["z"], state["v"], state["mom"]
                mom.mul_(mu).add_(grad, alpha=1.0 - mu)
                P = zeropower_via_newtonschulz5(mom).to(p.dtype)
                row_ms = (P * P).mean(dim=1).float()
                v.mul_(beta2).add_(row_ms, alpha=1.0 - beta2)
                Phat = P / (v.sqrt() + eps).to(P.dtype).unsqueeze(1)
                m, n = p.shape
                eta_hat = eta_scale * lr * math.sqrt(m * n) / max(1e-12, Phat.float().norm())
                x_t = (p - (1.0 - beta) * z) / beta  # recover X_t before z update
                if decay != 0.0: z.sub_(z, alpha=lr * decay)
                z.sub_(Phat, alpha=eta_hat)  # z is now Z_{t+1}
                x_tp1 = (1.0 - ckp1) * x_t + ckp1 * z
                p.copy_((1.0 - beta) * z + beta * x_tp1)
            group["k"] = k + 1
        return loss
\end{lstlisting}
\restoregeometry
\clearpage

\end{document}